\PassOptionsToPackage{dvipsnames}{xcolor}
\documentclass[11pt,letterpaper]{article}
\usepackage[margin=1in]{geometry}

\usepackage[utf8]{inputenc}
\usepackage[T1]{fontenc} %

\usepackage{amsmath, amssymb}
\usepackage{amsthm}
\usepackage{mathtools}
\usepackage{nicefrac}
\usepackage{fullpage}
\usepackage{algorithm}
\usepackage{algpseudocode}[1]

\usepackage[margin=1in]{geometry}
\usepackage[utf8]{inputenc}
\usepackage[T1]{fontenc} %
\usepackage{times}
\usepackage{cancel}
\usepackage{mathtools}
\usepackage{mathpazo}

\usepackage{manfnt} %
\usepackage{figchild}
\usepackage{fontawesome5}

\usepackage{accents}

\definecolor{niceRed}{RGB}{190,38,38}
\definecolor{niceYellow}{HTML}{f5b400}
\definecolor{blueGrotto}{HTML}{059DC0}
\definecolor{royalBlue}{HTML}{057DCD}
\definecolor{navyBlue}{HTML}{0B579C}
\definecolor{yaleBlue}{HTML}{00356b}
\definecolor{limeGreen}{HTML}{81B622}
\definecolor{nicePurple}{HTML}{9c27b0}
\definecolor{lightRoyalBlue}{HTML}{def2ff}  
\definecolor{gold}{HTML}{ffa300}

\usepackage{hyperref}
\hypersetup{
  colorlinks = true, 
  urlcolor = {blueGrotto},
  linkcolor = {royalBlue},
  citecolor = {navyBlue}
}
\usepackage[capitalise,noabbrev,nameinlink,sort]{cleveref}
\usepackage[
  backref=true,
  backend=biber,
  natbib=true,
  style=alphabetic,
  sorting=alphabeticlabel,
  sortcites=true,
  minbibnames=3,
  maxbibnames=999,
  mincitenames=4,
  maxcitenames=4,
  minalphanames=4,
  maxalphanames=4,
  doi=false,
  isbn=false,
]{biblatex}

\DeclareSortingTemplate{alphabeticlabel}{
  \sort[final]{%
    \field{labelalpha} %
  }
  \sort{%
    \field{year}   %
  }
  \sort{%
    \field{title}  %
  }
}

\AtEveryBibitem{%
  \clearname{editor}%
}
\AtEveryBibitem{%
  \clearlist{location}%
} 

\AtBeginRefsection{\GenRefcontextData{sorting=ynt}}
\AtEveryCite{\localrefcontext[sorting=ynt]}
\usepackage[most]{tcolorbox}

\tcbset{
    mybox/.style={
        colframe=black,        %
        colback=gray!0,       %
        boxrule=0.5mm,         %
        width=\linewidth,      %
        arc=3mm,               %
        left=5mm,              %
        right=5mm,             %
        top=5mm,               %
        bottom=5mm,            %
        enhanced,              %
        halign=center        %
    }
}

\usepackage{booktabs} 
\usepackage{multicol} 
\usepackage{multirow} 
\usepackage{makecell} 
\usepackage{longtable}

\usepackage{graphicx}
\usepackage{float} 
\usepackage{tikz}
\usepackage{pgfplots}
\pgfplotsset{compat=1.17}
\usepgfplotslibrary{fillbetween}
\usetikzlibrary{arrows.meta}
\usepackage{setspace}

\tikzset{
  myNodeFlex/.style={
    draw,
    rectangle,
    rounded corners,
    text centered,
    minimum height=1.5em,
  }
}

\tikzset{
  myNode/.style={
    draw,
    rectangle,
    rounded corners,
    text centered,
    minimum height=1.5em,
    minimum width=3cm,
    text width=5cm,    
  }
}

\tikzset{
  myNodeNarrow/.style={
    draw,
    rectangle,
    rounded corners,
    text centered,
    minimum height=1.5em,
    minimum width=1cm,
  }
}

\tikzset{
  myNodeWide/.style={
    draw,
    rectangle,
    rounded corners,
    text centered,
    minimum height=1.5em,
    minimum width=6cm,
  }
}

\usepackage{xinttools} %
\usetikzlibrary{calc}

\def\biglen{20cm} %
\tikzset{
  half plane/.style={ to path={
       ($(\tikztostart)!.5!(\tikztotarget)!#1!(\tikztotarget)!\biglen!90:(\tikztotarget)$)
    -- ($(\tikztostart)!.5!(\tikztotarget)!#1!(\tikztotarget)!\biglen!-90:(\tikztotarget)$)
    -- ([turn]0,2*\biglen) -- ([turn]0,2*\biglen) -- cycle}},
  half plane/.default={1pt}
}

\usepackage{physics} %
\usepackage{amsmath}
\usepackage{amssymb}
\usepackage{amsthm}
\usepackage{bbm}
\usepackage{blkarray}
\usepackage{mathtools}
\usepackage{nicefrac}
\usepackage{dsfont}
\usepackage{xspace}
\usepackage{pifont} %

\usepackage{thm-restate} %

\usepackage{enumerate} 
\usepackage[inline,shortlabels]{enumitem}
\usepackage{typed-checklist}
\usepackage{tcolorbox}
\usepackage[framemethod=TikZ]{mdframed}
\mdfsetup{%
backgroundcolor=white, %
roundcorner=4pt,
linewidth=1pt}
\usepackage{changepage} %

\usepackage[linesnumbered,ruled,vlined,noend,algo2e]{algorithm2e}

\usepackage{mathrsfs} %
\usepackage{soul} %

\theoremstyle{plain} 
\newtheorem{theorem}{Theorem}[section]
\newtheorem{corollary}[theorem]{Corollary}
\newtheorem{proposition}[theorem]{Proposition}
\newtheorem{lemma}[theorem]{Lemma}

\newtheorem{definition}{Definition}
\newtheorem*{definition*}{Definition}

\theoremstyle{definition} 

\newtheorem{remark}[theorem]{Remark}

\theoremstyle{remark}

\AfterEndEnvironment{definition}{\noindent\ignorespaces}
\AfterEndEnvironment{example}{\noindent\ignorespaces}
\AfterEndEnvironment{assumption}{\noindent\ignorespaces}
\AfterEndEnvironment{lemma}{\noindent\ignorespaces}
\AfterEndEnvironment{theorem}{\noindent\ignorespaces}
\AfterEndEnvironment{proposition}{\noindent\ignorespaces}
\AfterEndEnvironment{fact}{\noindent\ignorespaces}
\AfterEndEnvironment{question}{\noindent\ignorespaces}
\AfterEndEnvironment{corollary}{\noindent\ignorespaces}
\AfterEndEnvironment{model}{\noindent\ignorespaces}
\AfterEndEnvironment{remark}{\noindent\ignorespaces}
\AfterEndEnvironment{proof}{\noindent\ignorespaces}
\AfterEndEnvironment{fact}{\noindent\ignorespaces}
\AfterEndEnvironment{minftheorem}{\noindent\ignorespaces}
\AfterEndEnvironment{inftheorem}{\noindent\ignorespaces}
\AfterEndEnvironment{maintheorem}{\noindent\ignorespaces}
\AfterEndEnvironment{restatable}{\noindent\ignorespaces}
\AfterEndEnvironment{infassumption}{\noindent\ignorespaces}
\AfterEndEnvironment{conjecture}{\noindent\ignorespaces}
\AfterEndEnvironment{claim}{\noindent\ignorespaces}

\crefname{section}{Section}{Sections}
\crefname{theorem}{Theorem}{Theorems}
\crefname{lemma}{Lemma}{Lemmas}
\crefname{definition}{Definition}{Definitions}
\crefname{conjecture}{Conjecture}{Conjectures}
\crefname{corollary}{Corollary}{Corollaries}
\crefname{construction}{Construction}{Constructions}
\crefname{conjecture}{Conjecture}{Conjectures}
\crefname{claim}{Claim}{Claims}
\crefname{observation}{Observation}{Observations}
\crefname{proposition}{Proposition}{Propositions}
\crefname{fact}{Fact}{Facts}
\crefname{question}{Question}{Questions}
\crefname{problem}{Problem}{Problems}
\crefname{remark}{Remark}{Remarks}
\crefname{model}{Model}{Models}
\crefname{example}{Example}{Examples}
\crefname{equation}{Equation}{Equations}
\crefname{appendix}{Appendix}{Appendices}
\crefname{algorithm}{Algorithm}{Algorithms}
\crefname{algocf}{Algorithm}{Algorithms}
\crefname{model}{Model}{Models}
\crefname{figure}{Figure}{Figures}
\crefname{infdefinition}{Informal Definition}{Informal Definitions}
\crefname{inftheorem}{Informal Theorem}{Informal Theorems}
\crefname{infassumption}{Informal Assumption}{Informal Assumptions}
\crefname{minftheorem}{Main Informal Theorem}{Main Informal Theorems}
\crefname{maintheorem}{Main Theorem}{Main Theorems}
\crefname{assumption}{Assumption}{Assumptions}
\crefname{case}{Case}{Cases}
\crefname{program}{Program}{Programs}
\crefname{inequality}{Inequality}{Inequalities}

\newlist{asmpenum}{enumerate}{1} %
\setlist[asmpenum]{label={\arabic*.},ref=\theassumption.{\arabic*}}
\crefname{asmpenumi}{Assumption}{Assumptions}

\usepackage{etoolbox}

\makeatletter
\renewcommand{\eqref}[1]{\textup{\eqrefform@{\ref{#1}}}}
\let\eqrefform@\tagform@

\newcommand{\changetag}[1]{%
  \renewcommand\tagform@[1]{\maketag@@@{(\ignorespaces#1\unskip\@@italiccorr)}}%
}
\makeatother

\makeatletter
\newcommand{\tagnum}[2]{%
    \refstepcounter{equation}%
    \tag{#1) \ (\theequation}%
    \protected@write \@auxout {}{%
        \string \newlabel {#2}{{\theequation}{\thepage}{}{equation.\theequation}{}}%
    }%
}
\makeatother

\newcommand{\qquadtext}[1]{\qquad\text{#1}\qquad}

\newcommand{\qquadand}{\qquadtext{and}}

\def\abs#1{\left| #1 \right|}

\newcommand{\sinbrace}[1]{\ensuremath{\{#1\}}}

\newcommand{\inbrace}[1]{\ensuremath{\left\{#1\right\}}}

\newcommand{\inparen}[1]{\ensuremath{\left(#1\right)}}
\newcommand{\insquare}[1]{\ensuremath{\left[#1\right]}}

\newcommand{\floor}[1]{\ensuremath{\left\lfloor#1\right\rfloor}}
\newcommand{\sfloor}[1]{\ensuremath{\lfloor#1\rfloor}}

\newcommand{\N}{\mathbb{N}}
\newcommand{\R}{\mathbb{R}}

\newcommand{\evF}{\ensuremath{\mathscr{F}}}

\newcommand{\E}{\operatornamewithlimits{\mathbb{E}}}

\newcommand{\sfrac}[2]{{#1/#2}} 
\newcommand{\nfrac}[2]{\nicefrac{#1}{#2}}

\newcommand{\supp}{\operatorname{supp}}

\newcommand{\iid}{i.i.d.}

\newcommand{\eps}{\varepsilon}
\renewcommand{\epsilon}{\varepsilon}
\makeatletter
\newcommand*{\tran}{{\mathpalette\@tran{}}}
\newcommand*{\@tran}[2]{\raisebox{\depth}{$\m@th#1\intercal$}}
\makeatother

\mathchardef\NABLA"272
\newcommand*{\Nabla}{\boldsymbol\NABLA}
\let\nabla\Nabla

\renewcommand{\bar}{\overline}

\renewcommand{\tilde}{\widetilde}

\newcommand{\customcal}[1]{\euscr{#1}}
\newcommand{\cA}{\customcal{A}}

\newcommand{\cD}{\customcal{D}}
\newcommand{\cE}{\customcal{E}}

\newcommand{\cL}{\customcal{L}}

\newcommand{\cX}{\customcal{X}}
\newcommand{\cY}{\customcal{Y}}
\newcommand{\cZ}{\customcal{Z}}

\DeclareMathAlphabet{\mathdutchcal}{U}{dutchcal}{m}{n}
\SetMathAlphabet{\mathdutchcal}{bold}{U}{dutchcal}{b}{n}
\DeclareMathAlphabet{\mathdutchbcal}{U}{dutchcal}{b}{n}

\DeclareMathAlphabet\urwscr{U}{urwchancal}{b}{n}%
\DeclareMathAlphabet\rsfscr{U}{rsfso}{m}{n}
\DeclareMathAlphabet\euscr{U}{eus}{m}{n}
\DeclareFontEncoding{LS2}{}{}
\DeclareFontSubstitution{LS2}{stix}{m}{n}
\DeclareMathAlphabet\stixcal{LS2}{stixcal}{m} {n}

\renewcommand{\paragraph}[1]{\medskip \noindent\textbf{#1}~}

\newcommand{\eg}{\emph{e.g.}}
\newcommand{\ie}{\emph{i.e.}}

\usepackage{upgreek}
\renewcommand{\gamma}{\upgamma}
\renewcommand{\pi}{\uppi}

\newcommand{\eat}[1]{}
\newcommand{\negLL}{\ensuremath{\mathscr{L}}}

\usepackage{filecontents}
\usepackage{caption}
\usepackage{subcaption}
\usepackage{graphicx}

\renewcommand{\cL}{\negLL}

\usepackage{array}
\newcolumntype{L}[1]{>{\raggedright\let\newline\\\arraybackslash\hspace{0pt}}m{#1}}
\newcolumntype{C}[1]{>{\centering\let\newline\\\arraybackslash\hspace{0pt}}m{#1}}
\newcolumntype{R}[1]{>{\raggedleft\let\newline\\\arraybackslash\hspace{0pt}}m{#1}}

\usepackage{fancyhdr}

\fancypagestyle{custom}{
  \fancyhf{}                %
  \fancyfoot[C]{\textcolor{black}{\fcFishD{0.025}{black}{0.5}}} %

}

\newcommand{\generator}{\mathds{G}}

\newcommand{\cl}{\mathrm{Cl}}

\DeclarePairedDelimiter{\card}{\lvert}{\rvert}

\newcommand{\Lap}{\ensuremath{{\textsf{Lap}}}\xspace}

\newcommand{\DeltaDist}[1]{\Delta\inparen{#1}}

\usepackage[capitalise,noabbrev,nameinlink,sort]{cleveref}

\makeatletter
\@ifundefined{Require}{}{}
\@ifundefined{Ensure}{}{}
\@ifundefined{State}{}{}
\@ifundefined{Return}{}{}
\@ifundefined{For}{\newcommand{\For}{\FOR}}{}
\@ifundefined{EndFor}{}{}
\@ifundefined{If}{\newcommand{\If}{\IF}}{}
\@ifundefined{Else}{}{}
\@ifundefined{EndIf}{}{}
\@ifundefined{While}{}{}
\@ifundefined{EndWhile}{}{}
\@ifundefined{Comment}{}{}
\makeatother

\title{Differentially Private Language Generation\\ and Identification in the Limit}
\author{
        \begin{tabular}{C{7.5cm}C{7.5cm}}
        {\bf Anay Mehrotra}
            & {\bf Grigoris Velegkas}\\
        {Stanford University} 
            & {Google Research}\\
        \mbox{{\href{mailto:anaymehrotra1@gmail.com}{anaymehrotra1@gmail.com}}} 
            &   \mbox{{\href{mailto:gvelegkas@google.com}{gvelegkas@google.com}}}\\[4mm]
        {\bf Xifan Yu}
            & {\bf Felix Zhou}\\
                {Yale University} 
                    & {Yale University}\\
        \mbox{{\href{mailto:xifan.yu@yale.edu}{xifan.yu@yale.edu}}}
            & 
                \mbox{{\href{mailto:felix.zhou@yale.edu}{felix.zhou@yale.edu}}}%
        \end{tabular}
}
\date{}

\begin{document}

\maketitle

\begin{abstract}
    We initiate the study of language generation in the limit, a model recently introduced by \citet{kleinberg2024language}, under the constraint of differential privacy. 
    We consider the \emph{continual release} model, where a generator must eventually output a stream of valid strings while protecting the privacy of the entire input sequence.
    Our first main result is that for countable collections of languages, privacy comes at no qualitative cost: we provide an $\eps$-differentially-private algorithm that generates in the limit from \emph{any} countable collection.
    This stands in contrast to many learning settings where privacy renders learnability impossible.
    However, privacy does impose a quantitative cost: there are finite collections of size $k$ for which uniform private generation requires $\Omega(\nfrac{k}{\eps})$ samples, whereas just one sample suffices non-privately.
    
    We then turn to the harder problem of language \emph{identification} in the limit.
    Here, we show that privacy creates fundamental barriers.
    We prove that no $\eps$-DP algorithm can identify a collection containing two languages with an infinite intersection and a finite set difference, a condition far stronger than the classical non-private characterization of identification.
    Next, we turn to the \emph{stochastic} setting  where the sample strings are sampled \iid{} from a distribution (instead of being generated by an adversary). Here, we show that private identification is possible if and only if the collection is identifiable in the adversarial model.
    Together, our results establish new dimensions along which generation and identification differ and, for identification, a separation between adversarial and stochastic settings induced by privacy constraints.
\end{abstract}

\thispagestyle{empty}

\newpage
\thispagestyle{empty}
\tableofcontents
\newpage

\pagenumbering{arabic}

\section{Introduction} 
    Machine learning systems are increasingly trained on sensitive data.
    Once deployed, a model can be queried, shared, and repurposed in ways that may expose information about individual training records.
    This necessitates systems that are trained with privacy guarantees which remain meaningful both in the presence of public information held by a malicious adversary and downstream post-processing.
    
    Differential privacy (DP)~\citep{dwork2006calibrating} has become the standard formalization of this requirement.
    DP is a stability guarantee for randomized algorithms: informally, it requires that changing a single user record in the training data does not significantly change the distribution over outputs.
    DP has been studied extensively in both practice and theory, and a recurring theme is a privacy--utility trade-off.
    For example, in private PAC learning, pure DP has been investigated in a long line of work (see, \eg{}, \cite{block2024privatePublic,ghazi2021private,bun2020private,alon2019private,kasiviswanathan2011private,fioravanti2024private,hanneke2025privatelist}), revealing several regimes where privacy requires additional samples or even renders learning impossible compared to the non-private setting. 
    For instance, the task of PAC learning simple classes such as one-dimensional thresholds with approximate DP guarantees is already infeasible \citep{alon2019private}.
    
    The recent success of large language models (LLMs) at language generation has brought these questions to the foreground.
    Their training relies on vast text corpora that may contain sensitive data, and interactive querying has been shown to elicit memorized fragments \citep{carlini2021extracting}.
    This has led to growing interest in training and adapting language models with formal privacy guarantees, including DP pretraining and fine-tuning efforts (see, \eg{}, \cite{sinha2025vaultgemma,zhou2025private,yu2024finetuning,li2022strong,mcmahan2018recurrent}).
    These developments motivate a mathematical study of language generation under differential privacy.
    
    We study this question within the recent model of \emph{language generation in the limit} introduced by \citet{kleinberg2024language}.
    This model is motivated by classical adversarial frameworks for learning and identification~\citep{gold1967language,littlestone1988learning}, but it replaces the goal of exact identification with the goal of generation -- producing valid unseen strings from the underlying language.
    The process begins with an adversary selecting a target language $K$ from a known collection $\cL=\{L_1,L_2,\dots\}$ and fixing an enumeration of $K$.\footnote{Formally, an enumeration of $K$ is an infinite sequence $x_1,x_2,\ldots$ (potentially with duplicates) such that $x_i\in K$ for all $i$ and every $x\in K$ appears at some index.}
    At each step $n\geq 1$, the adversary reveals the $n$-th element $x_n$ of the enumeration.
    Having observed the set of examples $S_n=\{x_1,\ldots,x_n\}$, the generator $\generator$ must output a new {string $w_n\notin S_n$ intended to be a valid, unseen element of $K$.}

    \enlargethispage{\baselineskip}
    
    A generator $\generator$ is said to be successful if it learns to \emph{generate from $\cL$ in the limit}:
    for any $K\in \cL$ and any enumeration of $K$, there exists a finite round $n^\star$ such that for all $n\geq n^\star$, the output is always correct and novel, $w_n \in K\setminus S_n.$
    This framework is rooted in Gold's notion of \emph{identification in the limit}~\citep{gold1967language}, which requires the learner to identify the target language exactly.
    While identification is impossible for most nontrivial language collections, \cite{kleinberg2024language} showed that the weaker objective of generation is feasible in striking generality, including for any countable collection of languages.
    This separation has catalyzed a wave of recent work refining the model and its guarantees (\eg{}, \cite{li2024generation,kalavasis2025limits,charikar2024facets,raman2025noisy}); see \cref{sec:relatedworks}.
    Given this context, we investigate the possibility of language generation under differential privacy.

    To study privacy in this setting, it is not enough to protect a \emph{single} output of the generator.
    Language generation is an ongoing interaction: after observing $x_{1:n}$ the generator outputs $w_n$, and the privacy guarantee should apply to the entire transcript of outputs.
    Accordingly, we adopt the \emph{continual release} model of DP~\citep{dwork2010continual,chan2011continual}, which (informally) requires that for any two input streams that differ at exactly one timestep, the joint distribution of the \emph{entire} output stream changes by at most a multiplicative factor of $e^\eps$ (for desired privacy value $\eps>0$).
    This temporal requirement is strictly stronger than one-shot privacy, and even for simple tasks, it is known to induce error that grows with the length of the stream~\citep{jain2023price,cohen2024lower,epasto2025sublinear}.
    In our setting, this challenge is compounded by the fact that the number of rounds until convergence is not known in advance and the stream length is infinite.
    This brings us to the main question studied in this work:
    \vspace{2mm}
    \begin{mdframed}
        \textbf{Q:}~ \emph{Which collections $\cL$ are generatable in the limit under $\eps$-DP in the continual release model?}
    \end{mdframed}
    \noindent As any non-trivial DP algorithm is necessarily randomized, \mbox{we allow failures on probability 0 events.}

    \subsection{Our Contributions}
        Our first result shows that $\eps$-DP language generation is possible for \emph{all} countable collections.
        \begin{restatable}[Private Generation]{theorem}{thmPrivateOnlineGeneration}\label{thm:privateOnlineGeneration}\label{infthm:main}
            For any $\eps > 0$, there is an algorithm $\generator$ (\Cref{alg:generation:approxIntersection}) that, for any countable collection $\cL$, 
            $\generator$ is $\eps$-DP in the continual release model and generates in the limit from $\cL$.
        \end{restatable}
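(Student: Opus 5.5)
The algorithm will be built around \emph{lazy, sparse-vector-style updates}. The output stream only has to be correct from some finite time on, so the generator may switch its "current hypothesis" only rarely. Each switch will be made through a DP mechanism whose privacy cost forms a convergent series, e.g.\ $\eps_j = \eps\cdot 2^{-j}$ for the $j$-th epoch. Basic composition then gives total privacy loss $\sum_j \eps_j \le \eps$ over the infinite stream.

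Concretely, the plan has the following components.

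\emph{Epochs.} Time is partitioned into epochs $j=1,2,\dots$. At the start of epoch $j$ the generator privately selects a finite "approximate intersection" object. This is a private analogue of the non-private Kleinberg--Mullainathan step, which outputs a string from the intersection of the consistent languages among $L_1,\dots,L_{m}$ that are "critical" for the current data.

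\emph{Robust consistency scores.} Consistency is replaced by a robust score. For each index $i\le m_j$, we count how many observed strings fall outside $L_i$; this count has sensitivity $1$. We then use a noisy threshold test (Laplace noise of scale $O(1/\eps_j)$) to declare $L_i$ "robustly consistent" if the count is small relative to $n$.

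\emph{Generation rule.} Within an epoch, the output at each step is a deterministic function of the privately chosen set of indices together with public information such as the time $n$ and the enumerations of the $L_i$. For example, the generator outputs the $n$-th element of the intersection $\bigcap_{i\in I_j} L_i$ lying beyond a large threshold. Outputs inside an epoch are thus post-processing of the epoch's private choice and incur no extra privacy cost. The novelty requirement $w_n\notin S_n$ is where the data would enter; I would avoid that by outputting strings far beyond anything seen. Because $K$ is infinite, taking the intersection member with large index (e.g.\ the $n$-th element of the intersection) is novel with probability $1$ once $n$ is large. Alternatively, a string equal to an input can simply be counted as a non-novel error, and I need to argue that this happens only finitely often.

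\emph{Epoch triggers.} An epoch ends when a private sparse-vector test detects that the chosen set has become inconsistent with the data, i.e.\ the noisy count of data points outside the hypothesis exceeds a threshold. Each epoch consumes budget $\eps_j$, so the total is at most $\eps$, and privacy in the continual release model follows by composition over epochs. A single changed input affects each noisy counter by at most $1$.

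\emph{Correctness.} Let $K=L_{z}$. I need to show that almost surely there are only finitely many epochs, and that in the final epoch the chosen index set $I$ satisfies $z\in I$ and $\bigcap_{i\in I} L_i\subseteq K$. Equivalently, every $L_i$ with $i\in I$ that contains strings outside $K$ must either be a superset of $K$ or be excluded.

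The main obstacle is that the noise scale grows like $2^j/\eps$ as epochs proceed. A language $L_i\not\supseteq K$ then only has a growing but finite number of witnesses against it by time $n$, and noisy tests might fail to reject it. I would handle this by letting the prefix size $m_j$ grow slowly and by comparing counts at scale $n$: a language missing some $x\in K$ accumulates counts proportional to how often strings outside it appear. In the adversarial setting, however, the adversary can make these counts merely $1$.

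So first I would make the decision rule threshold-free in the counts, using "critical language" ordering: language $i$ is kept only if all earlier consistent languages are supersets on the observed data. I would inject privacy by delaying. A change of the chosen hypothesis is only committed once the evidence has persisted for a noisy random amount of time $T_j\sim$ geometric with parameter tied to $\eps_j$. Since one changed input can only alter the hypothesis's evidence status at finitely many times, this still gives $e^{\eps_j}$ likelihood ratios.

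If this delay argument fails, the fallback is to show that under the hybrid scheme each epoch's error probability is summable, e.g.\ $\Pr[\text{epoch } j \text{ mis-selects}]\le 2^{-j}$ once $m_j\ge z$. Borel--Cantelli then gives convergence almost surely, which matches the theorem's allowance for failure on probability-zero events.
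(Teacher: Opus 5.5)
There is a genuine gap. Your proposal is a plan with several alternatives rather than a proof, and the step you flag as the main obstacle is left unresolved. That step is exactly the crux.

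\textbf{Why your approach stalls.} Under DP you cannot reject a language $L_i\not\supseteq K$ that misses only $O(1)$ strings of the stream; the paper's identification barrier, \cref{infthm:identification-barriers}, is precisely this phenomenon. Yet your correctness target asks for such rejections. That target is also misstated: once $z\in I$, the inclusion $\bigcap_{i\in I}L_i\subseteq K$ is automatic. What actually has to be shown is $\bigl|\bigcap_{i\in I}L_i\bigr|=\infty$, which your plan never addresses.

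Your proposed fix, exact consistency plus a geometric commit delay, is not private. Take neighbouring streams with $x_\tau\in L_i$, $x'_\tau\notin L_i$, and every other element lying in $L_i$. On $x'$ the evidence against $L_i$ persists at every time after $\tau$, so the delayed switch happens with probability $1$. On $x$ it never happens, so the likelihood ratio is unbounded. One changed input does not alter the evidence status at finitely many times; it alters it at all later times. The fallback bound $\Pr[\text{epoch } j \text{ mis-selects}]\le 2^{-j}$ is asserted with no mechanism behind it.

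\textbf{The missing idea.} Such languages need not be rejected at all. Including $L_i$ with $|L_i\cap K|=\infty$ is harmless, as long as the intersection through $K$ stays infinite. The paper's mechanism works as follows.
\begin{itemize}
\item Each $L_i$ is scored by the \emph{fraction} $r_{i,t}/t$ of stream elements outside it, against thresholds of order $1/i^2$ whose sum is far below $1/2$.
\item Each noisy violation increments $\tilde N_i$, giving priority $\tilde P_i=i+\tilde N_i$.
\item The output is drawn from the maximal incremental infinite intersection taken in priority order.
\end{itemize}
Then $K$'s priority eventually freezes. Any language that violates its threshold at infinitely many release times is eventually ranked behind $K$. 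So every language ranked ahead of $K$ misses at most a $1/(100i^2)$ fraction of the stream. A union bound then puts at least $t/2$ stream elements in the intersection of $K$ with all of them. Since only finitely many subcollections are involved, this forces that intersection to be infinite. Languages with finite intersection with $K$ have error fraction tending to $1$ and are pushed back.

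Because the thresholds scale linearly with $t$, Laplace noise of sublinear scale suffices. The paper releases only at sparse times $t_k=k^6$, with per-release budget $\propto \eps/k^2$. No sparse-vector triggers and no finiteness of epochs are needed.

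\textbf{A smaller issue.} A deterministic, data-independent output such as the $n$-th element of the intersection is not novel in general, and making it depend on what was seen breaks privacy. For $\cL=\{K\}$, an enumeration that places that element at position $n$ for every even $n$ defeats it. You need to sample uniformly from a long prefix of the intersection so that collision probabilities are summable, as in \Cref{lem:elementGenerationFromSetGeneration}.
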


        \noindent Thus, requiring differential privacy even in the stronger continual release model does not make the problem of generation harder, and it remains possible for all countable collections.
        This stands in contrast to many other learning tasks, where imposing differential privacy often introduces a fundamental privacy--utility trade-off. At this level of generality (only requiring generation in the limit), privacy appears to come for ``free'' for language generation. We revisit this observation when we consider sample complexity below.
        \noindent While the above algorithm is able to generate in the limit, the time step $n^\star$ after which it begins generating correctly depends, in general, on the choice of the target language $K$.
        For finite collections, we can avoid this: the next result provides a \emph{uniform} bound on the number of samples required for generation in the limit, independent of the choice of the target language and its enumeration. 

        In the non-private setting, \cite{kleinberg2024language} showed that if $\cL$ has finite size, then $n^\star$ (the time at which the generator starts generating correctly) can be upper bounded by a quantity $n(\cL)$ that \textit{only} depends on the collection $\cL$ and not on the target language $K$ or the adversary's enumeration. Furthermore, \citet{li2024generation} characterized the time $n^\star$ exactly using the notion of closure dimension, defined later on in \cref{def:closureDimension}, which is analogous to how the Littlestone dimension characterizes the mistake bound in online learning. For a language collection $\cL$ of closure dimension $d$, \citep{li2024generation} showed that seeing $n^\star = d+1$ distinct input elements is both necessary and sufficient for uniform generation from $\cL$. Our \Cref{infthm:ubFinite} provides an analogous guarantee in the private setting, which says that if we desire a probability $1-\beta$ of ``success'' by time $n^\star$, then the analogous quantity for us is $n^\star = d + \tilde{O}((\nfrac{k}{\eps})\cdot\log(\nfrac1\beta))$.

        \begin{theorem}[Sample-Complexity Upper Bound; Informal; see \cref{thm:ubFinite}]
            \label{infthm:ubFinite}
                There is an $\eps$-DP continual release algorithm $\generator$ that generates from any finite collection $\cL$ of size $k$ and closure dimension $d$.
                For any $\beta > 0$, the step $n^\star$ after which $\generator$ generates satisfies $n^\star \leq d + \tilde{O}\left(\left(\nfrac{k}{\eps}\right)\log \left(\nfrac{1}{\beta}\right)\right)$ \mbox{with probability $1 - \beta$.}
        \end{theorem}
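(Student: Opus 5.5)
The plan is to reduce private generation from a finite collection $\cL=\{L_1,\dots,L_k\}$ to privately tracking, for each language, whether it is still consistent with the stream so far. The generator then outputs from the closure (intersection) of the languages deemed consistent. Following \citet{li2024generation}, once the sample contains $d+1$ distinct elements, the intersection of all languages in $\cL$ containing that sample is infinite. Hence any unseen string from it is a valid output whenever the target $K$ is among the languages intersected. The private version should output an unseen element of $\bigcap\{L_i : \text{$L_i$ judged consistent}\}$, where consistency is decided by noisy counting rather than exact checking. The target $K$ is always consistent. The real danger is including an inconsistent $L_i$, since the intersection could then leave $K$; the danger in the other direction, excluding $K$, is also possible because of noise.

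Concretely, for each $i\in[k]$ I maintain the count $c_i(n)=\#\{t\le n: x_t\notin L_i\}$ of violations. Changing one stream element changes each $c_i$ by at most $1$, so the vector $(c_i)_i$ has $\ell_1$-sensitivity at most $k$ per changed timestep. Rather than continual counting with $\log n$ error, I would use a sparse-vector / above-threshold mechanism per language: declare $L_i$ ``eliminated'' once a noisy version of $c_i$ exceeds a noisy threshold $T$. Each such instance stops after one positive answer, and for a threshold-crossing process on a monotone count of sensitivity $1$ this is $\eps'$-DP over the infinite stream with Laplace noise of scale $O(1/\eps')$. Running $k$ of these with $\eps'=\eps/k$ and composing gives $\eps$-DP for the set of eliminated indices at every time. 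The output $w_n$ is then a post-processing of the eliminated set together with the public rule for choosing an element. That rule must not depend on $S_n$ beyond the private transcript, which is a subtle point. To handle novelty, I instead output, via a fixed public enumeration, the $n$-th (or a sufficiently late) element of the intersection of surviving languages. Since the intersection is infinite once it stabilizes, the outputs eventually avoid any finite $S_n$; alternatively I can exploit that $d+1$ is a data-independent bound.

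For utility, set $T=\Theta((1/\eps')\log(k/\beta))$. With probability $1-\beta/2$, $K$ is never eliminated, since $c_K\equiv 0$; this requires a bound on the noise uniform over infinitely many rounds, obtained by taking the per-round threshold noise with a union bound over a geometric $\beta_t$ schedule, or by the standard ``noise on threshold only'' analysis. Every $L_i$ with $K\not\subseteq L_i$ accumulates violations. However, the adversary controls how quickly it does so, so $n^\star$ cannot be bounded solely in terms of the round number. It should instead be bounded in terms of a count of ``informative'' elements. I would argue that each element either is in every surviving language (non-informative) or increases some $c_i$. There are only $k$ languages, each needing about $T$ violations, so at most $kT=\tilde O((k/\eps)\log(1/\beta))$ informative elements occur before elimination stabilizes, plus $d$ elements to reach closure.

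The main obstacle is exactly this accounting: proving $n^\star\le d+kT$ measured in distinct elements, given that the surviving intersection may change while violations trickle in, and ensuring that correctness holds once only languages containing $K$'s observed sample survive. I would first try a potential argument: after at most $kT$ violating samples all bad languages are eliminated, and the remaining $d$ distinct consistent samples force the closure of surviving languages to be infinite and contained in $K$.
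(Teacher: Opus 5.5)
Your proposal has a genuine gap: this route cannot give the claimed linear dependence on $k$, and the step where you obtain it is an arithmetic slip. Each of your $k$ sparse-vector instances runs at $\eps'=\eps/k$, so its noise has scale $k/\eps$. Your threshold is therefore $T=\Theta((k/\eps)\log(k/\beta))$, and it must be at least of order $(k/\eps)\log(1/\beta)$ for $K$ to survive. Your accounting then gives $n^\star\le d+kT=d+\Theta((k^2/\eps)\log(k/\beta))$, not $d+\tilde O((k/\eps)\log(1/\beta))$.

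The accounting itself works once stated correctly. Bad languages never need to be eliminated: as long as $K$ survives, the survivors' closure lies in $K$. That closure contains at least $n-\sum_{i\text{ surviving}}c_i(n)$ seen strings, so it is infinite once this exceeds $d$.

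The $k^2$ is real, not an artifact of the analysis. For $k\ge 3$, take $K=\N=D\sqcup A_1\sqcup\cdots\sqcup A_{k-1}$ with $\card{D}=d$, each $A_i$ infinite, and $L_i=\N\setminus A_i$; this collection has closure dimension $d$. An adversary that first presents $D$ and about $T/2$ elements of each $A_i$ keeps every $L_i$ alive with high probability. The survivors' closure is then $D$, entirely seen, until $n\approx d+(k-1)T/2$. Since the violation-count vector has $\ell_1$-sensitivity $k$, privatizing the counts separately forces per-language noise of order $k/\eps$.

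The missing idea is to privatize the \emph{choice of subcollection} directly, as the paper does. It runs a single exponential mechanism over all $2^k$ sets $S\subseteq[k]$ with the sensitivity-$1$ score $\card{\cl(\cL_S)\cap x_{1:n}}+f(n)\card{S}$. The $2^k$ candidates cost only about $k\log 2/\eps$. Any $S$ with $\card{\cl(\cL_{S\cup\{i^\star\}})}<\infty$ scores at most $d+kf(n)$ by the closure dimension, while $\{i^\star\}$ scores $n+f(n)$. If adding $i^\star$ keeps the closure infinite, it leaves the first term unchanged (since $x_{1:n}\subseteq K$) and gains $f(n)$. Taking $f(n)\approx(n-d)/k$ yields failure probability $4e^{-\eps(n-d)/(2k)}$.

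Two further steps fail as written. First, your eliminations are irrevocable.
\begin{itemize}
\item With fresh per-query noise, AboveThreshold applied to $c_K\equiv 0$ over an infinite horizon fires almost surely.
\item ``Noise on the threshold only'' is not pure DP. If $x_\tau\in L_i$ but $x'_\tau\notin L_i$, the original count cannot cross the threshold exactly at time $\tau$, while the neighbor's count does so with positive probability.
\item Even with growing thresholds, a $\beta$-tuned algorithm that wrongly eliminates $K$ never recovers. So it neither generates in the limit with probability $1$ nor gives the bound for all $\beta$ at once.
\end{itemize}
The paper instead re-runs the mechanism with fresh randomness at epochs $n_t=2^t+d$ with budgets $\eps_t=6\eps/(\pi^2t^2)$. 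Early errors are overwritten, and the failure probabilities over epochs $t'\ge t$ are summable.

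Second, outputting a fixed-index element (e.g.\ the $n$-th) of the survivors' intersection does not ensure novelty. Already for $\cL=\{K\}$ the output is a deterministic string $z_{g(n)}$. An oblivious adversary can present $z_{g(n_j)}$ before time $n_j$ along a sparse sequence $(n_j)$ while still enumerating all of $K$. You need randomized, data-oblivious post-processing as in \cref{lem:elementGenerationFromSetGeneration}: sample uniformly from the first $2n/\beta_n$ elements of the chosen infinite set.
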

        Note that the bound on $n^\star$ is independent of the target language and its enumeration.
        The sample complexity's dependence on $d$ is expected as it also arises without requiring privacy.
        Further, the dependence on $\nfrac{k}{\eps}$ in the sample complexity of \cref{infthm:ubFinite} is almost tight: $\nfrac{k}{\eps}$ samples are required to achieve even a success probability of $\nfrac23$,
        as shown in our next result.
        \begin{theorem}[Sample-Complexity Lower Bound; Informal; see \Cref{thm:lbFinite}]\label{thm:informal-lbFinite}
            \label{infthm:lb}
            For any $k,d\in \N$, there is a finite collection $\cL$ of size $k$ with closure dimension $d$ such that if the time step $n^\star$ after which an $\eps$-DP generation algorithm in the continual release model uniformly generates from $\cL$ satisfies $n^\star \leq m$ with probability at least $\nfrac{2}{3}$ independent of the target language and its enumeration, then $m = d+\,\Omega\left(\nfrac{k}{\eps}\right)$.
            Moreover, in the absence of privacy constraints, there is an algorithm that generates after observing $d+1$ elements from the adversary.
        \end{theorem}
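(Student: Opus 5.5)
We construct a collection in which private generation forces the algorithm to implicitly ``locate'' one of $k$ candidates, and then run a packing argument based on group privacy. Concretely, take a common infinite core $C$ and $k$ ``branches'': $L_i = C \cup A_i$, where the $A_i$ are pairwise disjoint infinite sets. We tune the overlaps so that the closure dimension equals $d$. One option is to pad with a fixed set $D$ of $d$ strings shared by all languages, arranged so that any $d$ distinct strings from $D$ leave closure ambiguity, while $d+1$ distinct strings from any single language pin down enough to generate non-privately.

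The non-private claim ($d+1$ elements suffice) follows from the characterization of \citet{li2024generation} once we check that the closure dimension of our $\cL$ is exactly $d$. We verify this directly: the intersection of all languages consistent with any $d+1$ distinct examples is infinite.

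The core of the lower bound uses two facts:
\begin{itemize}
\item If the adversary only shows core strings from $C$, then every $L_i$ is consistent. A correct output must then lie in $\bigcap_i L_i$, which we will design to be finite or empty beyond $C$.
\item So we modify the construction so that $\bigcap_i L_i$ is finite. For example, take $L_i = (\text{stuff}) \setminus B_i$, with each language missing a distinct infinite set $B_i$, and enumerate shared strings.
\end{itemize}

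The cleanest version is $L_i = U\setminus B_i$ with $U$ a universe and the $B_i$ disjoint infinite sets. After $m$ samples drawn from $U\setminus\bigcup_j B_j$ (valid for every $L_i$), a correct output $w$ must avoid $B_i$ for the true $i$. An output $w\in B_j$ is therefore a ``certificate'' that $j\neq i$.

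The plan is to define, for each $i$, the event $E_i$ that at some step $n\le m$ the output is invalid for $L_i$. Success with probability $2/3$ on every target and every enumeration must be played against streams that ``hide'' the target. Pick a hybrid stream $S_0$ of length $m$ using only shared strings, and streams $S_i$ that agree with $S_0$ except on about $m$ positions, where $S_i$ inserts strings of $\bigcap_{j\neq i}L_j\setminus L_i$ and so rules out every target except $L_j$, $j\ne i$. This uses the structure of $\cL$ so that the inserted strings keep the stream consistent with the intended target.

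Group privacy then gives $\Pr_{S_0}[\text{event}]\ge e^{-\eps t}\Pr_{S_i}[\text{event}]$, where $t$ is the Hamming distance between $S_0$ and $S_i$. Summing over $i$, the outputs under $S_0$ at time $m$ would need to land in $k$ disjoint ``good'' regions, each with probability at least $e^{-\eps t}\cdot 2/3$. This forces $k e^{-\eps t}\cdot 2/3\le 1$, so $t\gtrsim \log k/\eps$.

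That only yields $\log k$, so to get linear $k$ we instead use a sequential argument. The adversary eliminates candidates one at a time: a nonprivate algorithm needs only one informative sample, but a private algorithm cannot tell which of $\Theta(k)$ neighboring streams it faces. The main obstacle is precisely upgrading $\log k/\eps$ to $k/\eps$.

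My first attempt is a chain $S^{(0)},S^{(1)},\dots,S^{(k)}$ in which each step changes $O(1/\eps)$ positions and moves the set of valid outputs. The goal is to show the algorithm needs $\Omega(1/\eps)$ fresh samples per phase to shift its probability mass by a constant, with each phase tied to a distinct candidate. Summing over the $k$ phases gives $m\ge d+\Omega(k/\eps)$, with the $d$ coming from the padding strings that must precede any informative ones.
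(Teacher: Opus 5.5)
There is a genuine gap. Neither construction you write down supports the bound under the success criterion you adopt, and the step from $\log k/\eps$ to $k/\eps$ is stated as a goal but never argued.

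For $L_i=C\cup A_i$ the closure $C$ is infinite, so a data-independent generator works from step $1$. For $L_i=U\setminus B_i$ with disjoint infinite $B_i$, things are no better under your criterion (success probability $2/3$ for each fixed target and enumeration). Fix $j\in[k]$ uniformly at random once. At step $t$, output a uniform element of a prefix of $B_j$ whose length grows fast enough in $t$ (or of $U\setminus\bigcup_l B_l$ if that set is infinite). This generator ignores the data, so it is $\eps$-DP for every $\eps$. It generates from $L_{i^\star}$ from step $1$ on unless $j=i^\star$ or one of the summably rare collisions with the stream occurs, i.e., with probability about $1-1/k\geq 2/3$ for $k\geq 4$. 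So under your criterion no argument can give $\Omega(k/\eps)$ for this collection.

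Even under the stronger requirement that outputs be valid for every language consistent with the stream, your packing over the $k$ disjoint regions $B_i$ yields only $\log k/\eps$, as you observe. The closing ``chain'' paragraph does not repair this. Group privacy along $S^{(0)},\dots,S^{(k)}$ composes to exactly the endpoint bound $e^{\eps\cdot(\text{Hamming distance})}$ on time-$m$ outputs. Nothing in the sketch shows why a private algorithm must pay $\Omega(1/\eps)$ samples separately for each of $k$ phases rather than once.

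The missing idea, which is what the paper uses, is a collection with exponentially many (in $k$) subcollections whose closures are pairwise disjoint infinite sets. Then a single packing argument at time $m$ already gives $\Omega(k/\eps)$, and no sequential argument is needed.

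The paper sets $N=\binom{k}{\lfloor k/2\rfloor}$, lists the $\lfloor k/2\rfloor$-subsets $S_1,\dots,S_N$ of $[k]$, and defines $L_i=\{j+Nt : i\in S_j,\ t\in\N\}$. Residue class $j$ modulo $N$ lies in exactly the languages indexed by $S_j$. Because $\{S_j\}$ is a Sperner family, $\cl(\cL_{S_j})$ is exactly that residue class. Every subcollection has infinite or empty closure, so the closure dimension is $0$, and one sample suffices non-privately (output from that sample's residue class).

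For the lower bound, post-process the output to its residue mod $N$. On a fixed stream $x_{1:m}$, some residue $j$ has probability at most $1/N$. A stream $y_{1:m}$ of distinct elements of class $j$ forces the output into class $j$ with probability at least $2/3$. Group privacy across all $m$ positions then gives $2/3\leq e^{\eps m}/N$, i.e., $m\geq (k\log 2-O(\log k))/\eps$. The dependence on $d$ is added exactly as you propose: give all languages $d$ common strings and open both streams with them.

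The step ``forces the output into class $j$'' uses the stronger validity requirement. Under your per-target reading, average the guarantees over $i\in S_j$ instead. The set $S_r$ of languages containing the output then satisfies $|S_r\cap S_j|\geq 0.6|S_j|$ with probability at least $1/6$. On a fixed stream, this event has average probability $e^{-\Omega(k)}$ over uniform $j$ by a hypergeometric tail bound, so group privacy still yields $m=\Omega(k/\eps)$.
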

        This shows that the dependence on $d+\,\nfrac{k}{\eps}$ is unavoidable for uniform private generation (in the sense of \cref{infthm:ubFinite}).
        In fact, we prove a stronger lower bound that already applies under one-shot $\eps$-DP at a \emph{single} time step (without assuming the stronger continual release requirement).
        Thus, for uniform generation from finite collections, there \emph{is} a privacy--utility trade-off: without privacy, generation can succeed after just $d+1$ samples, whereas with privacy, $d+\,\Theta(\nfrac{k}{\eps})$ samples are necessary. 
        This gap can be made arbitrarily large by increasing the size of the collection $k$ (while keeping $d$ fixed).

        \begin{remark}[Non-Uniform Generation]
            The algorithm in \cref{infthm:main} achieves a stronger guarantee of non-uniform generation \citep{li2024generation} (see \Cref{rem:non-uniform-generation}).
            
        \end{remark}

        \paragraph{Private identification.}
         Since requiring differential privacy for generation does not restrict which collections are generatable,
         it is natural to ask whether the same is true for language \emph{identification} in the limit, as defined by \citet{gold1967language}.
        In this model, an adversary similarly selects a target language $K= L_{i^\star}$ from a known collection $\cL=\{L_1,L_2,\dots\}$ and fixes an enumeration of $K $.
        The only difference is that after the adversary reveals the $n$-th element, the algorithm is required to output an index $i_n$. The algorithm \emph{identifies from $\cL$ in the limit} if there is a finite round $n^\star$ such that for all $n\geq n^\star$, $i_n=i^\star$.

            Our next result shows that under $\eps$-DP, unlike generation, identification becomes much harder to achieve.
            As before, we allow the identification algorithm to fail on an event of probability 0.
            \begin{restatable}[Private Identification Barrier]{theorem}{thmPrivateIdentificationBarriers}
                \label{infthm:identification-barriers}
                If $\cL$ contains two distinct $L_i, L_j$ such that $\abs{L_i \cap L_j} = \infty, \abs{L_i \setminus L_j} < \infty,$ then no $\eps$-DP continual release algorithm (for any $\eps > 0$) can identify $\cL$ in the limit.
            \end{restatable}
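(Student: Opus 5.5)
We argue by contradiction, using group privacy along a carefully built pair of enumerations. Suppose $\generator$ is $\eps$-DP in the continual release model and identifies $\cL$ in the limit (with probability one). Set $A = L_i \setminus L_j$; this set is finite, say $|A| = m$. We build an enumeration $\sigma$ of $L_i$ and an enumeration $\tau$ of some language not equal to $L_i$, namely $L_j$, such that the two streams differ in only finitely many positions ($m$ of them, or at most a bounded number). On $\sigma$ the algorithm must eventually output $i$ forever. On $\tau$ it must eventually output $j$ forever.

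Concretely, fix an enumeration $y_1,y_2,\ldots$ of $L_j$ in which every element of $L_i\cap L_j$ appears, and which covers all of $L_j$ (the intersection is infinite, so enumerations of it with repetitions are plentiful). Let $\tau=(y_1,y_2,\ldots)$. Let $\sigma$ agree with $\tau$ except at positions $1,\ldots,m$, where we place the elements of $A$. We choose $\tau$ so that $y_1,\ldots,y_m$ are repeats of later elements, for instance by putting some fixed $z\in L_i\cap L_j$ there and ensuring every element of $L_j$ still appears later. Then $\sigma$ is a valid enumeration of $L_i$: every element of $L_i\cap L_j$ appears after position $m$, every element of $A$ appears at the start, and $\sigma$ contains no element of $L_j\setminus L_i$. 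For this to hold we choose the tail of $\tau$ to enumerate $L_i\cap L_j$ together with $L_j\setminus L_i$ only if that difference is empty. This is the subtle point. If $L_j\setminus L_i\neq\emptyset$, the tail cannot serve both enumerations. In that case we instead take $\tau$ to be an enumeration of $L_j$ and $\sigma$ to be an enumeration of $L_i\cap L_j$ plus $A$, and we pay $|L_j\setminus L_i|$ changes, which could be infinite. To avoid this, I would use the fact that $L_j\setminus L_i$ may be infinite and handle it with a different scheme, described next.

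The robust scheme is a limit argument in the spirit of Gold's locking sequences. Build a single stream $x$ in stages, starting with $A$ followed by elements of the intersection. At stage $t$, consider the probability $p_t$ that the output at a far-future time $T_t$ equals $j$. Alternately append blocks drawn from $L_i\cap L_j$ (moving toward a stream that looks like $L_i$) and blocks that insert one new element of $L_j\setminus L_i$ (moving toward $L_j$). Group privacy over the finitely many changed positions, at most $m$ plus a constant, bounds the ratio of output-trajectory probabilities between the $L_i$-completion and the $L_j$-completion of each prefix by $e^{(m+1)\eps}$. The limit stream enumerates one of the two languages, and if infinitely many insertions occur it enumerates $L_j$. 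Finite prefixes can then be completed either way. Convergence to $i$ with probability $\to 1$ on the $L_i$-completion forces, via group privacy, probability at least $e^{-(m+1)\eps}$ of outputting $i$ at that time on the $L_j$-completion. Doing this infinitely often on the limit stream shows that, with positive probability, the output equals $i$ at infinitely many times. By a reverse Fatou / Borel–Cantelli style argument on $\limsup$ events this contradicts identification of $L_j$.

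In the simplest subcase, $L_j\subseteq L_i$ or more generally two enumerations differing in exactly $m$ positions, the argument is direct. The event $E_i=\{\exists n_0\,\forall n\ge n_0: i_n=i\}$ is a measurable event on the whole output transcript. Group privacy in the continual release model gives $\Pr_\sigma[E_i]\le e^{m\eps}\Pr_\tau[E_i]$. The left side is $1$ and the right side must be $0$ (since $E_i$ and $E_j$ are disjoint), which is a contradiction. The main obstacle is the case where $L_j\setminus L_i$ is infinite, because then no two valid enumerations differ in finitely many places. There I expect the staged diagonal construction, with careful choice of the times $T_t$ and control of probabilities via group privacy at each stage, to be the hardest step.
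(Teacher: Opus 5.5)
\textbf{There is a genuine gap: the case $|L_j\setminus L_i|=\infty$ is not handled, and your sketch for it rests on a false count.}

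Your finite-difference argument is fine when $L_j\setminus L_i$ is finite. It uses repeated elements, which the model allows; the paper instead swaps $i,j$ so that $|L_j\setminus L_i|\ge |L_i\setminus L_j|$ and keeps every enumeration duplicate-free. The remaining case, $|L_j\setminus L_i|=\infty$ (e.g.\ $L_i\subsetneq L_j$ with infinite difference), is the heart of the theorem, and you leave it as ``the hardest step.''

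The staged sketch you give for it fails at a concrete point: the number of changed positions is not ``at most $m$ plus a constant.'' By checkpoint $T_t$, your $L_j$-stream has already received some $n_t\to\infty$ elements of $L_j\setminus L_i$. Every enumeration of $L_i$ must differ from it at each of those positions, and these are the only streams on which the identification guarantee says the algorithm settles on $i$. So group privacy gives only a lower bound of the form $e^{-d_t\eps}(1-o(1))$ on outputting $i$ at $T_t$, with $d_t\ge n_t\to\infty$. That bound tends to $0$, so reverse Fatou yields nothing. Separately, starting the stream with $A$ prevents the limit from enumerating $L_j$ unless $A=\emptyset$.

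\textbf{The missing idea.} The growing group-privacy factor has to be beaten by choosing each checkpoint only after the number of edits before it is already fixed.

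The paper does this with a single reference stream $E^{(0)}$: an $L_i$-enumeration whose $m$ elements of $L_i\setminus L_j$ are overwritten by elements of $L_j\setminus L_i$. By group privacy with $m$ edits, the algorithm outputs $j$ only finitely often on $E^{(0)}$ almost surely.
\begin{itemize}
\item The paper picks $T_1<T_2<\cdots$ with $\Pr_{E^{(0)}}[N_j(T_k)\ge T_k/2]\le e^{-2k\eps}/k^2$.
\item It then performs the $k$-th single-coordinate insertion of a missing element of $L_j$ exactly at position $T_k$. Displaced elements are recycled through a pool, so the limit $E''$ is a duplicate-free enumeration of $L_j$.
\item Hence $E''_{1:T_k}$ and $E^{(0)}_{1:T_k}$ differ in exactly $k$ places, and $\Pr_{E''}[N_j(T_k)\ge T_k/2]\le e^{-k\eps}/k^2$, which is summable.
\item The first Borel--Cantelli lemma then contradicts identification of $L_j$ on $E''$.
\end{itemize}

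Your own scheme can be repaired the same way. At stage $t$, let $\sigma_t$ be a genuine $L_i$-enumeration that differs in $r_t$ positions from the current prefix followed by an enumeration of $L_i\cap L_j$. Continue with that enumeration of $L_i\cap L_j$, choosing $T_t$ so large that $e^{r_t\eps}\Pr_{\sigma_t}[\text{output at }T_t\neq i]\le 1/2$, and only then insert the next element of $L_j\setminus L_i$. Since outputs up to $T_t$ depend only on the input prefix, this gives a uniform lower bound of $1/2$ on outputting $i$ at each $T_t$ on the final $L_j$-enumeration, and reverse Fatou then applies.
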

            In particular, if $\cL$ contains two languages with $L_i \subseteq L_j$, private identification is impossible.
            Due to this, the above condition turns out to be much stronger than \emph{Angluin's condition} (\cref{def:telltales}), which characterizes non-private identification. 
            Hence, combined with \cref{thm:privateOnlineGeneration}, this yields another separation between identification and generation. %
            We complement this negative result with an algorithm for collections satisfying conditions close to the negation of the above (see \cref{thm:finite_intersection_upper}).

            Finally, we study identification in the stochastic model of \cite{angluin1988identifying}, where the input stream is drawn i.i.d.\ from a distribution supported on the target language. Without privacy, identifiability in the stochastic and adversarial settings coincide and are characterized by Angluin's condition (\cref{def:telltales}). We show this equivalence persists under privacy.
            \begin{restatable}[Private Identification in Stochastic Setting]{theorem}{stocIden}\label{thm:stochastic-identification}
                A countable collection of languages $\cL$ is privately identifiable in the limit under stochastic inputs if and only if it satisfies Angluin's condition.
            \end{restatable}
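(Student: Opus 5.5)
Both directions need a plan, and the necessity direction is the easy one. A private stochastic identifier is in particular a stochastic identifier. Classically (Angluin 1988), if some $\cL$ violates Angluin's condition, then no learner identifies it with probability $1$ from i.i.d.\ samples: there is a target distribution on which any learner fails with positive probability. I will invoke that result. If needed, I will re-derive it by taking $L$ with no finite tell-tale and building a distribution whose samples, with positive probability, look like an enumeration of some $L'\subsetneq L$ for arbitrarily long stretches. Since identification must hold almost surely, failure with positive probability rules out private identification as well.

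For sufficiency, I assume Angluin's condition: each $L_i$ has a finite tell-tale $T_i\subseteq L_i$. The plan is a private ``sparse vector / epoch'' scheme layered on a non-private score.

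\emph{Epochs.} I split the stream into disjoint blocks $B_1,B_2,\dots$ of geometrically growing length $|B_t|=2^t$. In epoch $t$ the algorithm outputs a single index, computed privately from block $B_t$ only, and repeats it throughout the next epoch. Each sample lies in exactly one block, so if each per-block release is $\eps$-DP, parallel composition makes the whole output stream $\eps$-DP in the continual release model.

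\emph{Per-block selection.} Restrict to the candidates $i\le t$. For each such $i$, compute the indicator of
\[
T_i\subseteq S(B_t)\subseteq L_i ,
\]
where $S(B_t)$ is the set of distinct strings in $B_t$. Then choose the smallest such $i$ via the exponential mechanism, using a score whose sensitivity is $O(1)$. A raw indicator is fragile, since changing one sample flips it. So I instead use a robust score: the number of sample points that would have to be changed to make the indicator false (or true). For the target $i^\star$, the number of samples outside $L_{i^\star}$ is $0$. The ``$T_{i^\star}$ covered'' part, measured as the minimum multiplicity among tell-tale elements, grows linearly in $|B_t|$ because each tell-tale string has positive probability. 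For each $j<i^\star$ with $L_j\ne L_{i^\star}$, exactly one of two things happens. Either $L_{i^\star}\not\subseteq L_j$, in which case a positive fraction of samples fall outside $L_j$, giving a large violation count. Or $L_{i^\star}\subsetneq L_j$, in which case Angluin's condition gives $T_j\not\subseteq L_{i^\star}$, so $T_j$ is never covered.

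A natural score is therefore: the margin for $i$ is the minimum of the multiplicities of the $T_i$ elements, minus a large multiple of the number of samples outside $L_i$. Lexicographic preference for small $i$ will be encoded by giving $i^\star$ a margin $\Omega(2^t)$ and every $j<i^\star$ margin $\le 0$. All $j$ with index in $(i^\star,t]$ receive a penalty, or I will use a noisy ``first index above threshold'' (AboveThreshold) pass in increasing order of $i$.

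\emph{Convergence.} The AboveThreshold noise is $O(\log t/\eps)$, whereas the gaps are $\Omega(2^t)$ with probability $1-e^{-\Omega(2^t)}$. The failure probabilities are therefore summable, and Borel--Cantelli gives correct output in all but finitely many epochs almost surely.

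The main obstacle is the uncomputable and non-robust ingredients. The condition $S\subseteq L_i$ is fine, since it is counted. The harder issue is that the tell-tales $T_i$ are only assumed to exist, so the algorithm needs them explicitly. I would assume tell-tales are available as in the non-private characterization; failing that, I would replace $T_i$ by its enumerated approximations up to epoch $t$. A secondary concern is that the sensitivity of the min-multiplicity score is $1$, which needs checking.
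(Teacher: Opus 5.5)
Your necessity direction and overall architecture are close to the paper's: geometric epochs, one private selection per epoch, the same split of $j<i^\star$ into $L_{i^\star}\not\subseteq L_j$ versus $L_{i^\star}\subsetneq L_j$ via Angluin's condition, and Borel--Cantelli. Using disjoint blocks with parallel composition instead of the paper's prefixes with $\eps_s\propto 1/s^2$ is also legitimate.

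The gap is in the score, which is the crux of sufficiency. Your margin $\min_{w\in T_i}c(w)-C\cdot\mathrm{Err}(i)$ rewards coverage linearly in $|B_t|$. A competitor $j<i^\star$ with $L_{i^\star}\not\subseteq L_j$ is penalized only by about $C\,D(L_{i^\star}\setminus L_j)\,2^t$. Angluin's condition says nothing about tell-tales of languages incomparable to $L_{i^\star}$, so $T_j\subseteq L_{i^\star}$ is allowed. For instance, take $L_1=\{0,2,4,\dots\}$ and $L_2=\{1\}\cup\{2,4,6,\dots\}$ with valid tell-tales $T_1=\{2\}$, $T_2=\{1\}$. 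Let the target be $L_2$, with $D(1)=\delta$, $D(2)=\tfrac12$ and $\delta\le 1/(4C)$. Index $1$ then has margin about $2^t(\tfrac12-C\delta)\ge 2^{t-2}$. This exceeds the target's own margin $\approx\delta 2^t$, and hence any threshold the target could clear. So your increasing-order AboveThreshold halts at $1$ in essentially every epoch, and the exponential mechanism would favor $1$ as well. Thus your assertion that every $j<i^\star$ has margin $\le 0$ is false for any fixed $C$: it hinges on the unknown ratio $\min_{w\in T_j}D(w)/D(L_{i^\star}\setminus L_j)$.

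What is missing is a cap on the coverage reward, which is exactly what the paper's deficit term provides: $u_s(i)=-\mathrm{Err}_{t_s}(i)-\sum_{w\in T_i}\max\{0,k_s-c(w)\}$ with $k_s=s^3=o(2^s)$. Every score is then $\le 0$. The target attains $0$ once each of its tell-tale strings has appeared $k_s$ times. Strict supersets lose at least $k_s$, and every $L_j\not\supseteq L_{i^\star}$ loses $\Theta(2^s)$, with no $D$-dependent trade-off. In your framework the analogous repair is the margin $\min\{\min_{w\in T_i}c(w),k_t\}-\mathrm{Err}(i)$ with a data-independent threshold such as $k_t/2$. Alternatively, use a multiplier $C_t\to\infty$, paying sensitivity $C_t+1$ and choosing the threshold to dominate the resulting noise.

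Two smaller points. First, competitors with $L_{i^\star}\subsetneq L_j$ have margin exactly $0$, so your threshold must be positive and grow with $t$; for them the relevant gap is the threshold, not $\Omega(2^t)$. Second, if you use the exponential mechanism rather than AboveThreshold, the preference for small indices must come from a data-independent index penalty that grows with the epoch, as in the paper's base measure $\pi_i s^{-2i}$. Late-index languages whose errors have not yet surfaced in the sample can tie the target's score, so a data-based score cannot do this job.
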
 
            {Together with \Cref{infthm:identification-barriers}, this reveals a separation between adversarial and stochastic identification induced by privacy; a phenomenon absent in the non-private setting \citep{angluin1988identifying,kalavasis2025limits,charikar2025characterization} that may merit further exploration.}

            \begin{remark}[Statistical Rates of Private Generation and Identification]
            Our results and techniques have natural implications for the \emph{statistical} setting
            studied by \citet{kalavasis2025limits} (who, in turn, use the \emph{universal rates} model by \citet*{bousquet2021theory}). 
            In this setting, the algorithm receives an \iid{} sample of size $n$ from a distribution supported $\cD$ on some language $K \in \cL$ and its goal is to generate samples from $K$ or, in the case of identification, identify $K$.
            For generation (respectively identification), the quantity of interest is the probability that the algorithm does not generate from $K$ (respectively identify $K$) as a function of $n$. 
            If this failure probability decays as $C \cdot R(c\cdot n)$, we say that $\cL$ is generatable (respectively identifiable) at rate $R.$  
            Notably, the constants can depend on the distribution and on $\eps$ but not the target language $K\in \cL$. 
            
            Informally, we can show that every countable collection (respectively every collection that satisfies Angluin's condition \citep{angluin1980inductive}) is generatable (respectively identifiable) in the limit at an (almost) exponential rate, where the constants depend on the privacy parameter $\eps$.
            Such transformations from algorithms that succeed in the online setting to algorithms that achieve (almost) exponential rates have also appeared in prior works (\eg{}, \citep{kalavasis2025limits,kalavasis2024characterizations,charikar2025characterization}) and our extensions utilize similar techniques.
            \end{remark}

    \vspace{-7mm}
    
\subsection{Related Works}\label{sec:relatedworks}
        Our contributions draw on two main lines of work: (1) language generation in the limit, and (2) differential privacy under continual release.
        We summarize the most relevant related works below.

        \paragraph{Language generation in the limit.}
            A growing line of work studies a range of questions in the language generation in the limit model and its variants (\eg{}, \cite{li2024generation,kalavasis2025limits,charikar2024facets,raman2025noisy,peale2024,kleinberg2025density,kleinberg2025partial,hanneke2025union,mehrotra2025contamination,charikar2025characterization,karbasi2025impossibility,charikar2026pareto,arenas2025language,anastasopoulos2026safe}).
            Perhaps the most closely related work to ours is that of
            \citet{charikar2024facets,mehrotra2025contamination}, whose algorithms we build upon. Moreover, the notion of uniform generation we explore in our work was proposed by \citet{li2024generation}. We provide a more detailed overview of other
            works in this area in \cref{apx:related-work}.

        \paragraph{Differential privacy under continual release.}
        The continual release model of differential privacy requires algorithms to abide by a strong privacy notion: an observer obtaining \emph{all} outputs of the algorithm must, in essence, learn almost nothing about the existence of any single input. 
        Since its introduction, this research area has received vast attention, including many
        recent works (see \eg{}\ \cite{perrier2019real,fichtenberger2023constant,jain2023counting}).
        This includes classical estimation problems~\cite{chan2011continual,cardoso2022histograms,henzinger2023differentially,henzinger2024unifying}, heavy hitters-related problems~\cite{ChanLSX12,epasto2023differentially}, and lower bounds~\citep{jain2023price,cohen2024lower,epasto2025sublinear}. 

    \vspace{-2mm}
    \section{Technical Overview}
    \vspace{-2mm}

    In this section, we overview the main ideas and challenges in proving our results.
    To explain the challenges that the privacy requirement
    introduces in this setting, we start with \emph{identification}, and then illustrate that we can design generators that do not suffer from these hurdles.

    \vspace{-2mm}

    \subsection{Online Model of Private Identification (\cref{infthm:identification-barriers} and \cref{thm:finite_intersection_upper}) }
        \vspace{-2mm}
        
        \paragraph{Identification lower bound.} 
        We begin with our lower bound, which is more involved than the algorithm. Suppose $\cL$ contains $L_i, L_j$ with $\abs{L_i \cap L_j} = \infty$ and $\abs{L_i \setminus L_j} < \infty$, and assume for contradiction that some algorithm identifies $\inbrace{L_i, L_j}$. Starting from an enumeration $E$ of $L_i$, the algorithm outputs $L_j$ only finitely often with probability one. Using the \emph{group}-privacy
        guarantees and the correctness properties of the algorithm, we
        show how to find a sequence of timesteps $\inbrace{t_{k_\ell}}_{\ell \in \N}$ such that if we swap elements of $E$ appropriately on these timesteps,
        we can \textbf{(i)} convert $E$ to an enumeration $E'$ of $L_j$, 
        and \textbf{(ii)} guarantee that the algorithm \emph{cannot} identify
        $L_j$ in this enumeration. The technical details to make this
        work are involved since we need to make infinitely many swaps from $E$ to turn it to an enumeration of $L_j$, while ensuring the algorithm makes
        infinitely many mistakes. The proof appears in \cref{apx:proofs:privateOnlineIdentificationLowerBound}.

        \paragraph{Identification algorithm.} 
        {Next, we describe an algorithm that identifies in the limit any countable collection in which every pair of distinct languages has finite intersection; intuitively, the languages are \emph{almost} disjoint and share only finitely many elements. For intuition, consider two languages $\inbrace{L_1, L_2}$ with this property. For each $L_i$, maintain an error counter that is equal to the number of stream elements it misses. Then, for any adversarial stream,\footnote{This holds even if we allow each element to be repeated a constant amount of times.} exactly one counter stays at zero while the other grows \emph{linearly} in the limit. Now, standard continual-release techniques~\citep{dwork2010continual} let us distinguish the two languages. We extend this idea to countable collections by restricting the active search space to \emph{finitely} many candidate languages at each timestep, which lets us bound the error probability via union bounds.}
 
        \vspace{-2mm}
        
        \subsection{Stochastic Model of Private Identification (\cref{thm:stochastic-identification})}  
        We now turn to the stochastic setting of private identification. 
        
        To design a private algorithm here, a natural approach is to ``privatize'' an off-the-shelf identification algorithm, like the one from \citet{angluin1980inductive}. 
        Unfortunately, it is not clear how to do that since these algorithms heavily rely on keeping track of a \emph{version space}, \ie{}, the set of all consistent languages with the current stream of examples, which can change dramatically on swapping just one element in the stream.
        
        To circumvent these, we use the \emph{exponential mechanism} \cite{talwar2007exponential}; the  main technical hurdles are to \textbf{(i)} design appropriate score functions with low sensitivity, and  \textbf{(ii)} since the output space is infinite, the tail of the distribution induced by the exponential mechanism needs to decay sufficiently fast.
        Intuitively, our scoring function has two components; the first penalizes languages that are \emph{not} supersets of $K$ and the second penalizes languages that are (strict) supersets of $K$.
        The former can be easily achieved by counting how many stream elements each language misses. To achieve the latter, we show it suffices to penalize a language when its \emph{tell-tale} (\cref{def:telltales}) has not yet appeared in the stream. 
        We design such a function with small sensitivity which, crucially, has the property that in the stochastic setting we can \emph{lower bound} the rate at which it is decreasing for all $L_i \neq K.$ 
        This separation is what allows privacy in the stochastic model without additional requirements, while the online setting has a high cost of privacy.
                
        To ensure that the tail of the (exponential) distribution decays sufficiently fast and we do not exceed our privacy budget, we run the algorithm in \emph{epochs} of exponentially increasing size and perform ``lazy updates,'' \ie{}, the output remains the same for all timesteps in a given epoch. 
        We sample each language $L_i$ with probability proportional to $\pi_t(i)\cdot \exp(\lambda u_t(i)),$ where $u_t$ is the scoring function, $\pi_t$ is a \emph{data-independent} base measure that heavily downweights languages with large indices, and changes across epochs, and $\lambda$ is related to the sensitivity of $u_t$ and the privacy budget.
        By carefully choosing all the underlying parameters we can show that the sum of the error probabilities across epochs is finite, thus implying only finitely many identification mistakes almost surely through the Borel--Cantelli lemma (\cref{lem:first-Borel--Cantelli}). 

        \vspace{-2mm}
        
        \subsection{Private Generation (\cref{infthm:main})}\label{sec:overview:privateGeneration}
        Having illustrated the inherent limitations of private identification, we now explain why generation avoids these obstacles. Recall that if $L_i \subsetneq L_j$, private (online) identification is impossible even for the two-language class $\{L_i,L_j\}$. In contrast, private generation is trivial in this case: since $L_i \cap L_j$ is infinite, a generator can safely output elements from this intersection for infinitely many timesteps.
        This idea also underlies the generators of \citet{kleinberg2024language} (and \citet{charikar2024facets}). Thus, a natural route is to try to use the exponential mechanism~\citep{talwar2007exponential} to privatize these algorithms.
        Unfortunately, similar to the identification case, these algorithms are very brittle since they require tracking the version space.

        \paragraph{Our approach.} We instead build on the recent algorithm of \citet{mehrotra2025contamination} (inspired by \citet{charikar2024facets}), which is more amenable to privatization because it does not explicitly maintain a version space. Instead, the algorithm assigns each language a priority based on the number of inconsistent strings seen so far, and then (following this priority order) forms incremental intersections until the intersection remains infinite. A careful analysis of the high-priority languages shows that the target language $K$ must eventually be a part of the maintained intersection.
        Crucially, the algorithm accesses the stream \textit{only} through these priorities. 
        We can privatize the priority computation at a \emph{single} timestep via the Laplace mechanism, and then repeat this at sparse timesteps while allocating the privacy budget across repetitions to obtain continual-release guarantees. This is reminiscent of the lazy-updates paradigm from continual-release graph algorithms~\citep{fichtenberger2021continual,epasto2025sublinear,dinitz2025differentially,zhou2026continual}. It remains to show that the resulting noisy priorities are accurate enough that $K$ is included in the intersection with probability $1$.
        Once we have computed this infinite subset $U\subseteq K$, generating an unseen element can be accomplished by truncating this set at a sufficiently \mbox{long prefix and sampling an element uniformly.}

         \vspace{-2mm}

        \subsection{Sample Complexity of Private Generation (\cref{infthm:ubFinite,infthm:lb})}\label{sec:overview:privateUniformGeneration}
        We now study the sample complexity of private generation under \emph{uniform} bounds, meaning bounds that do not depend on the target language $K$ and its enumeration.
        The analysis in this setting turns out to be significantly more delicate
        than the previous one. Without privacy, such uniform bounds exist if and only if $\cL$ has finite closure dimension (\cref{def:closureDimension}).

        \paragraph{Sample complexity upper bound (\cref{infthm:ubFinite}).} 
            We begin with \emph{finite} collections, which admit uniform bounds in the non-private setting \citep{kleinberg2024language}. Since the algorithm from the previous subsection does not exploit finiteness, we analyze a different procedure here.\footnote{Note that while our algorithm here will be able to achieve a uniform sample complexity, it is incomparable to the algorithm in the previous subsection result since the current algorithm does not generate from all countable collections.}

            A simple (non-private) algorithm for uniformly generating finite collections is as follows: output the smallest unseen element from the closure (\ie{}, intersection) of all consistent languages, where a language $L$  is consistent if $L\supseteq S_n$.
            To prove \cref{infthm:ubFinite}, we show that this algorithm can be privatized via the exponential mechanism with a carefully designed score.
            To be more precise, our score function will assign scores to subsets of languages, and our algorithm will sample a subset $S$ of languages and output their closure $\cl(\cL_S\coloneqq\inbrace{L_i\colon i\in S})$.\footnote{Given this closure, one can always privately post-process to sample one unseen element from it; \cref{lem:elementGenerationFromSetGeneration}.}
            We will design a score function which comes with the guarantee that, as $n\to \infty$, with probability 1, the sampled subcollection $\cL_S$ (P1) contains $K$ and (P2) $\cl(\cL_S)$ is infinite.

            Achieving Property (P2) is straightforward: it suffices to ensure that $\cl(\cL_S)$ contains at least $d+1$ elements, where $d$ is the closure dimension of $\cL$. Then the definition of closure dimension implies $\card{\cl(\cL_S)}=\infty$ \citep{li2024generation}.
            The main work is establishing (P1).
            A simple score rewards $\cL_S$ proportional to how many enumerated elements lie in $\cl(\cL_S)$, but this does not differentiate between $K$ and its supersets. So any superset of $K$ has the same score and, hence, the same probability of being sampled as $K$. So the probability of sampling $K$ can be as small as $1/c$, where $c$ is the number of supersets of $K$ in $\cL$. 
            One could repeat the exponential mechanism $t_n$ times to amplify probability of sampling $K$, but this would require $t_n \to \infty$ with $n$ and would incur additional privacy loss with each re-sampling.

            Instead, we design a different score function which balances two competing goals: (G1) favoring \textit{larger} subcollections and (G2) favoring subcollections whose closure contains more elements from the input enumeration.
            The key observation is simple: if $K \notin \cL_S$, then adding $K$ yields a subcollection that weakly improves both (G1) (it is larger) and (G2) (including $K$ does not remove any elements from closure).
            We show that observation is enough to conclude that, with sufficiently high probability in $n$, the exponential mechanism will sample a subcollection that contains $K$.

        \paragraph{Sample complexity lower bound (\cref{infthm:lb}).} 
        Having proved an upper bound for finite collections, it is natural to ask whether it is tight and whether a similar guarantee extends to all countable collections with finite closure dimension. We show the upper bound is tight, and moreover that there exist collections with closure dimension zero that still do not admit any uniform private bound.
        Our lower bound uses the standard packing lower bound approach for DP~\citep{hardt2010geometry}.
        This framework proceeds roughly as follows.
        Let $M: \cX^n\to [N]$ be an $\eps$-DP mechanism with discrete output space $[N]$ and suppose that every $v\in [N]$ is the unique correct answer to $M(X')$ for some $X'\in \cX^n$.
        For any dataset $X\in \cX^n$, there must be at least one output $v\in [N]$ such that $\Pr[M(X) = v]\leq \nfrac1N$.
        By assumption, there is some $X'\in \cX^n$ where $\Pr[M(X') = v]\geq \nicefrac23$ since $v$ is the uniquely correct response for dataset $X'$.
        By the definition of DP,
        $
            \nfrac23
            \leq \Pr[M(X') = v]
            \leq e^{n\eps}\cdot \Pr[M(X) = v]
            \leq \nfrac{e^{n\eps}}{N}\,.
        $
        In other words, $n\geq \Omega(\nfrac{(\log N)}\eps)$.

        In our lower bound construction, by an appropriate postprocessing we may take the relevant output space to be a subset of the $2^k$ index sets $I\subseteq[k]$, each encoding an infinite intersection $\bigcap_{i\in I} L_i$ of languages from a size-$k$ collection.
        The main technical challenge is to construct a size $k$ collection that ``packs'' as many different unique correct responses as possible for input streams of length $n$.
        We do so via a Sperner family, which provides $N=\tilde\Omega(2^k)$ distinct responses and thus gives the desired lower bound.

        \vspace{-2mm}

\section{Model and Preliminaries}
In this section, we introduce differential privacy and the model of language generation in the limit.

\paragraph{Notation.}  
    Let $\cX $ be a countable universe of strings.
    For instance, if $\Sigma$ is a finite alphabet (\eg{}, $\{a, b, \ldots, z\}$), then $\cX=\Sigma^*$ can be the set of all finite-length strings formed by concatenating symbols from $\Sigma$. 
    We define a language $L$ as an \textit{infinite} subset of $\cX$. 
    A countable collection of languages is denoted by $\cL = \inbrace{L_1, L_2, \dots }$. 
    We define a generating algorithm $\generator = (\generator_n)_{n \in \N}$ as a sequence of (possibly randomized) mappings $\generator_n\colon\! {\cX}^n \to 2^{\cX}$ parametrized by the input size $n$. 
    In words, the generator maps a finite training set to a (potentially infinite)\footnote{This is to align with the \emph{set-based} and \emph{element-based} notions of generations that have been considered in the literature.} set of elements.

    \subsection{Language Generation and Identification in the Limit}
    We now formally define language generation in the limit, both in an online and a statistical model.

    \paragraph{Online model.}
        We begin with an extension of the online model that was introduced by \cite{kleinberg2024language}, which handles randomized generators as necessary for DP.
    \begin{definition}[Language Generation in the Limit \citep{kleinberg2024language}]\label[definition]{def:consistentGeneration}
        Let $\cL =  \{L_1, L_2,\dots\}$ be a collection of languages, $\generator ~{ =\inparen{\generator_n}}$ be a generating algorithm, 
        and $K \in \cL$ be some target language.
                A randomized algorithm $\generator$ is said to generate from $K$ in the limit if, for all enumerations of $K$, with probability 1, there is some $n^\star \in \N$ such that for all steps $n \geq n^\star$, the algorithm’s output  satisfies $\generator_n(S_n) \subseteq \inparen{K \setminus S_n}$, where $S_n$ is the set of the first $n$ elements given in the input. The collection $\cL$
                allows for generation in the limit if there is an algorithm  $\generator$ that 
                 generates from $K$ in the limit for any $K \in \cL.$
    \end{definition}
    We remark that \citet{kleinberg2024language} originally studied deterministic generation algorithms; follow-up works studied this natural randomized version, whose analogue has also been studied for identification~\citep{angluin1988identifying,kalavasis2025limits,charikar2025characterization}.
    To gain some intuition about \cref{def:consistentGeneration}, consider the universe $\cX=\Sigma^*$ and the countable collection of \emph{length-threshold} languages
    $\cL=\{L_1,L_2,\ldots\}$ where $L_\ell=\{x\in \Sigma^*: |x|\geq \ell\}$. 
    Suppose the target language is $K=L_{\ell^*}$ for some unknown $\ell^*\in \N$, and the adversary enumerates $K$ as $x_1,x_2,\ldots$. After observing $S_n=\{x_1,\ldots,x_n\}$, we must have $\ell^*\leq \min_{x\in S_n} |x|$. Hence every string of length strictly greater than $\min_{x\in S_n}|x|$ lies in \emph{every} candidate language consistent with $S_n$, and in particular lies in $K$. 
    A valid generator is therefore: for $n\geq 1$, let $m_n=\min_{x\in S_n}|x|$ and output the lexicographically smallest string $y\in \Sigma^{m_n+1}$ with $y\notin S_n$. %
    
    We will also frequently make use of the closure of a language collection, 
    as well as the closure dimension, which characterizes uniform generation,
    defined below.

    \begin{definition}[Closure of Language Collection and Closure Dimension \citep{li2024generation}]\label{def:closureDimension}
        Let $\cL$ be a language collection. The closure of $\cL$, denoted as $\cl(\cL)$, is the intersection of all the languages in $\cL$, \ie{},
        $ \cl(\cL) \coloneqq \bigcap_{L \in \cL} L$.
        The closure dimension of collection $\cL$ is the smallest $d \in \{-1\} \cup \N$ such that for any subcollection $\cL' \subseteq \cL$ of languages, either $\card{\cl(\cL')} = \infty$, or $\card{\cl(\cL')} \leq d$.
    \end{definition}
Throughout this paper, we allow our algorithms access to the languages in the form of a \emph{membership} oracle: for every $i\in \N$ and $x\in \cX $, we can decide whether $x\in L_i$.
Sometimes, we will also allow our algorithms to use the other existing oracles introduced by prior work.

\paragraph{Language identification.}
We now define the preceding notion of language identification.
\begin{definition}[Language Identification in the Limit~\citep{gold1967language}]
                Fix a collection $\cL=\{L_1,L_2,\dots\}$.
                An adversary chooses an unknown target language $K\in \cL$ and enumerates its strings as $x_1,x_2,\dots$ (ensuring that every $x\in K$ appears at some time).
                At each step $n$, the identification algorithm $\mathcal{I}$ observes $x_1,\dots,x_n$ and outputs an index $i_n$ as its current guess for the target.
                We say that $\mathcal{I}$ \emph{identifies $K$ in the limit} if there is a time $n^\star$ after which it never changes its mind and its stabilized guess is correct: for all $n\geq n^\star$ we have $i_n=i_{n^\star}$ and $L_{i_n}=K$.
                The collection $\cL$ is \emph{identifiable in the limit} if there exists an identification algorithm that succeeds for every $K\in\cL$ and every enumeration.
            \end{definition}
            Identification is a strictly stronger requirement than generation and is achievable only for restricted collections. 
            \citet{angluin1980inductive} provided a characterization of which collections are identifiable in the limit (see \cref{def:angliun-criterion}), showing that identifiability imposes stringent structural constraints on the collection.

        \paragraph{Stochastic model of identification.}
        Next, we describe the \emph{stochastic} model of language identification, introduced by \citet{angluin1988identifying} and studied by several follow-up works. 
        Here, the adversary chooses some target $K \in \cL$ and some \emph{distribution}
        $D$ with $\supp(D) = K.$ Then, in every timestep $t \in \N$ a new string
        is drawn i.i.d.\ from $K$ and is revealed to the learner, whose task
        is to figure out the index of the target. 
        Thus, a distribution $D$ is called \emph{valid} if $\supp(D) \in \cL$, \ie, it is entirely supported on a language in $\cL.$
        Naturally, the success criterion for an identification algorithm in this setting is that for every $K \in \cL$ and every $D$ with $\supp(D) = K,$ then the algorithm will make only finitely many mistakes identifying $K$ on an (infinite) i.i.d. stream from $D$, where the probability is both with respect to its internal randomness and the randomness of the stream. 
        The formal definition (\cref{def:stochastic-id}) is deferred to \cref{apx:prelim}.
        Interestingly, \citet{angluin1988identifying} showed that $\cL$ is identifiable in the stochastic setting if and only if it is identifiable in Gold's setting. 

    \enlargethispage{\baselineskip}

\subsection{Differential Privacy and Continual Release}
    Differential privacy~\citep{dwork2006calibrating} is a stability notion for randomized algorithms. 
    Intuitively, it protects users' data by ensuring that the output of the algorithm does not depend too strongly on any single individual's data. 
    \begin{definition}[Pure Differential Privacy]\label[definition]{def:pure-dp}
        Two datasets (or sets of strings) $X,X'\in \cX^n$ (for $n\in \N$) are \emph{neighboring} if they differ in exactly one coordinate.
        Fix an $\eps>0$. 
        A (randomized) algorithm $\generator_n\colon \cX^n \to \DeltaDist{\cX}$ is \emph{$\eps$-DP} if for all neighboring datasets $X$ and $X'$ and all measurable events $\cE \subseteq \DeltaDist{\cX}$, $\Pr\!\big[\generator_n(X)\in \cE\big] \leq e^{\eps}\cdot \Pr\!\big[\generator_n(X')\in \cE\big].$
    \end{definition}

    \noindent As language generation is a continual learning problem, with strings being continually generated, we must ensure that the entire process is private as opposed to a single output.
    This is precisely captured by the \emph{continual release}~\citep{dwork2010continual,chan2011continual} model of differential privacy.
    \begin{definition}[Continual Release]\label[definition]{def:continualRelease}
        Two streams (sequences) of strings $x_{1:n}, x_{1:n}'\in \cX^n$ (for $n\in \N\cup \sinbrace{\infty}$) are \emph{neighboring} if they differ at exactly one timestep.
        Fix an $\eps>0$. 
        A (randomized) algorithm $\generator_n\colon \cX^n \to \DeltaDist{\cX}^n$ that outputs a distribution $\DeltaDist{\cX}_i$ after observing $x_{1:i}$ ($i\in [n]$) is \emph{$\eps$-DP} if for all neighboring streams $x_{1:n}$ and $x_{1:n}'$ and all measurable events $\cE \subseteq \DeltaDist{\cX}^n$, $\Pr\!\big[\generator_n(x_{1:n})\in \cE\big] \leq e^{\eps}\cdot \Pr\!\big[\generator_n(x_{1:n}')\in \cE\big].$
    \end{definition}
    We emphasize that \Cref{def:continualRelease} requires the \emph{entire} output stream to satisfy DP, while \Cref{def:pure-dp} only requires the output at a \emph{single} timestep to satisfy DP.

\section{Proofs of \cref{thm:privateOnlineGeneration,infthm:identification-barriers}}
    In this section, we prove \cref{thm:privateOnlineGeneration,infthm:identification-barriers}; the remaining proofs appear in \cref{apx:proofs}. %

    \subsection{Proof of \Cref{thm:privateOnlineGeneration} (Private Generation for Countable Collections)}\label{apx:proofs:privateGeneration}

    Next, we prove \Cref{thm:privateOnlineGeneration}, which asserts that \Cref{alg:generation:approxIntersection} is $\eps$-DP in the continual release model and generates from any countable collection with probability 1.
    Before proving \Cref{thm:privateOnlineGeneration}, we present a useful lemma that reduces the task of privately generating valid unseen strings from the target language $K$ to computing an infinite subset of $K$.

    \enlargethispage{\baselineskip}
     
    \begin{lemma}\label{lem:elementGenerationFromSetGeneration}
        Let $\generator$ be an $\eps$-DP algorithm in the continual release model that, for any countable collection $\cL$, has the property that, with probability 1, there is some $n^\star\in \N$ after which $\generator$ computes an infinite subset $U_n\subseteq K$ of the target language $K$ for all $n\geq n^\star$.
        Then for any sequence of failure probabilities $\beta_n\in (0, 1)$, there is a data-oblivious postprocessing $M\circ \generator$ that is $\eps$-DP in the continual release model and outputs an unseen element $w_n\in U_n\setminus (x_{1:n}\cup w_{1:n-1})$ from $U_n\subseteq K$ at each $n\geq n^\star$ with probability $1-\beta_n$.
    \end{lemma}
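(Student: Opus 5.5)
The plan is to use post-processing invariance of differential privacy. The one subtlety is that the postprocessing map $M$ must not read the private stream $x_{1:n}$ directly. Otherwise excluding the seen strings $x_{1:n}$ would leak information. I would therefore build $M$ so that, at time $n$, it depends only on three things: the sets $U_1,\dots,U_n$ released by $\generator$, its own previous outputs $w_{1:n-1}$, and fresh independent randomness. The whole transcript $(U_n,w_n)_{n}$ is then a randomized function of $\generator$'s output stream alone. Because continual-release $\eps$-DP of $\generator$ bounds the likelihood ratio of every measurable event on its entire output stream, composing with a data-independent (randomized) map preserves $\eps$-DP in the continual release model. Formally, I would condition on $M$'s independent coins and apply the $e^{\eps}$ bound to the preimage event, then average over the coins.

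The construction of $M$ is as follows. Fix a canonical enumeration of $\cX$. At step $n$, let $P_n$ be the first $N_n$ elements of $U_n$ in this enumeration; this is possible because $U_n$ is infinite whenever $n\ge n^\star$, and if $U_n$ is finite we output arbitrarily. Remove $w_{1:n-1}$ from $P_n$; these are public to $M$. Then output $w_n$ drawn uniformly from the remaining set. Choose
\[
N_n \;\ge\; \frac{n}{\beta_n} + n,
\]
which is a data-independent quantity.

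For utility, fix $n\ge n^\star$, so $U_n\subseteq K$ is infinite. The remaining pool has size at least $N_n-(n-1)$. At most $n$ of its elements are among $x_{1:n}$, and these are the only possible bad choices, since every element of the pool lies in $U_n\setminus w_{1:n-1}$. Hence
\[
\Pr\bigl[w_n\in x_{1:n}\bigr]\;\le\; \frac{n}{N_n-n+1}\;\le\;\beta_n .
\]
On the complementary event, $w_n\in U_n\setminus(x_{1:n}\cup w_{1:n-1})$, as the lemma requires.

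The main obstacle is the privacy step. We have to argue cleanly that adaptivity to $M$'s own past outputs and the infinite horizon do not break the post-processing argument. I expect this to go through by viewing $M$ as a single measurable randomized kernel from the output-stream space of $\generator$ to the joint output space, and invoking the standard post-processing lemma at the level of full streams.
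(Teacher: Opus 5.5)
Your proposal is correct and follows the paper's proof. Both truncate $U_n$ to a data-independent finite prefix of size on the order of $n/\beta_n$ and sample uniformly from it, and both get privacy from post-processing because $M$ never reads $x_{1:n}$. The only difference is cosmetic: you delete the already-public outputs $w_{1:n-1}$ from the pool, so only the $n$ seen strings need budgeting, whereas the paper takes $|V_n| = 2n/\beta_n$ to absorb all of $x_{1:n}\cup w_{1:n-1}$ at once and leaves the post-processing argument implicit.
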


    \begin{proof}[Proof of \Cref{lem:elementGenerationFromSetGeneration}]
        At each time step $n\in \N$, $M$ simply extracts a finite subset $V_n\subseteq U_n$ of size $\card{V_n} = \frac{2n}{\beta_n}$ and samples a uniform random string from $V_n$.
        Since $\card{x_{1:n}\cup w_{1:n-1}}\leq 2n$, this avoids one of the observed strings with probability $1-\beta_n$, as desired.
    \end{proof}

    \noindent We are now ready to prove \Cref{thm:privateOnlineGeneration}.
    \begin{proof}[Proof of \Cref{thm:privateOnlineGeneration}]
        We analyze privacy and utility separately.
        
        \paragraph{Privacy analysis.}
        The algorithm accesses the private stream only when releasing noisy consistency counts $\tilde r_{i,t}$.
        This occurs at sparse steps $t_k = k^{6}$ for $k\in \N$, where it computes the vector of true counts $q^{(k)} \coloneqq (r_{1, t_k}, \dots, r_{k, t_k})$ and adds independent Laplace noise $\Lap(b_k)$ to each coordinate, where $b_k\coloneqq \nfrac{t_k^{1/3}}{\eps_0} = \nfrac{k^3}{\eps_0}$.
        
        Consider two neighboring streams $x_{1:\infty}, x_{1:\infty}'$ differing in exactly one element $x_\tau$. 
        For any specific step $t_k$, the $L_1$-sensitivity of the vector query $q^{(k)}$ is bounded by $\Delta_1(q^{(k)}) = \sum_{i=1}^k \abs{r_{i, t_k}(D) - r_{i, t_k}(D')} \leq k,$
        as removing or changing one element can change the set difference $x_{1:t_k}\setminus L_i$ by at most 1 element for each language $L_i$.
        By simple composition of differential privacy (\Cref{prop:simpleComposition}), the total privacy loss is
        \[
            \eps_{\text{total}} = \sum_{k=1}^\infty \frac{\Delta_1(q^{(k)})}{b_k} = \sum_{k=1}^\infty \frac{k}{k^3/\eps_0} = \eps_0 \sum_{k=1}^\infty \frac{1}{k^2} = \eps_0 \cdot \frac{\pi^2}{6} = \eps.
        \]
        Thus, the algorithm satisfies pure differential privacy.
    
        \paragraph{Utility analysis.}
        We must show that generation in the limit is achieved almost surely. This requires that for large enough $t$, the algorithm selects an infinite set of strings (intersection of languages) contained in the target language $K= L_{i^\star}$.
        $L_{i^\star}$ is consistent with the input stream.
        Intuitively, we show that (1) $L_{i^\star}$ maintains a bounded priority score, and (2) any language $L_j$ with ``high error'' will eventually have a priority score larger than $L_{i^\star}$.
        Define the ``bad'' event at step $t_k=k^{6}$ for language $i \leq k$ as the noise overwhelming the signal:
        \[
            E_{i, k} = \inbrace{ \abs{\tilde r_{i, t_k} - r_{i, t_k}} \geq \frac{t_k}{200 i^2} }.
        \]
        \noindent Using the tail bound for $\Lap(b_k)$, observing $t_k/b_k = k^{6} / (k^3/\eps_0) = \eps_0 k^{3}$, we have: $\Pr[E_{i, k}] = e^{ -\frac{t_k/(200 i^2)}{b_k} } = e^{ - \frac{\eps k^{3}}{200 i^2} }.$
        Since $i \leq k$, we have $k^{3}/i^2 \geq k$. Thus $\Pr[E_{i, k}] \leq \exp(-\Omega(\eps_0 k))$.
        Summing over at most $k^2$ events indexed by $k \geq 1$ and $1 \leq i \leq k$, we see the total failure probability is summable since $\sum\nolimits_{k\geq 1, i\leq k} \Pr\insquare{E_{i, k}}
            \leq \sum\nolimits_{k\geq 1} e^{\frac{-\eps_0 k}{200}} k^2
            < \infty.$
        Now, by the Borel--Cantelli lemma, with probability 1, at most a finite number of bad events occur.
        
        Let $\bar k$ be the largest index such that some $E_{i, k}$ occurs.
        Such a $\bar k$ exists almost surely from our work above.
        We know that $E_{i, k}$ for $k > \bar k, i\leq k$ does not occur.
        Conditioned on the complement of these bad events, the following hold.
         
        \begin{enumerate}[leftmargin=*,itemsep=-2pt]
            \item \textbf{Target Language $L_{i^\star}$:} The true error is $r_{i^\star, t} = 0$. For $t\geq \bar k^{6}$, the observed noisy error is $\tilde r_{i^\star, t} < \nfrac{t}{200 (i^\star)^2}$.
            The condition for incrementing the counter $\tilde N_{i^\star}$ is $\nfrac{\tilde r_{i^\star, t}}{t} > \nfrac{1}{200 (i^\star)^2}$.
            Since $\nfrac{1}{300} < \nfrac{1}{200}$, this condition is never met.
            Thus, $\tilde N_{i^\star}$ stops growing, and its priority $\tilde P_{i^\star}$ is bounded by a constant $P^\star\geq i^\star$.
            
            \item \textbf{High Error Languages:} For $t\geq \bar k^{6}$, we ensure that the following holds
            \begin{align*}
                \frac{r_{i, t}}{t} > \frac{1}{100i^2}
                &\implies \frac{\tilde r_{i, t}}{t} > \frac1{200i^2}
                \qquadand
                \frac{r_{i, t}}{t} \leq \frac{1}{300i^2}
                \implies \frac{\tilde r_{i, t}}{t} \leq \frac1{200i^2}\,. 
            \end{align*}
            Thus, any language violating the error threshold by a small margin will always have its counter incremented, and the counter for any language below the threshold by a small margin eventually stops changing.
        \end{enumerate}

    \begin{algorithm2e}
    \caption{Private Approximate Intersection}\label{alg:generation:approxIntersection}
    
    \KwData{Stream of data elements $x_1, x_2, \dots$ and a language collection $\{L_i\}_{i \geq 1}$}
    \KwResult{Privacy parameter $\eps > 0$}
    
    Initialize consistency counts $\tilde N_i \gets 0$ for all $i$\;

    Set $\eps_0 \gets \sfrac{6\eps}{\pi^2}$\;
    
    \For{$t \gets 1$ \KwTo $\infty$}{
        Receive new string $x_t$ and initialize counter $k\gets \sfloor{t^{\sfrac1{6}}}$\;

        \If{$t = k^{6}$}{
            \For{$i \gets 1$ \KwTo $k$}{
                Compute \textit{true} consistency-count $r_{i,t} \gets \card{x_{1:t} \setminus L_i}$\;
                    
                Compute \textit{noisy} consistency-count $\tilde r_{i,t} \gets \max\!\inbrace{0, r_{i,t} + \Lap(t^{1/3}/\eps_0)}$ %
    
                If noisy count is large, $\tilde r_{i,t}/t > \sfrac{1}{(200 i^2)}$, \mbox{then update consistency count $\tilde N_i \gets \tilde N_i + 1$}\;

                Update priority $\tilde P_i \gets i + \tilde N_i$\; %
            }
        }
    
        Re-order $\inbrace{L_1, \dots, L_k}$ in increasing priority, tie-breaking by index,
        as {$\sinbrace{L_{i_t(1)}, \dots, L_{i_t(k)}}$,
        \ie{}, for each $j\in [k-1]$,
        ensure either $\tilde P_{i_t(j)} < \tilde P_{i_{t}(j+1)}$
        or $\tilde P_{i_t(j)} = P_{i_{t}(j+1)}$ and $i_t(j) < i_{t}(j+1)$}\;
    
        Compute maximal incremental infinite intersection
        $J_t \gets \max\sinbrace{\bar j\in [k]: \card{\cap_{j=1}^{\bar j} L_{i_t(j)}} = \infty}$\; %
        
        Compute ${\bigcap}_{j\leq J_t} L_{i_t(j)} = \sinbrace{z_1, z_2, \dots}$ and output a uniformly random element {$w_n\in \sinbrace{z_1, \dots, z_{200t^3}}$}\;%
    }
    
    \end{algorithm2e}
         
        We argue that for all large enough $t$, languages with priority at most $P^\star$ (which include $L_{i^\star}$) must have summable error.
        Indeed, the set $\cL_{P^\star}\coloneqq \inbrace{L_i: i\leq P^\star}$
        is a finite set containing $L_{i^\star}$.
        Moreover, any $L_j\notin \cL_{P^\star}$ will have priority $\tilde P_j\geq P^\star$ so that it will always come after $L_{i^\star}$.
        By the finiteness of $\cL_{P^\star}$, for sufficiently large $t$, every $L_i\in \cL_{P^\star}$ whose error exceeds $\nfrac1{100i^2}$ infinitely often will have priority exceeding $P^\star$.
        Thus eventually, every language $L_i$ ordered before $L_{i^\star}$ must have summable error at most $\nfrac1{100i^2}$.
        
        Let $\cl(\cL(k))$ denote the intersection of all languages in $\cL(k)\subseteq \cL_{P^\star}$, the collection of languages ordered before $L_{i^\star}$ at step $t_k$, including $L_{i^\star}$ itself. 
        If we show that $\card{\cl(\cL(k))} = \infty$, we are done as the incremental intersection is guaranteed to include $L_{i^\star}$.
        Indeed, as $k\to \infty$,
        \begin{align*}
            \textstyle\card{\cl(\cL(k))}
            &\geq \card{x_{1:t_k}\cap \cl(\cL(k))}
            \geq t_k \inparen{1 - \sum\nolimits_{L_i\in \cL(k)} \frac{r_{i, t_k}}{t_k}} 
            \geq t_k \inparen{1 - \sum\nolimits_{i\geq 1} \frac{1}{100i^2}}
            \geq \frac{t_k}2\,.\textstyle
        \end{align*}
        In particular,
        $\card{\cl(\cL(k))} = \infty$.
    
        Finally, we apply \Cref{lem:elementGenerationFromSetGeneration} to see that sampling a uniform random string among a size $200t^3$ subset of an infinite subset of the target language repeats a seen element with summable probability $\frac1{100t^2}$ and preserves privacy.
        By another application of the Borel--Cantelli lemma, we see that with probability 1, \Cref{alg:generation:approxIntersection} outputs unseen elements after some finite time.
    \end{proof}

    \subsubsection{Non-Uniform Generation Guarantee}
    Next, we explain how the algorithm $\generator$ (\Cref{alg:generation:approxIntersection}) achieves non-uniform generation.
    In particular, for any $\eps > 0$ and $\beta > 0$, any countable collection $\cL$, and any target language $K \in \cL$, there exists $t = t(\eps, \beta, \cL, K)$ such that $\generator$ is $\eps$-DP in the continual release model, and for any enumeration of $K$, generates from $K$ after step $t$ with probability $1 - \beta$.

    \begin{remark}[Non-Uniform Generation]\label{rem:non-uniform-generation}
    Fix any $\eps, \beta > 0$, a collection $\cL$, and a target language $K = L_{i^\star}$. 
        Using the tail bound of Laplace distribution as in utility analysis of the proof above, there exists $t_1 = t_1(\eps, \beta, \cL, K)$ such that with probability at least $1 - \nicefrac{\beta}{2}$, we have $\nicefrac{\tilde{r}_{i^\star, t}}{t} \le \frac{1}{100 {i^\star}^2}$ for all $t \ge t_1$, in which case we have $\tilde{P}_{i^\star} = i^\star + \tilde{N}_{i^\star} \le i^\star + t_1$ and it stays fixed for all $t \ge t_1$. Using the tail bound of Laplace distribution again, there exists $t_2 = t_2(\eps, \beta, \cL, K, t_1)$ such that with probability at least $1 - \nicefrac{\beta}{2}$, we have $\left|\nicefrac{\tilde{r}_{i, t}}{t} - \nicefrac{r_{i, t}}{t}\right| \le \frac{1}{200 {i}^2}$ for all $i \le i^\star + t_1$ and $t \ge t_2$. 
        
        Now, conditional on these events which take place with probability at least $1 - \beta$, there exists $t_3 = t_3(\cL, K, t_1, t_2)$ such that the target language $K$ participates in the maximal incremental infinite intersection at step $t$ for all $t \ge t_3$. To see this, note that for $t_3$ large enough, the priority of the target language $K$ stays fixed and satisfies $\tilde{P}_{i^\star} \le i^\star + t_1$, and all the languages $L_i$ with indices at most $i^\star + t_1$ satisfy $\left|\nicefrac{\tilde{r}_{i, t}}{t} - \nicefrac{r_{i, t}}{t}\right| \le \frac{1}{200 {i}^2}$. Let $B := \max\{|\cl(\cL_S)|: S\subseteq [i^\star + t_1], |\cl(\cL_S)| < \infty\}$ denote the size of the maximum finite intersection of a subcollection of the languages with indices at most $i^\star + t_1$. For $t > 2B$, either all the languages with priorities at most the priority of $K$ have an infinite intersection, in which case we are done and $\generator$ starts generating from $K$ after step $t$, or the languages with priorities at most the priority of $K$ have a finite intersection and $\frac{r_{i,t}}{t} > \frac{1}{100i^2}$ for some ``bad'' language $L_i$ that comes before $K$ in the priority ordering at step $t$. However, in the latter case, the priority of ``bad'' language increments by $1$, and this can only happen for a finite number of steps depending on $i^\star$ and $t_1$, after which we end up in the first case.
    \end{remark}

    \subsection{Proof of \cref{infthm:identification-barriers} (Private Online Identification Lower Bound)}\label{apx:proofs:privateOnlineIdentificationLowerBound}

\begin{proof}[Proof of \Cref{infthm:identification-barriers}]
Fix $\eps>0$ and suppose for contradiction that there exists an $\eps$-DP continual release
identification algorithm $A$ for $\cL$.
Let $L_i,L_j\in\cL$ be distinct such that $|L_i\cap L_j|=\infty$ and $|L_i\setminus L_j|<\infty.$
Set $F  \coloneqq  L_i\setminus L_j,$ $m \coloneqq |F|<\infty,$ $I  \coloneqq  L_i\cap L_j,$ and $V \coloneqq L_j\setminus L_i.$
If $|V|<m$, swap the roles of $(i,j)$: since $m<\infty$ and $L_i\neq L_j$, after possibly swapping
we may assume throughout that
\begin{equation}\label{eq:wlog-V-ge-m}
|V|\geq m \quad (\text{in particular, }V\neq\emptyset).
\end{equation}
This will be useful because enumerations can replace the $m$ elements of $F$ by $m$ \emph{distinct} elements of $V$ while staying duplicate-free. 

\paragraph{Group privacy for continual release.}
By group privacy (\Cref{prop:groupPrivacy}), if $A$ is $\eps$-DP and two streams $x_{1:T},x'_{1:T}$ differ in at most $k$ time steps, then for every event
$\mathcal{E}$ over the first $T$ outputs,
\begin{equation}\label{eq:group-privacy-27-fixed}
\Pr[A(x)_{1:T}\in\mathcal{E}]
~\le~
e^{k\eps}\,\Pr[A(x')_{1:T}\in\mathcal{E}].
\end{equation}
Order $\cX$ canonically. Further, enumerate $F=\{f_1,\dots,f_m\}$ and $I=\{a_1,a_2,\dots\}$ in canonical order
and define a duplicate-free enumeration of $L_i$: $E  \coloneqq  (f_1,\dots,f_m,\ a_1,a_2,a_3,\dots).$
Since $A$ identifies $L_i$ on every (duplicate-free) enumeration, given $E$, with probability $1$, $A$ outputs the correct $i$ all but finitely many times. In particular, for $N_j^S(T)  \coloneqq  \big|\{t\leq T:\ A \text{ outputs index } j \text{ at time }t\text{ on input stream }S\}\big|,$
we have
\begin{equation}\label{eq:Nj-on-E-goes-to-0}
\Pr\big[N_j^E(T)\geq T/2\big]\xrightarrow[T\to\infty]{}0.
\end{equation}
Consider a canonical enumeration of  $V=L_j\setminus L_i$, \ie, $V=\{u_1,u_2,\dots\}$.
By \eqref{eq:wlog-V-ge-m}, $u_1,\dots,u_m$ exist and are distinct.
Define $E^{(0)}$ by replacing the first $m$ elements of $E$ with $u_1,\dots,u_m$:
\[
E^{(0)}  \coloneqq  (u_1,\dots,u_m,\ a_1,a_2,a_3,\dots).
\]
Then $E^{(0)}$ is duplicate-free and every element of $E^{(0)}$ lies in $L_j$.
Moreover, $E$ and $E^{(0)}$ differ in exactly $m$ positions, so applying \eqref{eq:group-privacy-27-fixed}
to the event $\{N_j(\infty)=\infty\}$, we get that
$A$ outputs $j$ only finitely many times almost surely on input $E^{(0)}$ as well. Hence,
\begin{equation}\label{eq:Nj-on-E0-goes-to-0}
\Pr\big[N_j^{E^{(0)}}(T)\geq T/2\big]\xrightarrow[T\to\infty]{}0.
\end{equation}
Now define  $\delta_k  \coloneqq  \frac{e^{-2k\eps}}{k^2}$.
Hence, it holds that 
\[
    \sum_{k=1}^\infty \delta_k e^{k\eps}
=
\sum_{k=1}^\infty \frac{e^{-k\eps}}{k^2}
<\infty.
\]
By \eqref{eq:Nj-on-E0-goes-to-0}, we can choose an increasing sequence of times
$T_1<T_2<\cdots$ such that for all $k\geq 1$,
\begin{equation}\label{eq:Tk-choice-fixed}
\Pr\big[N_j^{E^{(0)}}(T_k)\geq T_k/2\big]\leq \delta_k.
\end{equation}
We now perform an infinite sequence of \emph{single-coordinate} edits at the times $T_k$ that turns
$E^{(0)}$ into an enumeration of $L_j$, while ensuring that up to time $T_k$ we changed at most $k$
positions (so we can apply group privacy with parameter $k$).
Let $U^{(0)}  \coloneqq  L_j \setminus \{E^{(0)}_t : t\geq 1\}.$
Concretely, $U^{(0)}$ contains exactly the ``still-missing'' elements of $V$, namely
$U^{(0)}=\{u_{m+1},u_{m+2},\dots\}$ (possibly empty if $|V|=m$).
We define inductively streams $E^{(k)}$ and pools $U^{(k)}$ as follows.
Assume $E^{(k-1)}$ has been defined, is duplicate-free and contains only elements in $L_j$.
If $U^{(0)}=\emptyset$, then $E^{(0)}$ already enumerates $L_j$ (it contains all of $V$ and all of $I$),
and we may set $E'' \coloneqq E^{(0)}$ and skip the subsequent steps.
Otherwise, for each $k\geq 1$:
\begin{itemize}[itemsep=-2pt]
    \item Let $v_k$ be the \emph{smallest} element of $U^{(k-1)}$ in the canonical order.
    \item Let $y_k  \coloneqq  E^{(k-1)}_{T_k}$ be the element currently occupying position $T_k$.
    \item Define $E^{(k)}$ by a single replacement at time $T_k$: $E_t^{(k)}$ is $v_k$ if $t=T_k$ and, otherwise, it is $E_t^{(k-1)}$.
    \item Update the pool by reverting the insertion and deletion: $U^{(k)}  \coloneqq  \big(U^{(k-1)}\setminus\{v_k\}\big)\ \cup\ \{y_k\}.$
\end{itemize}
Next, we prove that this maintains duplicate freeness and correctness of the pool.
We claim by induction on $k$:
\begin{enumerate}[itemsep=-2pt]
    \item $E^{(k)}$ is duplicate-free and $E^{(k)}_t\in L_j$ for all $t$.
    \item $U^{(k)} = L_j \setminus \{E^{(k)}_t:t\geq 1\}$ (\ie{}, $U^{(k)}$ is exactly the set of elements of
    $L_j$ still missing from the current stream).
\end{enumerate}
This is immediate: by the inductive hypothesis, $U^{(k-1)}$ is disjoint from the range of $E^{(k-1)}$, so
$v_k\notin\{E^{(k-1)}_t\}$ and inserting $v_k$ introduces no duplicate; simultaneously we remove $y_k$ from
the stream and add it back to the pool, preserving both disjointness and the identity $U^{(k)}=L_j\setminus \mathrm{range}(E^{(k)})$.

Now define the limiting stream $E''$ as $v_k$ if $t=T_k$ for some $k$ and, otherwise, define it as $E_t^{(0)}$.
Since the $T_k$'s are strictly increasing, each coordinate is modified at most once, so $E''$ is well-defined.

\paragraph{$E''$ enumerates $L_j$.}
From the invariant $U^{(k)} = L_j\setminus \mathrm{range}(E^{(k)})$ and the fact that once a value is placed
at coordinate $T_k$ it is never changed again, we get the following dichotomy for any $x\in L_j$:
either $x$ is never placed out and it stays in the final stream, or it is placed out once (when it equals some
$y_k$) and then it enters the pool. Because at each phase we insert the \emph{smallest} element of the pool,
and because the canonical order is induced by an enumeration of $\cX$ (so each element has finitely many
predecessors), every fixed $x\in L_j$ can be bypassed only finitely many times before it becomes the smallest
pool element and is inserted at some later phase. Once inserted, it is never placed out again. Therefore every
$x\in L_j$ appears in $E''$ at some finite index, and $E''$ is a duplicate-free enumeration of $L_j$.

For each $k$, consider the event $\evF_k  \coloneqq  \big\{N_j^{E''}(T_k)\geq T_k/2\big\}.$
By construction, the prefixes $E''_{1:T_k}$ and $E^{(0)}_{1:T_k}$ differ in \emph{exactly} the $k$ positions
$T_1,\dots,T_k$, hence in at most $k$ positions. Applying group privacy \eqref{eq:group-privacy-27-fixed}
at horizon $T_k$ and then \eqref{eq:Tk-choice-fixed} yields
\[
\Pr[\evF_k\ \text{under input }E'']
\ \le\
e^{k\eps}\cdot \Pr\big[N_j^{E^{(0)}}(T_k)\geq T_k/2\big]
\ \le\
e^{k\eps}\delta_k.
\]
Since $\sum_{k\geq 1} e^{k\eps}\delta_k<\infty$, the first Borel--Cantelli lemma implies that
with probability $1$ only finitely many events $\evF_k$ occur when $A$ is run on input $E''$.

However, if $A$ identified $L_j$ on the valid enumeration $E''$, then with probability $1$ there would exist
a time $\tau$ such that $A$ outputs $j$ at every round $t\geq \tau$. Then for all $k$ with $T_k\geq 2\tau$, $N_j^{E''}(T_k)\ \ge\ T_k-\tau\ \ge\ T_k/2,$
so $\evF_k$ would occur for all sufficiently large $k$, and hence infinitely often, which is a contradiction.

Therefore, $A$ cannot identify $L_j$ on the enumeration $E''$, contradicting the assumption that $A$
identifies $\cL$ in the limit. This completes the proof.
\end{proof}

\section{Conclusion}
In this work we initiate the study of privacy in language generation and identification in the limit. 
Surprisingly, online generation remains achievable under strong privacy constraints, whereas online identification is severely restricted.
Unlike the online setting, in the stochastic model of \citet{angluin1988identifying}, private identification becomes achievable for all collections which are identifiable without privacy.
This reveals a strong separation between private online and stochastic identification, which is absent in non-private settings. 
Our work suggests several future directions:
including investigating more lenient variants of differential privacy~\citep{bun2016concentrated,mironov2017renyi}, exploring the interplay between privacy and \emph{breadth} \citep{kalavasis2025limits,kalavasis2024characterizations,charikar2024facets,kleinberg2025density,kleinberg2025partial,peale2024}, and studying if private algorithms can be designed for uncountable collections.

\subsection*{Acknowledgments}
We thank anonymous reviewers for comments that helped improve the presentation of this work.
Felix Zhou acknowledges the support of the Natural Sciences and Engineering Research Council of Canada (NSERC). Xifan Yu is supported in part by ONR Award N00014-24-1-2611.

\clearpage
\printbibliography

\appendix

\crefalias{section}{appendix}
\crefalias{subsection}{subappendix}
\crefalias{subsubsection}{subsubappendix}

\newpage

\section{Additional Preliminaries}\label{apx:prelim}
In this section, we present some additional preliminaries.

\subsection{Characterization of Language Identification in the Limit}
    \citet{angluin1980inductive} provided a condition that characterizes the subset of countable collections which are identifiable in the limit. 
    Informally, a collection satisfies Angluin's condition if for any language $L \in \cL$, there exists a finite subset $T_L$ (called a tell-tale set) that serves as a finite ``fingerprint'' allowing one to distinguish $L$ from any other language that contains $T_L$. 
    \begin{restatable}[Angluin's Condition \citep{angluin1980inductive}]{definition}{angluinsCondition}\label[definition]{def:angliun-criterion}
        \label{def:telltales}
                Fix a language collection $\cL = \{L_1, L_2, \dots\}$.
                The collection $\cL$ is said to satisfy Angluin's condition if for any index $i$, there is a tell-tale, \ie{}, a finite set of strings $T_i$ such that $T_i$ is a subset of $L_i$, \ie{}, $T_i\subseteq L_i$, and the following holds:
                \vspace{-1mm}
                \begin{center}
                    \centering 
                    For all $j\geq 1$, if $L_j\supseteq T_i$, then $L_j$ is not a proper subset of $L_i$.
                \end{center}
    \end{restatable}
    \noindent Roughly, this condition ensures that after observing enough examples from the target language, one can rule out all incorrect languages. 
    The main result of \citet{angluin1980inductive} is as follows:
    \begin{theorem}
    [Characterization of Identification in the Limit \citep{angluin1980inductive}]
    \label[theorem]{thm:angluin-id-limit}
    The following holds for any countable collection of languages $\cL.$
    \begin{enumerate}[leftmargin=15pt]
        \item $\cL$ is identifiable in the limit if it satisfies Angluin's condition and one has access to the tell-tale oracle.
        \item If there is an algorithm that identifies $\cL$ in the limit, then Angluin's condition is true and the tell-tale oracle can be implemented.
    \end{enumerate}
    \end{theorem}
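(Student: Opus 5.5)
The two directions call for different arguments: an explicit tell-tale-driven learner for the first, and a locking-sequence argument for the second. Throughout, "identifying $K$" means outputting an index of a language equal to $K$. Without loss of generality we fix $i^\star$ to be the smallest index with $L_{i^\star}=K$.

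\textbf{Part 1 (sufficiency).} We use the tell-tale oracle, which lists the finite set $T_i$ for each $i$, together with the membership oracle. At step $n$, with $S_n=\{x_1,\dots,x_n\}$, the learner outputs the smallest $i\le n$ satisfying two conditions: $T_i\subseteq S_n$, and $S_n\subseteq L_i$. Both can be checked with finitely many oracle calls. If no such $i$ exists, it outputs an arbitrary index.

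To show convergence to $i^\star$, we handle $i^\star$ itself and then each of the finitely many $j<i^\star$.
\begin{enumerate}
\item Since $T_{i^\star}\subseteq K$ is finite, it is contained in $S_n$ for all large $n$. Also $S_n\subseteq K$ always holds, so $i^\star$ is eventually always a candidate.
\item For $j<i^\star$, minimality of $i^\star$ gives $L_j\neq K$, and there are two cases.
\begin{itemize}
\item If $K\not\subseteq L_j$, some $x\in K\setminus L_j$ eventually appears in the stream. After that, $L_j$ is inconsistent forever.
\item If $K\subsetneq L_j$, we claim $T_j\not\subseteq K$. Otherwise $K\supseteq T_j$ while $K$ is a proper subset of $L_j$, contradicting the defining property of the tell-tale $T_j$. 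Hence $T_j\not\subseteq S_n$ for every $n$, and $j$ is never selected.
\end{itemize}
\end{enumerate}
A finite union of these "eventually" statements shows the output is $i^\star$ from some time on.

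\textbf{Part 2 (necessity).} Let $A$ identify $\cL$; for the classical statement, take $A$ deterministic as in Gold's model. The key tool is a \emph{locking sequence}: for each $i$, a finite sequence $\sigma$ of elements of $L_i$ such that $A(\sigma)$ is an index of $L_i$, and $A(\sigma\tau)=A(\sigma)$ for every finite $\tau$ with entries in $L_i$.

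We prove existence by contradiction, following Blum--Blum. If no locking sequence exists, then from any finite $\sigma$ we can extend by some $\tau\subseteq L_i$ after which $A$ either changes its guess or is wrong. We alternate such extensions with appending the next element of a fixed enumeration of $L_i$. This produces an enumeration of $L_i$ on which $A$ changes its mind or errs infinitely often, contradicting identification.

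Set $T_i:=\mathrm{content}(\sigma)$, which is a finite subset of $L_i$. Suppose some $L_j\supseteq T_i$ satisfies $L_j\subsetneq L_i$. Enumerate $L_j$ by first listing $\sigma$ and then any enumeration of $L_j$. Every extension lies in $L_i$, so $A$ is locked on an index of $L_i\neq L_j$ forever, and $A$ fails on $L_j$. This contradiction shows that Angluin's condition holds.

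\textbf{Main obstacle.} The delicate point is the last clause of Part 2: that the tell-tale oracle can be implemented, i.e.\ that $T_i$ can be found uniformly in $i$. The "for all extensions $\tau$" condition in the locking-sequence definition is not decidable in finite time. My first attempt would follow Angluin: replace the locking-sequence search by a uniform limiting procedure that enumerates candidate finite sets, using $A$ and the membership oracle, and argue via the above contradiction that the procedure outputs a valid tell-tale. I expect the bookkeeping needed to make this uniform and effective to be the hardest step. The existence argument itself is routine diagonalization.
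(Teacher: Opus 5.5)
The paper gives no proof of this theorem (it is quoted from Angluin), so I am measuring your proposal against Angluin's argument.

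\textbf{What holds.} Your Part~1 is correct for an oracle that hands you $T_i$ as an explicit finite set. The non-effective half of Part~2 is also correct: the locking sequence by diagonalization, and hence the existence of tell-tales for a deterministic learner.

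\textbf{The gap.} It is the clause you flag and leave open: that the tell-tale oracle can be implemented. Your locking sequence is obtained by contradiction, so it only shows that $T_i$ exists. The effective half is the substance of item~2, not bookkeeping, and what is missing is Angluin's search, run uniformly in $i$.
\begin{enumerate}
\item Using membership queries, compute an enumeration $s_1,s_2,\dots$ of $L_i$. Start with $\sigma=(s_1)$ and list $s_1$.
\item At stage $n$, dovetail over all finite $\tau$ with entries in $L_i$, looking for one with $A(\sigma\tau)\neq A(\sigma)$.
\item If one is found, replace $\sigma$ by $\sigma\tau s_{n+1}$, list its new elements, and go to the next stage.
\end{enumerate}
If every search succeeded, the limit of the $\sigma$'s would be an enumeration of $L_i$ on which $A$ changes its mind infinitely often. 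Hence some search runs forever. The listed set is then $\mathrm{content}(\sigma)$ for a $\sigma$ with $A(\sigma\tau)=A(\sigma)$ for every such $\tau$. Your closing argument, applied to $\sigma$ followed by an enumeration of $L_j$ and to $\sigma$ followed by an enumeration of $L_i$, shows it is a tell-tale.

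\textbf{Consequence for Part~1.} This procedure lists $T_i$ but never halts, since it cannot certify that the last search will fail. That is unavoidable, and it makes your two parts inconsistent: Part~1 needs the finite set $T_i$ so that $T_i\subseteq S_n$ can be checked. A halting oracle of that kind cannot in general be extracted from a learner. Take $A_k=\{(k,x,0):x\in\N\}$ and $B_k=A_k\cup\{(k,s,1):\text{machine } k \text{ halts in exactly } s \text{ steps}\}$.
\begin{itemize}
\item Membership is uniformly decidable.
\item The learner ``output $B_k$ if some $(k,s,1)$ has appeared, else $A_k$'' identifies the family.
\item Any tell-tale of $B_k$ must contain $(k,s,1)$ whenever machine $k$ halts, since otherwise it lies inside $A_k\subsetneq B_k$. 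When $k$ does not halt, $B_k=A_k$ contains no such triple. So computing the tell-tale as a finite set decides the halting problem.
\end{itemize}
Redo Part~1 with an enumerating oracle instead. Let $T_i[n]$ be the part of $T_i$ listed within $n$ steps, and output the least $i\le n$ with $T_i[n]\subseteq S_n\subseteq L_i$. Your case analysis carries over:
\begin{itemize}
\item If $K\subsetneq L_j$ for some $j<i^\star$, some $w\in T_j\setminus K$ is eventually listed, and from then on $j$ is excluded forever.
\item $T_{i^\star}[n]\subseteq T_{i^\star}\subseteq S_n$ for all large $n$.
\end{itemize}
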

    The above tight characterization shows that language identification is information-theoretically impossible even for simple collections of languages, such as the collection of all regular languages. 
    {Crucially, access to the tell-tale oracle is necessary for identification in the limit (its existence alone is not sufficient); see Theorem 2 in \cite{angluin1980inductive}.} 
   
    \subsection{Stochastic Identification in the Limit}
    In this section, we formally define language identification in the limit in a stochastic setting. 

    \begin{definition}[Stochastic Identification in the Limit]\label{def:stochastic-id}
        Fix a collection $\cL=\{L_1,L_2,\dots\}$.
        An adversary chooses an unknown target language $K\in \cL$ and a distribution $D$ supported on $K$.
        At each step $n$, the identification algorithm $\mathcal{I}$ observes $x_1,\dots,x_n\sim_{\iid} D$ and outputs an index $i_n$ as its current guess for the target.
        We say that $\mathcal{I}$ \emph{identifies $K$ in the limit} if there is a time $n^\star$ after which it never changes its mind and its stabilized guess is correct: for all $n\geq n^\star$ we have $i_n=i_{n^\star}$ and $L_{i_n}=K$.
        The collection $\cL$ is \emph{identifiable in the limit} if there exists an identification algorithm that succeeds for every $K\in\cL$ and every distribution $D$ supported on $K$.
    \end{definition}

    \begin{remark}[Achieving Identification with Randomness]
                Gold's model of language identification in the limit requires the learner to eventually stabilize on a single correct index $i^\star$.
                At first glance, this is in tension with differential privacy, since any non-trivial DP learner must randomize and therefore outputs an incorrect index with positive probability.
                This, however, can be resolved: it suffices to ensure that the probability of outputting an incorrect index at round $n$ is summable over $n$.
                The Borel--Cantelli lemma then implies that, with probability $1$, only finitely many incorrect outputs occur, so the learner stabilizes to the correct index outside a null event.
            \end{remark}

\subsection{Borel--Cantelli Lemma} 
Next, we present a well-known result due to Borel and Cantelli which is useful for ensuring our private algorithms only make a finite number of ``mistakes'' with probability 1.

\begin{lemma}[First Borel--Cantelli Lemma]\label{lem:first-Borel--Cantelli}
    Let $\inbrace{\cE_n}_{n \in \N}$ be a sequence of events. If $\sum_{n \in \N} \Pr[\cE_n] < \infty,$ then the probability that infinitely many of them occur is 0, that is $\Pr\insquare{\limsup_{n \rightarrow \infty} \cE_n} = 0.$
\end{lemma}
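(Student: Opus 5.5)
Since $\sum_n \Pr[\cE_n]<\infty$, the tail sums $\sum_{n\ge N}\Pr[\cE_n]$ go to $0$ as $N\to\infty$. The whole argument rests on this one fact.

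First, I will express $\limsup_{n\to\infty}\cE_n$ set-theoretically as
\[
\limsup_{n\to\infty}\cE_n=\bigcap_{N\ge 1}\bigcup_{n\ge N}\cE_n .
\]
This is exactly the event that infinitely many $\cE_n$ occur. Indeed, an outcome $\omega$ lies in every tail union $\bigcup_{n\ge N}\cE_n$ if and only if, for each $N$, there is some $n\ge N$ with $\omega\in\cE_n$. That condition holds precisely when $\omega$ belongs to infinitely many $\cE_n$.

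Second, I fix an arbitrary $N$ and use monotonicity together with countable subadditivity of $\Pr$:
\[
\Pr\Big[\limsup_{n}\cE_n\Big]\le \Pr\Big[\bigcup_{n\ge N}\cE_n\Big]\le \sum_{n\ge N}\Pr[\cE_n].
\]
The first inequality holds because the $\limsup$ event is contained in each tail union.

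Third, since the series converges, the right-hand side is the tail of a convergent series of nonnegative terms and so tends to $0$ as $N\to\infty$. The left-hand side does not depend on $N$, so it must equal $0$.

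There is no real obstacle here. The only point needing care is the first step, identifying ``infinitely many occur'' with the $\limsup$ set, and that follows directly from unpacking the quantifiers. The remaining steps use only the basic measure axioms.
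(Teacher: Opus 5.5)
Your proof is correct and complete. It is the standard argument: identify the event with $\bigcap_{N}\bigcup_{n\ge N}\cE_n$, bound its probability by each tail union using monotonicity and countable subadditivity, and let the tail of the convergent series go to zero. The paper gives no proof of its own here; it states the lemma as a well-known result, so there is nothing further to compare.
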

The previous result has a \textit{partial} converse, which we omit here as we do not need it. %

    \subsection{Privacy Tools}
    Some useful properties of DP include composition, post-processing, and group privacy.
    \begin{proposition}[Simple Composition; \citet{dwork2014algorithmic}]\label{prop:simpleComposition}
        Let $M_1: \cX^*\to \cY, M_2: \cX^*\times \cY\to \cZ$ be $\eps_1$-DP and $\eps_2$-DP, respectively.
        Then the composition $M_2(\cdot, M_1(\cdot)): \cX^*\to \cZ$ is $(\eps_1+\eps_2)$-DP.
    \end{proposition}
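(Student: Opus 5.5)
The claim is that if $M_1$ is $\eps_1$-DP and, for every fixed $y$, the map $x\mapsto M_2(x,y)$ is $\eps_2$-DP, then the adaptive composition $x\mapsto M_2(x,M_1(x))$ is $(\eps_1+\eps_2)$-DP. The plan is to compare output probabilities directly by conditioning on the intermediate output $y=M_1(x)$.

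First I treat the case where $\cY$ is countable, which covers every use in this paper (noisy counts can be discretized or handled via densities in the same way). Fix neighboring inputs $x,x'$ and a measurable event $\cE\subseteq\cZ$. Since the internal randomness of $M_2$ is independent of that of $M_1$, conditioning on $y$ gives
\[
\Pr[M_2(x,M_1(x))\in\cE]=\sum_{y\in\cY}\Pr[M_1(x)=y]\cdot\Pr[M_2(x,y)\in\cE].
\]
I then bound each factor of each summand by the corresponding DP guarantee: $\Pr[M_1(x)=y]\le e^{\eps_1}\Pr[M_1(x')=y]$ and $\Pr[M_2(x,y)\in\cE]\le e^{\eps_2}\Pr[M_2(x',y)\in\cE]$. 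Multiplying and summing gives $e^{\eps_1+\eps_2}\Pr[M_2(x',M_1(x'))\in\cE]$, which is exactly the DP inequality.

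For general measurable $\cY$, I replace the sum by an integral against the law $\mu_x$ of $M_1(x)$. Pure DP of $M_1$ gives $\mu_x(B)\le e^{\eps_1}\mu_{x'}(B)$ for every measurable $B$. So $\mu_x\ll\mu_{x'}$, and the Radon--Nikodym derivative satisfies $d\mu_x/d\mu_{x'}\le e^{\eps_1}$ almost everywhere. The function $g(y)=\Pr[M_2(x,y)\in\cE]$ is measurable, assuming $M_2$ is given by a Markov kernel. Then
\[
\int g\,d\mu_x\le e^{\eps_1}\int g\,d\mu_{x'}\le e^{\eps_1+\eps_2}\int \Pr[M_2(x',y)\in\cE]\,d\mu_{x'}(y).
\]

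The main obstacle is measure-theoretic bookkeeping: making sure the density bound follows from the pure-DP inequality on events, and that the conditioning step is valid. Both are standard, so I will present the discrete computation in full and only sketch the general case.
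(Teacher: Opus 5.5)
Your argument is correct. Conditioning on the intermediate output $y=M_1(x)$ and multiplying the two pointwise likelihood-ratio bounds is the standard proof; for general $\cY$ you replace the sum by the density bound $d\mu_x/d\mu_{x'}\le e^{\eps_1}$.

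The paper does not prove this proposition but cites \citet{dwork2014algorithmic}. Their proof is the same ratio-multiplication argument, stated there for discrete ranges and non-adaptive composition. Your version additionally handles the adaptive dependence of $M_2$ on $M_1(x)$ and non-discrete $\cY$ directly.
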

    \vspace{-7mm}
    \begin{proposition}[Post-Processing; \citet{dwork2014algorithmic}]\label{prop:postProcessing}
        Let $M: \cX^*\to \cY$ be $\eps$-DP and $f: \cY\to \cZ$ be any data-independent function.
        Then $f(M(\cdot)): \cX^*\to \cZ$ is $\eps$-DP.
    \end{proposition}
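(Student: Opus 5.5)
The plan is to reduce every event about the post-processed output to an event about the original output $M(\cdot)$, where the $\eps$-DP inequality applies directly. Fix neighboring inputs $X, X'$ (datasets as in \cref{def:pure-dp}, or streams as in \cref{def:continualRelease}; in the latter case $\cY$ is the space of entire output streams and the argument is identical). Fix a measurable event $\cE \subseteq \cZ$.

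First I will handle a deterministic $f$, which I assume is measurable. The preimage $f^{-1}(\cE) \subseteq \cY$ is then a measurable event, and $\{f(M(X)) \in \cE\} = \{M(X) \in f^{-1}(\cE)\}$. The $\eps$-DP guarantee of $M$, applied to the event $f^{-1}(\cE)$, gives
\[
\Pr[f(M(X))\in \cE] = \Pr[M(X)\in f^{-1}(\cE)] \le e^{\eps}\Pr[M(X')\in f^{-1}(\cE)] = e^{\eps}\Pr[f(M(X'))\in \cE].
\]

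Next, to cover randomized data-independent post-processing (such as the uniform sampling in \cref{lem:elementGenerationFromSetGeneration}), I will write $f(y) = g(y, R)$. Here $R$ is auxiliary randomness independent of both the data and the internal coins of $M$. Conditioning on $R = r$, the map $g(\cdot, r)$ is deterministic, so the previous step gives
\[
\Pr[g(M(X), r) \in \cE] \le e^{\eps}\,\Pr[g(M(X'), r) \in \cE].
\]
Integrating this inequality over the law of $R$ preserves it, by independence and Fubini, which yields the claim.

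The only delicate point is measurability: one needs $f^{-1}(\cE)$, or the jointly measurable section $\{y : g(y, r) \in \cE\}$, to be an event. In our setting every output space is countable (indices, finite sets of strings) or a countable product of such spaces, so this is immediate. Beyond that the argument is routine; data-independence of $f$ is exactly what makes the preimage event the same for $X$ and $X'$.
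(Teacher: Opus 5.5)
Your argument is correct and is essentially the standard proof: pull the event back through a deterministic $f$, then average the resulting inequality over independent coins for a randomized $f$. This is the argument in Dwork and Roth, whose result the paper cites rather than reproving; your measurability remark is appropriate given the countable output spaces used here.
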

    \vspace{-7mm}
    \begin{proposition}[Group Privacy; \citet{dwork2014algorithmic}]\label{prop:groupPrivacy}
        Let $M: \cX^*\to \cY$ be $\eps$-DP.
        For all datasets $X, X'$ that differ by at most $k\geq 1$ elements, and all measurable events $\cE \subseteq \DeltaDist{\cY}$, 
        \[
            \Pr\!\big[M(X)\in \cE\big] \leq e^{k\eps}\cdot \Pr\!\big[M(X')\in \cE\big]\,.
        \]
    \end{proposition}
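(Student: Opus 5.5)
The plan is a hybrid (telescoping) argument: interpolate between $X$ and $X'$ through a chain of intermediate datasets, each neighboring the next, and apply the single-step $\eps$-DP guarantee of \Cref{def:pure-dp} (or \Cref{def:continualRelease} for streams) once per link.

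\textbf{Step 1: build the chain.} Suppose $X=(x_1,\dots,x_n)$ and $X'=(x'_1,\dots,x'_n)$ have the same length $n\in\N\cup\{\infty\}$ and differ in the coordinates $S=\{s_1<\dots<s_j\}$ with $j\le k$. If $j=0$ then $X=X'$ and the claim is trivial since $e^{k\eps}\ge 1$. Otherwise define $X^{(0)}\coloneqq X$ and, for $\ell\in[j]$, let $X^{(\ell)}$ be $X^{(\ell-1)}$ with coordinate $s_\ell$ replaced by $x'_{s_\ell}$. Then $X^{(j)}=X'$. Moreover $X^{(\ell-1)}$ and $X^{(\ell)}$ differ in exactly the coordinate $s_\ell$, because $x_{s_\ell}\neq x'_{s_\ell}$, so they are neighboring. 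For streams the same construction works verbatim: each hybrid is still a valid input stream, and consecutive hybrids differ at exactly one timestep.

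\textbf{Step 2: telescope.} Fix a measurable event $\cE$. Applying the $\eps$-DP inequality to each neighboring pair gives
\[
\Pr[M(X^{(\ell-1)})\in\cE]\le e^{\eps}\Pr[M(X^{(\ell)})\in\cE]
\qquad\text{for each }\ell\in[j].
\]
Chaining these $j$ inequalities yields
\[
\Pr[M(X)\in\cE]\le e^{j\eps}\Pr[M(X')\in\cE]\le e^{k\eps}\Pr[M(X')\in\cE].
\]

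\textbf{Main obstacle.} There is essentially none; the only care needed is in the setup. The intermediate hybrids must be legitimate inputs, which holds because the neighboring relation only requires differing at one position and places no constraints such as duplicate-freeness. Degenerate links where a coordinate would not actually change cannot arise here, since $S$ is exactly the set of differing coordinates. Because the inequality is applied event by event for the same fixed $\cE$, no measurability issues arise. This is precisely the form invoked in \eqref{eq:group-privacy-27-fixed}, applied to prefixes of length $T$.
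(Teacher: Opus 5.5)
Your proof is correct and is the standard argument. The paper gives no proof of its own for this proposition; it cites Dwork and Roth, where group privacy is derived in the same way, by chaining the single-step $\eps$-DP inequality along a path of at most $k$ neighboring datasets. Your extra checks are appropriate given the paper's ``differ in exactly one timestep'' neighboring relation and its application to infinite streams and prefixes in \eqref{eq:group-privacy-27-fixed}. These are that each hybrid is a legitimate input (duplicates allowed), that consecutive hybrids differ in exactly one coordinate, and that only finitely many coordinates differ.
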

    One of the most ubiquitous tools for pure DP is the exponential mechanism.
    \begin{theorem}[Exponential Mechanism; \citet{talwar2007exponential}]\label{thm:exp-mechanism}
        Let $R$ be a collection of elements and $u: \cX^*\times R\to \R$ a score function with sensitivity $\Delta_u$ across neighboring datasets.
        Then the following exponential mechanism preserves $\eps$-DP:
        select an element $r\in R$ with probability %
        $\propto~\exp\inparen{\frac{\eps\cdot u(X, r)}{2\Delta_u}}.$
    \end{theorem}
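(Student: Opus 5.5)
The statement to prove is the exponential mechanism guarantee (\cref{thm:exp-mechanism}): sampling $r\in R$ with probability proportional to $\exp\inparen{\eps u(X,r)/(2\Delta_u)}$ is $\eps$-DP. I will compare output probabilities pointwise between neighboring datasets, and bound the normalizing constants separately.

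Fix neighboring datasets $X, X'$, so that $\abs{u(X,r)-u(X',r)}\le \Delta_u$ for every $r\in R$. Write
\[
Z(X)=\sum_{r\in R}\exp\inparen{\frac{\eps\, u(X,r)}{2\Delta_u}},
\]
which I assume is finite so that the mechanism is well defined. For countable $R$ this is the only well-definedness requirement; the later applications handle it via data-independent base measures.

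\textbf{Step 1 (numerator).} For each fixed $r$, the sensitivity bound gives
\[
\exp\inparen{\frac{\eps\, u(X,r)}{2\Delta_u}}\le e^{\eps/2}\exp\inparen{\frac{\eps\, u(X',r)}{2\Delta_u}}.
\]

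\textbf{Step 2 (normalizer).} Summing the symmetric bound over $r$ gives $Z(X')\le e^{\eps/2}Z(X)$, or equivalently $1/Z(X)\le e^{\eps/2}/Z(X')$. In particular, $Z(X)<\infty$ implies $Z(X')<\infty$.

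\textbf{Step 3 (combine).} Multiplying Steps 1 and 2,
\[
\Pr[M(X)=r]\le e^{\eps}\Pr[M(X')=r]\qquad\text{for every } r\in R.
\]
For an arbitrary event $\cE\subseteq R$, sum this bound over $r\in\cE$, which is a countable sum. If $R$ is a general measurable space, the same argument applies with densities with respect to a base measure, replacing sums by integrals.

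The only subtle point is the normalizer. One must notice that $\Delta_u$ bounds the change in the score of every candidate simultaneously, so $Z$ changes by at most a factor of $e^{\eps/2}$; this is where the factor of $2$ in the exponent is spent. Finiteness of $Z$ is a standing assumption rather than something to prove. I expect no real obstacle beyond this.
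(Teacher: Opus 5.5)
Your proof is correct and is the standard argument for this result. The paper gives no proof and only cites McSherry--Talwar: the numerator ratio and the normalizer ratio are each bounded by $e^{\eps/2}$, and the same computation carries over unchanged when the weights are multiplied by a data-independent base measure, which is the form actually used in \cref{alg:dp-telltale-em}.
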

    In fact, the standard Laplace mechanism for numerical queries can be viewed as a special case of the exponential mechanism.
    \begin{proposition}[Laplace Mechanism; \citet{dwork2014algorithmic}]\label{prop:laplaceMechanism}
        Let $f: \cX^*\to \R$ be a numerical query with sensitivity $\Delta_f$ across neighboring datasets.
        Then the Laplace mechanism, which outputs $f(X) + \Lap(\Delta_f/\eps)$, preserves $\eps$-DP.
    \end{proposition}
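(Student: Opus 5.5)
The plan is a direct density-ratio computation. I will not go through \cref{thm:exp-mechanism}: viewing the Laplace mechanism as an exponential mechanism with score $u(X,y)=-\abs{y-f(X)}$ would only give the weaker scale $2\Delta_f/\eps$, because of the factor $2$ in that theorem.

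\textbf{Step 1 (write the output density).} Fix neighboring datasets $X,X'$. The output $Y=f(X)+\Lap(b)$ with $b=\Delta_f/\eps$ has Lebesgue density
\[
p_X(y)=\frac{1}{2b}\exp\!\inparen{-\frac{\abs{y-f(X)}}{b}},
\]
and $p_{X'}$ is the same expression with $f(X')$ in place of $f(X)$.

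\textbf{Step 2 (bound the pointwise ratio).} For every $y\in\R$, the reverse triangle inequality gives $\abs{y-f(X')}-\abs{y-f(X)}\le \abs{f(X)-f(X')}\le \Delta_f$. Hence
\[
\frac{p_X(y)}{p_{X'}(y)}=\exp\!\inparen{\frac{\abs{y-f(X')}-\abs{y-f(X)}}{b}}\le \exp\!\inparen{\frac{\Delta_f}{b}}=e^{\eps}.
\]

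\textbf{Step 3 (integrate over events).} For any measurable $\cE\subseteq\R$, integrate the pointwise bound:
\[
\Pr[Y\in\cE]=\int_{\cE}p_X\le e^{\eps}\int_{\cE}p_{X'}=e^{\eps}\Pr[Y'\in\cE],
\]
where $Y'=f(X')+\Lap(b)$. This is exactly the requirement of \cref{def:pure-dp}.

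\textbf{Vector extension.} The paper also applies the mechanism to vector queries with $L_1$-sensitivity, in the privacy analysis of \cref{alg:generation:approxIntersection}. I would add the following remark for that case. With independent noise in each coordinate, the joint density is a product, so the log-ratio is $\sum_i \big(\abs{y_i-f_i(X')}-\abs{y_i-f_i(X)}\big)/b$. This is at most $\norm{f(X)-f(X')}_1/b\le \eps$ when $b=\Delta_1/\eps$.

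There is no real obstacle here. The only care needed is to work with densities and integrate over arbitrary measurable events rather than point probabilities, and to use the reverse triangle inequality so that the bound holds uniformly in $y$.
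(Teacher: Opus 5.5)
Your proof is correct. The paper gives no proof of this proposition (it cites Dwork and Roth) and only remarks that the Laplace mechanism is a special case of the exponential mechanism. Your direct density-ratio argument is the standard textbook proof, and your caution about that remark is justified: instantiating \cref{thm:exp-mechanism} literally with score $u(X,y)=-\abs{y-f(X)}$ and $\Delta_u=\Delta_f$ gives noise of scale $2\Delta_f/\eps$, not the stated $\Delta_f/\eps$. The factor $2$ in the exponential mechanism pays for a normalizing constant that may depend on the data. Your computation shows why it can be dropped here: by translation invariance the normalizer is the constant $2b$, independent of $X$. So only the ratio of numerators matters, and the reverse triangle inequality bounds it by $e^{\Delta_f/b}=e^{\eps}$. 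The direct route therefore gives the exact constant, whereas the exponential-mechanism route matches the paper's other tools but loses a factor of $2$ unless one re-derives it using that observation. Your vector extension with $L_1$-sensitivity is exactly the form the paper actually uses in the privacy analysis of \cref{alg:generation:approxIntersection}, where it charges $\Delta_1(q^{(k)})/b_k$ per release. The only case your density argument does not literally cover is $\Delta_f=0$, where $\Lap(0)$ is a point mass. There $f(X)=f(X')$ for all neighbors, so the claim is immediate.
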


\section{Additional Related Work}\label{apx:related-work}

    Below we overview some additional works related to generation in the limit \citep{kleinberg2024language}
            \begin{itemize}[leftmargin=12pt]
                \item \textbf{Robustness to Noise:}
                    While the model of \citet{kleinberg2024language} assumes that the adversary introduces no errors or omissions in the input stream, recent work has relaxed this requirement.
                    \citet{raman2025noisy} allow the adversary to introduce a finite number of errors in the input stream and show that generation in the limit remains possible for all countable collections. 
                    \citet{bai2025noise} allow the adversary to omit elements of the target language from the stream and, as a corollary, show that all countable collections remain generatable even with an infinite number of omissions.
                    \citet{mehrotra2025contamination} extend both of these directions, considering a model where the adversary can introduce both forms of contamination (insert ``noisy'' elements and omit elements from the target language) and show that all countable collections remain generatable even with an infinite amount of contamination, provided the frequency of noise is ``controlled.''
                    Our private generation algorithm builds on a method of \citet{mehrotra2025contamination}, and interestingly inherits the same tolerance to contamination; in particular, our algorithm is both private and robust to noisy inputs and omissions.

                \item \textbf{Language Generation with Breadth:}
                The algorithm of \citet{kleinberg2024language} eventually outputs only in-language strings (and hence eventually stops outputting elements outside of $K$), but this can come at the cost of \emph{breadth}---the ability to generate diverse strings from the target language.
                A number of works formalize breadth in different ways and show that many natural breadth requirements make generation significantly harder, in some cases approaching the difficulty of identification \citep{kalavasis2025limits,charikar2024facets,kalavasis2024characterizations,peale2024,kleinberg2025density}.
                Our results also connect to this direction: our identification algorithms can be converted into generation algorithms achieving these breadth notions, using our private subroutine for sampling uniformly from a language (see \cref{lem:elementGenerationFromSetGeneration}). In a related direction, \citet{peng2026language} showed that if one has access to the computational trace of a machine that accepts the underlying language, then \emph{identification} in the limit (which is perhaps the strongest notion of breadth), is achievable for all collections that are accepted by Turing Machines.
            \end{itemize}

\section{Deferred Proofs}
    \label{apx:proofs}
    
    \subsection{Proof of \Cref{infthm:ubFinite} (Upper Bound on Sample Complexity)}\label{apx:proofs:privateUniformGenerationUB}
        \label{sec:proofof:thm:ubFinite}

    Here, we prove \cref{infthm:ubFinite}, which says that for any collection $\cL$ of $k$ languages with closure dimension $d$, there exists an $\eps$-DP algorithm in the continual release model, such that for any $\beta > 0$, it generates from step $n^*$ onward from $\cL$ for $n^* = d + \tilde{O}\left(\left(\nfrac{k}{\eps}\right)\log\left(\nfrac{1}{\beta}\right)\right)$ with probability at least $1 - \beta$.
    First, we state the formal version of \cref{infthm:ubFinite} and then prove it.
    \begin{theorem}[Sample Complexity Sufficient for Uniform Private Generation]
        \label{thm:ubFinite}
        Let $\cL = \{L_1, \dots, L_k\}$ be a collection of languages with closure dimension $d$. 
        \begin{itemize}[]
            \item \textbf{(Continual Release DP)} There is an $\eps$-DP generation algorithm $\generator$ in the continual release model such that for any $m \in \N$, 
            target language $K\in \cL$,
            and input enumeration,
            the time step $n^\star$ after which $\generator$ generates from $K$ satisfies $\Pr[n^\star \leq m]
            \geq 1 - \exp\left(- \Omega\left(\left(\nfrac{\eps}{k}\right) \cdot \nfrac{(m-d)}{\log^2 (m-d)}\right)\right)$.
            \item \textbf{(DP)} There is an $\eps$-DP generation algorithm $\generator$ that, for any target language $K\in \cL$, given any finite set of $n$ input elements, generates an unseen element from $K$ with probability at least $1 - 5\exp\left(- \nfrac{\eps(n-d)}{(2k)}\right)$.
        \end{itemize}
    \end{theorem}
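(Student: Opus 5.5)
The plan is to prove the one-shot \textbf{(DP)} statement first using the exponential mechanism (\cref{thm:exp-mechanism}) over subcollections, and then obtain the \textbf{(Continual Release DP)} statement by running that mechanism lazily over epochs of doubling length.

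\paragraph{Score function for the one-shot mechanism.} Given the input $X$ of $n$ distinct strings, the range is $R = 2^{[k]}$. I would use the score
\[
u(X,S) \coloneqq A\cdot\card{S} + \card{X\cap \cl(\cL_S)},
\]
with the convention $\cl(\cL_\emptyset)=\cX$. Here $A>0$ is a data-independent weight, to be set to roughly $\nfrac{(n-d)}{(2k)}$. Changing one input string changes $\card{X\cap\cl(\cL_S)}$ by at most $1$, so the sensitivity is $1$. We sample $S$ with probability $\propto\exp(\eps u(X,S)/2)$.

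\paragraph{Accuracy of the one-shot mechanism.} Two failure modes have to be ruled out.
\begin{enumerate}
\item \emph{$K\notin\cL_S$.} Since $X\subseteq K$, we have $X\cap\cl(\cL_{S\cup\{K\}}) = X\cap\cl(\cL_S)$. Hence the injection $S\mapsto S\cup\{K\}$ on subsets missing $K$ raises the score by exactly $A$. Summing over the injection gives $\Pr[K\notin \cL_S]\le e^{-\eps A/2}$.
\item \emph{$\card{\cl(\cL_S)}<\infty$.} In this case the closure dimension gives $\card{\cl(\cL_S)}\le d$, so $u(X,S)\le Ak+d$. Meanwhile $S=\{K\}$ has score $A+n$. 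A union bound over at most $2^k$ such sets gives failure probability at most $2^k\exp(-\eps(n-d-A(k-1))/2)$.
\end{enumerate}
With $A=\nfrac{(n-d)}{(2k)}$ both failure terms are $\exp(-\Omega(\eps(n-d)/k))$. The $2^k$ factor is absorbed once $n-d\gtrsim \nfrac{k}{\eps}$, and below that threshold the claimed bound is vacuous after adjusting constants.

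On the complementary event, $\cl(\cL_S)$ is an infinite subset of $K$. Post-processing as in \cref{lem:elementGenerationFromSetGeneration} (a uniform sample from a data-independent prefix of $\cl(\cL_S)$ whose size is exponentially large in $\eps(n-d)/k$) then yields an unseen element with the stated total failure probability $5\exp(-\eps(n-d)/(2k))$, after tuning constants.

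\paragraph{Continual release.} Partition time into epochs starting at $n_j = d+2^j$. At the start of epoch $j$, rerun the mechanism above on the current prefix with budget $\eps_j = \nfrac{6\eps}{(\pi^2 j^2)}$ and keep the sampled $S_j$ for the whole epoch. At each step $t$, output a uniform element of a data-independent prefix of $\cl(\cL_{S_j})$ of size $t\cdot e^{\eps t/k}$. Privacy follows from simple composition (\cref{prop:simpleComposition}), since $\sum_j\eps_j=\eps$, together with post-processing (\cref{prop:postProcessing}). The same weight trick gives epoch-$j$ failure probability $\exp(-\Omega(\eps 2^j/(k j^2)))$, and the per-step collision probabilities are also summable. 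A union bound over all epochs with $2^j\gtrsim m-d$ and all later steps is dominated by its first term, which gives $\Pr[n^\star\le m]\ge 1-\exp\bigl(-\Omega(\tfrac{\eps}{k}\cdot\tfrac{m-d}{\log^2(m-d)})\bigr)$. The $\log^2$ comes from $j^2\approx\log^2(m-d)$.

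\paragraph{Main obstacle.} I expect the delicate step to be balancing the weight $A$. It must be large enough that the pairing $S\mapsto S\cup\{K\}$ gives a strong bias toward including $K$. It must also be small enough that $Ak$ does not let a large subcollection with finite closure outscore $\{K\}$, whose count term can be as large as $n$. A further requirement is that $A$ depend only on $n,d,k$, so that the sensitivity stays $1$ and privacy is preserved. The choice $A\approx\nfrac{(n-d)}{(2k)}$ is where the $\nfrac{k}{\eps}$ rate comes from, and I would check carefully that the $2^k$ union-bound factor is indeed dominated.
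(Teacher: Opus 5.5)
Your proposal is correct and follows the paper's proof essentially step for step. The shared elements are the score $\card{\cl(\cL_S)\cap x_{1:n}} + f(n)\card{S}$ with sensitivity $1$, the injection $S\mapsto S\cup\{i^\star\}$ to suppress subcollections missing $K$, the closure-dimension bound with a $2^k$ union bound for finite closures, and doubling epochs $n_j=d+2^j$ with budgets $\nfrac{6\eps}{(\pi^2 j^2)}$ for continual release.

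The one thing to fix is the weight. With $A=\nfrac{(n-d)}{(2k)}$ the first failure term is only $e^{-\eps(n-d)/(4k)}$. To get the exact bound $5\exp(-\eps(n-d)/(2k))$, take $A=\frac{1}{k}\bigl(n-d-\frac{2k\log 2}{\eps}\bigr)$ as the paper does. This makes both failure terms exactly $2\exp(-\eps(n-d)/(2k))$, and the bound is vacuous precisely when $A<0$.
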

    \begin{proof}[Proof of \cref{thm:ubFinite}]
    We will first give a generation algorithm that is $\eps$-DP on a finite set of $n$ input elements, and then use it to obtain an $\eps$-DP generation in the continual release model.
    
    Assume that $\cL = \{L_1, \dots, L_k\}$ is a collection of languages with closure dimension $d$. For a subset $S \subseteq [k]$ of indices, let $\cL_S = \{L_i: i \in S\}$ denote the subcollection of languages indexed by $S$. We will use $\cl(\cL_S) = \bigcap_{i \in S} L_i$ to denote the closure of a subcollection of languages.
    
    \paragraph{Upper bound for finite sample.}
    Consider the following exponential mechanism, which assigns score to a subcollection $\cL_S$ given seen examples $x_{1:n}$. Concretely, for any $S \subseteq [k]$, we set
    \begin{align*}
        u(S, x_{1:n}) \coloneqq \card{\cl(\cL_S) \cap x_{1:n}} + f(n)\cdot |S|\,,
    \end{align*}
    where $f(n)$ is some quantity that we will set later. We will sample $\cL_S$ with probability proportional to $\exp\left(\frac{\eps \cdot u(S, x_{1:n})}{2 \Delta u}\right)$, where $\Delta u$ is the global sensitivity of $u$, which in this case is $1$. This exponential mechanism is $\eps$-pure DP.
    
    For a set $S$, we call it good if the index $i^\star$ of the target language $K$ is contained in $S$, \ie{}, $i^\star \in S$, and $\card{\cl(\cL_S)} = \infty$. We call a set $S$ bad if $\card{\cl(\cL_{S \cup \{i^\star\}})} < \infty$. We call a set $S$ conservative if $i^\star \not\in S$ and $\card{\cl(\cL_{S \cup \{i^\star\}})} = \infty$. We note that any $S$ falls into exactly one of the three categories above. Moreover, if $S$ is conservative, then $S \cup \{i^\star\}$ must be good. We will denote 
    \begin{align*}
        s(\text{good}) &= \sum_{\text{good } S} \exp\left(\frac{\eps \cdot u(S, x_{1:n})}{2}\right)\,,\\
        s(\text{bad}) &= \sum_{\text{bad } S} \exp\left(\frac{\eps \cdot u(S, x_{1:n})}{2}\right)\,,\\
        s(\text{conservative}) &= \sum_{\text{conservative } S} \exp\left(\frac{\eps \cdot u(S, x_{1:n})}{2}\right)\,.
    \end{align*}
    First, let us consider the bad sets. If $S$ is bad, then $\card{\cl(\cL_{S \cup \{i^\star\}})} < \infty$ implies that $\card{\cl(\cL_{S \cup \{i^\star\}})} \leq d$ by consideration of the closure dimension. Thus, 
    \begin{align*}
        u(S, x_{1:n}) &= \card{\cl(\cL_S) \cap x_{1:n}} + f(n)\cdot |S|\\
        &=  \card{\cl(\cL_S) \cap K \cap x_{1:n}} + f(n)\cdot |S|\\
        &\leq \card{\cl(\cL_{S \cup \{i^\star\}})} + f(n)\cdot k\\
        &\leq d + k\cdot f(n)\,.
    \end{align*}
    Next, let us consider the good sets. We know that
    \begin{align*}
        \max_{\text{good } S} u(S, x_{1:n}) \geq u(\{i^\star\}, x_{1:n})
        &\geq n + f(n)\,.
    \end{align*}
    Since there are at most $2^k$ bad sets, we have
    \begin{align*}
        s(\text{bad}) &\leq 2^k \cdot \exp\left(\frac{\eps(d + k\cdot f(n))}{2}\right)
        = \exp\left(\frac{\eps(d + k\cdot f(n))}{2} + k \log 2\right)\,.
    \end{align*}
    Since $S \cup \{i^\star\}$ must be good if $S$ is conservative, we have
    \begin{align*}
        s(\text{conservative}) &= \sum_{\text{conservative } S} \exp\left(\frac{\eps(\card{\cl(\cL_S) \cap x_{1:n}} + f(n) \cdot \card{S})}{2}\right)\\
        &= \exp\left(- \frac{\eps \cdot f(n)}{2}\right)\cdot \sum_{\text{conservative } S} \exp\left(\frac{\eps(\card{\cl(\cL_S) \cap K\cap x_{1:n}} + f(n) \cdot \card{S \cup \{i^\star\}})} {2}\right)\\
        &= \exp\left(- \frac{\eps \cdot f(n)}{2}\right)\cdot \sum_{\text{conservative } S} \exp\left(\frac{\eps(\card{\cl(\cL_{S\cup \{i^\star\}}) \cap x_{1:n}} + f(n) \cdot \card{S \cup \{i^\star\}})} {2}\right)\\
        &\leq \exp\left(- \frac{\eps \cdot f(n)}{2}\right)\cdot s(\text{good})\,.
    \end{align*}
    Finally, we have
    \begin{align*}
        s(\text{good}) &\geq \max_{\text{good } S} \exp\left(\frac{\eps \cdot u(S, x_{1:n})}{2}\right)
        \geq \exp\left(\frac{\eps \cdot (n + f(n))}{2}\right)\,.
    \end{align*}
    Therefore, we know that the probability of sampling a good set $S$ using this exponential mechanism is at least
    \begin{align*}
        P(\text{good}) &= \frac{s(\text{good})}{s(\text{bad}) + s(\text{conservative}) + s(\text{good})}\\
        &\geq \frac{1}{\exp\left(\frac{\eps(d + k\cdot f(n))}{2} + k \log 2 - \frac{\eps \cdot (n + f(n))}{2}\right) + \exp\left(- \frac{\eps \cdot f(n)}{2}\right) + 1}\\
        &\geq 1 - \exp\left(\frac{\eps(d + k\cdot f(n))}{2} + k \log 2 - \frac{\eps \cdot (n + f(n))}{2}\right) - \exp\left(- \frac{\eps \cdot f(n)}{2}\right)\,.
    \end{align*}
    Now we set $f(n) \coloneqq \frac{1}{k}\left(n - d - \frac{2k\log 2}{\eps}\right)$, with which we get $P(\text{good}) \geq 1 - 4 \exp\left(- \frac{\eps(n-d)}{2k} \right)$.
    
    In particular, outputting $\cl(\cL_S)$ where $S \subseteq [k]$ is sampled according to the above exponential mechanism is $\eps$-DP at time $n$, which satisfies that w.p. at least $1 - 4 \exp\left(- \frac{\eps(n-d)}{2k} \right)$,
    \[|\cl(\cL_S)| = \infty \text{ and } \cl(\cL_S) \subseteq K \,.\]
    \noindent By \cref{lem:elementGenerationFromSetGeneration}, we may choose $\beta_n = \exp\left(-\frac{\eps(n-d)}{2k}\right)$ to obtain an element-based generator that is $\eps$-DP at time $n$ and outputs an element in $\cl(\cL_S)$ distinct from the $n$ input elements with probability at least $1 - \exp\left(-\frac{\eps(n-d)}{2k}\right)$. Combined with the guarantee for $\cl(\cL_S)$, given $n$ distinct input elements $x_1, \dots, x_n$ from $K$, this $\eps$-DP generator outputs an element $o_n \in K \setminus \{x_1, ..., x_n\}$ with probability at least $1 - 5\exp\left(-\frac{\eps(n-d)}{2k}\right)$.
    
    \paragraph{Upper bound for continual release model.}
    Finally, we convert the above differentially private generator in the finite sample setting into a generator that is differentially private in the continual release model. To do so, for $t = 1, 2, \dots$, we define
    \begin{align*}
        \eps_t = \frac{6}{\pi^2} \cdot \frac{\eps}{t^2}
        \qquadand
        n_t = 2^t+d\,.
    \end{align*}
    At each step $n = n_t$ for some $t \in \N$, we apply the exponential mechanism with privacy parameter $\eps_t$ to sample a set $\cl(\cL_{S_t})$ such that with probability at least $1 - 4 \exp\left(-\frac{\eps_t(n_t-d)}{2k}\right)$, we have
    \begin{align*}
        |\cl(\cL_{S_t})| = \infty \qquadtext{and} \cl(\cL_{S_t}) \subseteq K\,.
    \end{align*}
    By \cref{lem:elementGenerationFromSetGeneration}, we may apply postprocessing to $\cl(\cL_{S_t})$ to output elements in $\cl(\cL(S_t))$ distinct from the input stream for all steps between $n_t$ and $n_{t+1} - 1$. This ensures that with probability at least $1 - \exp\left(-\frac{\eps_t(n_t-d)}{2k}\right)$.
    
    By simple composition \cref{prop:simpleComposition}, the total privacy budget of this algorithm in the continual release model is then at most
    \begin{align*}
        \sum_{t = 1}^\infty \eps_t = \frac{6}{\pi^2} \sum_{t=1}^\infty \frac{\eps}{t^2} \leq \eps\,,
    \end{align*}
    and this confirms that this algorithm is $\eps$-DP in the continual release model. By union bound, we also know that the probability that the algorithm outputs from $K \setminus \{x_1, \dots, x_n\}$ for all $n \geq n_t$ onward is at least
    \begin{align*}
        &\quad 1 - \sum_{t' \geq t} \left(4\exp\left(-\frac{\eps_{t'}(n_{t'}-d)}{2k}\right) + \exp\left(-\frac{\eps_{t'}(n_{t'}-d)}{2k}\right)\right)\\
        &\geq 1 - 5\sum_{t' \geq t} \exp\left(-\frac{6}{\pi^2} \cdot\frac{\eps2^{t'}}{2t'^2 k}\right)\\
        &= 1 - 5\sum_{t' \geq t} \exp\left(-\frac{3}{\pi^2} \cdot\frac{\eps}{ k} \cdot \frac{2^{t'}}{t'^2}\right)\\
        &\geq 1 - \exp\left(-\Omega\left(\frac{\eps ((n_t - d)/\log^2(n_t - d))}{k}\right)\right)\,.
    \end{align*}
    Since for any $m \geq d + 2$, there exists $t \in \N$ such that $n_t - d\leq m - d \leq 2(n_t - d)$, we conclude that for any $m \in \N$, this algorithm generates from $K$ from step $n^\star$ onward for some $n^\star \leq m$ with probability at least
    \begin{align*}
        1 - \exp\left(-\Omega\left(\frac{\eps ((m - d)/\log^2(m - d))}{k}\right)\right)\,.
    \end{align*}
    This finishes the proof.
    
    \end{proof}

    \subsection{Proof of \Cref{infthm:lb} (Lower Bound on Sample Complexity)}\label{apx:proofs:privateUniformGenerationLB}

    Here we prove \cref{infthm:lb}, which shows the necessity of the dependency on $d+\,\nfrac{k}{\eps}$ for the sample complexity proved in \cref{thm:ubFinite}.
    \begin{remark}[Closure Dimension] \label{rem:lbFinite-closure-dimension}
    The language collection constructed in the proof of \cref{infthm:lb} has closure dimension $0$.
    Indeed, the intersection of any sub-collection of $\cL$ with size $\ell$ is infinite if $\ell\leq \floor{\nfrac{k}2}$, or 0 otherwise. Thus, $\cL$ is generatable with a single sample.
    We also note that we may easily incorporate the closure dimension $d$ in our lower bound construction. 
    The easiest way is to append a common set of $d$ elements to all the languages in the constructed collection in \cref{infthm:lb}. In the data sets $x_{1:n}$ and $y_{1:n}$ we construct for the proof, we will always set the first $d$ elements in both data sets to be the $d$ common elements of all the languages. In this way, we may show that we need $n \geq \frac{1}{\eps}(k \log 2 - O(\log k)) + d$, in order for an $\eps$-DP algorithm to generate from $K$ at time $n$ with probability at least $\nfrac{2}{3}$.
\end{remark}
    Due to the above remark, without loss of generality, we can focus on the special case of \cref{infthm:lb} with $d=0$.
    We first state the formal version of \cref{infthm:lb} (in this special case) and then prove it.
    \begin{restatable}[Tightness of Sample-Complexity for Uniform Private Generation]{theorem}{DPsampleLB}
            \label{thm:DP-uniform-generation-barrier-finite-collection}
            \label{thm:lbFinite}
            There exists a collection of $k$ languages $\cL = \{L_1, \dots, L_k\}$ such that 
                for any $\eps$-DP generation algorithm $\generator$ in the continual release model (\cref{def:continualRelease}), 
                if the random time $n^\star$ such that $\generator$ generates from step $n^\star$ onward satisfies
                $\Pr_{n^\star}[n^\star\leq m]\geq \nfrac23$, then $m\geq \frac{1}{\eps}(k \log 2 - O(\log k))$.
                Further, without requiring privacy, there is a generation algorithm $\generator$ that is guaranteed to generate from $\cL$ after step $n^\star=1$.
        \end{restatable}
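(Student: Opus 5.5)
The plan is a packing argument, built around a Sperner-type family as in the technical overview. Set $h \coloneqq \floor{\nfrac{k}{2}}$. Let the universe be partitioned into infinitely many disjoint infinite blocks $B_T$, indexed by subsets $T\subseteq[k]$ with $|T| = h$. Define
\[
  L_i \coloneqq \bigcup_{T\ni i} B_T .
\]
Then the intersection of a subcollection indexed by $I$ equals $\bigcup_{T\supseteq I} B_T$. This set is infinite when $|I|\le h$ and empty when $|I|>h$, which is the closure-dimension-$0$ property noted in \cref{rem:lbFinite-closure-dimension}. The non-private claim follows directly: after one sample $x\in B_T$, output a fresh element of $B_T$. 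Since $B_T\subseteq L_i$ for every $i\in T$, and any consistent target $K$ has index in $T$, this output lies in $K$. To make the "unique correct answer" structure visible, I would instead use the equivalent formulation of languages indexed by $T$: take $K_T$ to be the target, and note that $B_T\subseteq L_i$ exactly when $i\in T$.

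The packing step works with the $N=\binom{k}{h}$ sets $T$ of size $h$. For each such $T$, I pick a stream prefix $y^T_{1:m}$ whose elements come from $\bigcap_{i\in T} L_i$ minus finitely many points, so the stream is consistent with every $L_i$ for $i \in T$. The key design requirement is that a correct output at time $m$ must lie in $B_T$. The natural choice is to put some element of $B_{T'}$ for each relevant $T'$ into the stream, but that blows up the length. A cleaner approach is to make the target adversarial. For each $T$, choose a target $L_{i}$ with $i\in T$ and a stream made only of elements of $B_T$. A generator that succeeds on every target consistent with this stream (all $L_i$, $i\in T$) must output from $\bigcap_{i\in T}L_i=B_T$ with high probability. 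Uniformity over targets gives this: with probability at least $\nfrac23$ the output lies in the target language. Averaging over $i\in T$ alone does not force membership in $B_T$. So I would instead fix the stream $y^T$ (elements of $B_T$) and note that it is simultaneously a valid prefix for each target $L_i$, $i\in T$. I would then argue that $\Pr[w_m\notin L_i]\le \nfrac13$ for each such $i$ gives $\Pr[w_m\in B_T]\ge 1-\nfrac{|T|}{3}$. This bound is too weak, so the plan is to first amplify: modify the success requirement via standard repetition/median, or note that the hypothesis $\Pr[n^\star\le m]\ge\nfrac23$ is independent of the target. Precisely, use the block construction so that the events $\{w_m\in L_i\}_{i\in T}$ all hold on the single event $\{n^\star\le m\}$ for the \emph{same} algorithm run. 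Since the run's output distribution depends only on the stream, not on the target, the event "generates correctly from step $m$" for target $L_i$ reduces to "outputs lie in $L_i$". I expect this to be the main obstacle: making one event witness correctness for all targets consistent with the stream. I will first try to handle it by choosing, for each $T$, a stream that enumerates (a prefix of) $B_T$. Then I argue that any output outside $B_T$ is outside some $L_i$ with $i\in T$, so it is incorrect for that target. Finally, I take the target to be the $L_i$ minimizing the success probability, losing only a factor $h\le k$, which is absorbed in the $O(\log k)$ term.

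Given disjoint "correct regions" $B_T$ with $\Pr[\generator(y^T)_m\in B_T]\ge \nfrac{c}{k}$ (or $\ge\nfrac23$ in the clean case), I apply group privacy (\cref{prop:groupPrivacy}) at the single time step $m$ between any two streams of length $m$. These differ in at most $m$ coordinates. Fix one reference stream $y^{T_0}$. Because the $B_T$ are disjoint, some $T$ has $\Pr[\generator(y^{T_0})_m\in B_T]\le \nfrac1N$. Hence
\[
  \frac{c}{k}\le e^{m\eps}/N ,
\]
which gives $m\ge \frac1\eps(\log\binom{k}{\floor{k/2}} - O(\log k)) = \frac1\eps(k\log 2-O(\log k))$. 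This uses only one-shot DP at time $m$, and the continual release model implies one-shot DP.
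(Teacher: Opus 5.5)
You have the same approach as the paper: the middle-layer blocks $B_T$ with $L_i=\bigcup_{T\ni i}B_T$ (the paper uses residue classes mod $N$ and post-processes by $x\mapsto x \bmod N$), the one-sample non-private generator, a block of mass at most $\nfrac{1}{N}$ under a reference stream, group privacy over $m$ coordinates, and the reduction from continual release to one-shot DP at time $m$. The gap is the step you flag yourself: a lower bound on $\Pr[\generator(y^T)_m\in B_T]$. None of your fixes supplies it.

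\begin{itemize}
\item Amplification is not available. The theorem quantifies over every generator meeting the $\nfrac{2}{3}$ bound, outputs cannot be verified, and repetition costs privacy.
\item The ``single event'' idea fails because $n^\star$ depends on the target. For $i\neq i'$ in $T$, the events $\{n^\star\le m\}$ under $L_i$ and under $L_{i'}$ are different events of the same run. A uniform bound on each says nothing about their intersection.
\item The claim that passing to the worst target costs only a factor $h$, giving $\Pr[\generator(y^T)_m\in B_T]\ge\nfrac{c}{k}$, is false. Take $h\ge 3$ and $a=\lceil 2h/3\rceil<h$. On streams inside $B_T$, output an element of $B_{T'}$, with $T'$ uniform among $h$-sets satisfying $\card{T'\cap T}=a$. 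By symmetry each $i\in T$ lies in $T'$ with probability $a/h\ge\nfrac{2}{3}$. So for every target $L_i$, $i\in T$, the output is a valid unseen string with probability at least $\nfrac{2}{3}$, yet it never lies in $B_T$. This inference uses no privacy, so a non-private example is enough to refute it.
\end{itemize}

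The paper does not derive this step from per-target guarantees either. It asserts that, since $y_{1:n}$ prefixes a valid enumeration of every language in $\cL_{S_j}$, a generator succeeding with probability $\nfrac{2}{3}$ must output in $\cl(\cL_{S_j})$ with probability $\nfrac{2}{3}$. In effect it reads the guarantee as one event: the output is valid for every target consistent with the prefix. Equivalently, the adversary may fix the target among the consistent ones after seeing the output. This is your ``clean case.'' Under that reading your remaining steps match the paper's and give $m\ge\frac{1}{\eps}\log\bigl(\frac{2}{3}\binom{k}{\floor{k/2}}\bigr)$ with no loss. You should drop the $\nfrac{c}{k}$ detour and state this reading explicitly.

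Under the strict per-target reading, the example above shows the paper's sentence also needs justification. What survives is an averaging argument. Let $T'$ be the block of the output on $y^T$. Then $\E\card{T'\cap T}\ge 2h/3$, so $\card{T'\cap T}\ge 7h/12$ with probability at least $\nfrac{1}{5}$. Replace disjointness of the $B_T$ by the overlap count $\card{\{T:\card{T\cap T'}\ge 7h/12\}}\le 2^{-\Omega(k)}N$, which follows from a hypergeometric tail bound. This still gives $m=\Omega(k/\eps)$, but only with a constant strictly below $\log 2$.
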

        The collection witnessing \Cref{thm:lbFinite} is defined in the following way. Let $N = \binom{k}{\lfloor \sfrac{k}{2}\rfloor}$. Let us enumerate the $\lfloor \nfrac{k}{2}\rfloor$-subsets of $[k]$ as $\{S_1, S_2, \dots, S_N\}$. Define the $\cL$ as the collection consisting of
        $            L_i = \{j + Nt \,\vert\, S_j\ni i, t \in \N\} \subseteq \N, \text{ for } i \in [k].
$
        
            We remark that our lower bound in \cref{thm:lbFinite} also applies to the finite sample guarantee. For the same collection of languages $\cL = \{L_1, \dots, L_k\}$, if an $\eps$-DP generator $\cA$ generates correctly at time $n$ with probability at least $\nfrac{2}{3}$ for any $K$ and any enumeration, then $n \geq \frac{1}{\eps}(k \log 2 - O(\log k))$.

\begin{algorithm2e}[htbp]
\caption{Data-Independent Epoch Exponential Mechanism}\label{alg:finite_intersections_em}
\KwData{Stream of distinct elements $x_1,x_2,\dots$; collection $\cL=\{L_i\}_{i\geq 1}$; overlaps $M(k) \coloneqq \max_{1 \leq a < b \leq k} |L_a \cap L_b|$ (with $M(1) \coloneqq 0$); privacy $\eps>0$}
\KwResult{Continual-release hypotheses $\widehat L^t$ for all $t\geq 1$}

\BlankLine
Set privacy split $\eps_s \gets \frac{6\eps}{\pi^2 s^2}$ for $s\geq 1$ \tcp*[r]{$\sum_s \eps_s=\eps$}

\BlankLine
Initialize epoch $s\gets 1$\;
Output $\widehat L^1 \gets L_1$ \tcp*[r]{Initialize first output}

\For{$t\gets 1$ \KwTo $\infty$}{
    Receive $x_t$\;
    Set next release time $t_s \gets 2^s$\;
    \If{$t=t_s$}{
        Set active search space $W_s \gets \max \left( \{1\} \cup \left\{ d \leq s : M(d) \leq \frac{t_s}{2} \right\} \right)$ \tcp*[r]{Data-independent cap}
        \ForEach{$i \in \{1, \dots, W_s\}$}{
            $\mathrm{Err}_{t_s}(i) \gets \sum_{r\leq t_s}\mathds{1}[x_r\notin L_i]$ \tcp*[r]{Error count of language $i$}
            $u_s(i) \gets - \mathrm{Err}_{t_s}(i)$ \tcp*[r]{Utility function $u_s(i) \leq 0$}
        }
        Set sensitivity $\Delta \gets 1$\;
        Set temperature $\lambda_s \gets \eps_s/(2\Delta)$\;
        Sample $I_s \in \{1, \dots, W_s\}$ according to the exponential mechanism:
        \Indp
        $\Pr[I_s=i \mid X_{1:t_s}] \propto \exp(\lambda_s u_s(i))$\;
        \Indm
        \For{$\tau \gets t_s$ \KwTo $t_{s+1}-1$}{
            Output $\widehat L^\tau \gets L_{I_s}$ \tcp*[r]{Repeat output between releases}
        }
        Increment epoch $s \gets s+1$\;
    }
}
\end{algorithm2e}

    \begin{proof}[Proof of \Cref{thm:lbFinite}]
        Consider the collection of languages $\{L_1, \dots, L_k\}$ defined in the following way. Let $N = \binom{k}{\lfloor \sfrac{k}{2}\rfloor}$. Let us enumerate the $\lfloor \nfrac{k}{2}\rfloor$-subsets of $[k]$ as $\{S_1, S_2, \dots, S_N\}$. Define the languages as
        \begin{align*}
            L_i = \{j + Nt \,\vert\, S_j\ni i, t \in \N\} \subseteq \N, \text{ for } i \in [k]\,.
        \end{align*}
        We will also denote $\cL_{S_i} = \{L_i: i\in S_i\}$.
        
        Note that by design, we have
        \begin{align*}
            \cl(\cL_{S_i}) = \bigcap_{j \in S_i} L_j
            = \bigcap_{j \in S_i} \{\ell + Nt \, \vert \, S_\ell \ni j, t \in \N\} 
            = \{\ell + Nt \, \vert \, S_\ell\supseteq S_i, t \in \N\}
            = \{i + Nt \, \vert \, t \in \N\}\,,
        \end{align*}
        where in the last equality we use the fact that $\{S_1, \dots, S_N\}$ is a Sperner family, \ie{}, $S_i \not\subseteq S_j$ for any $i \ne j$. 
    
        \paragraph{Lower bound for finite sample.}
        We will first show a stronger lower bound, that any $\eps$-DP algorithm on a finite set of elements $x_{1:n}$ needs $n \geq \frac{1}{\eps}(k \log 2 - O(\log k))$ in order to generate from the target language with probability at least $\nfrac{2}{3}$ at step $n$. Suppose $\cA: \N^\star \to \N$ is an element-based generator that is $\eps$-DP, and suppose that $\cA$ generates from the target language with probability at least $\nfrac{2}{3}$ at step $n$. Next, we proceed to show a lower bound for $n$.
        
        Consider the following post-processing of $\cA$. Define $f: \N \to \{1, \dots, N\}$ as $f(i) \equiv i \text{ mod } N$. Note that $B = f \circ \cA: \N^\star \to \{1, \dots, N\}$ is again $\eps$-DP by post processing \cref{prop:postProcessing}. Let $x_{1:n} = \{x_1, \dots, x_n\}$ be an arbitrary data set. Let $j \in \{1, \dots, N\}$ be the minimizer of $\Pr(B(x_{1:n}) = j)$. Note that we have $\Pr(B(x_{1:n}) = j) \leq \nfrac{1}{N}$.
    
        On the other hand, let us consider an alternative data set $y_{1:n} = \{y_1, \dots, y_n\}$ with distinct elements such that $y_i \equiv j \text{ mod } N$ for all $i \in [n]$. In other words, we have $y_{1:n} \subseteq \{j + Nt \, \vert \, t \in \N\}$. Since $B$ is $\eps$-DP, by \cref{prop:groupPrivacy}, we have
        \begin{align}
            \Pr(B(y_{1:n}) = j) \leq \exp(n \eps) \cdot \Pr(B(x_{1:n}) = j) \leq \frac{\exp(n \eps)}{N}\,. \label{ineq:dp-lower-bound}
        \end{align}
        Note that since $y_{1:n } \subseteq \{j + Nt \, \vert \, t \in \N\} = \cl(\cL_{S_i})$ is the prefix of some valid enumeration of all languages in $\cL_{S_i}$ simultaneously, for $\cA$ to generate from the target language with probability at least $\nfrac{2}{3}$ on the data set $y_{1:n}$, its output must be in the intersection $\cl(\cL_{S_i})$ with probability at least $\nfrac{2}{3}$. Therefore, with probability at least $\nfrac{2}{3}$, we have
        \begin{align*}
            A(y_{1:n}) &\in \cl(\cL_{S_i}) = \{j + Nt \, \vert \, t \in \N\}
            \qquadand
            B(y_{1:n}) = f(A(y_{1:n})) = j\,.
        \end{align*}
        Combining with \eqref{ineq:dp-lower-bound}, we get
        \begin{align*}
            \frac{2}{3}\leq \Pr(B(y_{1:n}) = j) \leq \exp(n \eps) \cdot \Pr(B(x_{1:n}) = j) \leq \frac{\exp(n \eps)}{N}\,,
        \end{align*}
        and thus
        \begin{align*}
            n &\geq \frac{1}{\eps}\log\left(\frac{2}{3}N\right)
            = \frac{1}{\eps}\log\left(\frac{2}{3}\binom{k}{\lfloor \sfrac{k}{2}\rfloor}\right)
            = \frac{1}{\eps}\left(k \log 2 - O(\log k)\right)\,.
        \end{align*}
        This concludes the proof that for the constructed collection $\cL$, if an $\eps$-DP algorithm $\cA$ on a finite set $x_{1:n}$ of $n$ input elements generates from $K$ with probability at least $\nfrac{2}{3}$, then $n \geq \frac{1}{\eps}\left(k \log 2 - O(\log k)\right)$.
    
        \paragraph{Lower bound for continual release model.}
        We can now easily lift our lower bound for the finite sample guarantee to the continual release model, as the latter is a stronger requirement. Suppose $\generator$ is an $\eps$-DP generation algorithm in the continual release model. If the random time $n^\star$ such that $\generator$ generates from step $n^\star$ onward satisfies $\Pr[n^\star \leq m] \geq \nfrac{2}{3}$, then in particular, $\generator$ needs to generate at step $m$ with probability at least $\nfrac{2}{3}$. Moreover, since $\generator$ is $\eps$-DP in the continual release model, it is also $\eps$-DP on a finite set $x_{1:m}$ of $m$ input elements. By our lower bound for the finite sample guarantee, we have $m \geq \frac{1}{\eps}(k \log 2 - O(\log k))$ as desired.
    \end{proof}
    Next, using \cref{thm:lbFinite}, we may construct a countable collection $\cL$ with closure dimension $0$, such that for any finite $n$, no private algorithm can generate from $\cL$ at time $n$ with probability at least $\nfrac{2}{3}$.
    \begin{corollary}\label{cor:lbInfinite}
        There exists a countable language collection $\cL$ with closure dimension $0$, such that for any $n \in \N$, no $\eps$-DP algorithm can generate from $K$ at time $n$ with probability at least $\nfrac{2}{3}$ for arbitrary $K$ and enumeration of $K$.
    \end{corollary}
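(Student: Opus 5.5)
We build a single countable collection in which, for every $k$, a copy of the size-$k$ hard instance from \cref{thm:lbFinite} sits on its own part of the universe. Then we argue that any fixed $n$ is defeated by taking $k$ large.

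\textbf{Construction.} Partition $\N$ into infinitely many disjoint infinite blocks $B_1, B_2, \dots$, for example $B_k = \{2^{k-1}(2s+1) : s \in \N\}$. For each $k$, fix a bijection $\phi_k\colon \N \to B_k$. Transport the collection $\{L^{(k)}_1,\dots,L^{(k)}_k\}$ from the proof of \cref{thm:lbFinite} into $B_k$ via $\phi_k$, and let $\cL = \bigcup_k \{\phi_k(L^{(k)}_i) : i\in[k]\}$. This collection is countable.

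\textbf{Closure dimension $0$.} Take any subcollection $\cL'$.
\begin{itemize}
\item If $\cL'$ contains languages from two different blocks, its closure is empty, because the blocks are disjoint.
\item Otherwise $\cL'$ lies inside a single block $k$. By \cref{rem:lbFinite-closure-dimension}, its closure is then infinite if $|\cL'| \le \lfloor k/2\rfloor$ and empty otherwise. The same conclusion holds for any subcollection of $\{L^{(k)}_i\}$, since a subcollection of a Sperner-indexed intersection is either contained in some $S_j$ or not.
\end{itemize}
So every closure is either infinite or empty, and the closure dimension is $0$. I would record this carefully, since the remark only states it for the original collection.

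\textbf{Lower bound.} Fix $n$ and an $\eps$-DP algorithm $\cA$ for $\cL$. Pick $k$ large enough that $\frac1\eps(k\log 2 - O(\log k)) > n$. Rerun the packing argument from \cref{thm:lbFinite} inside block $B_k$:
\begin{enumerate}
\item Post-process the output by $f(x) = \phi_k^{-1}(x) \bmod N$ when $x\in B_k$, and send outputs outside $B_k$ to an arbitrary fixed label. By \cref{prop:postProcessing} the result is still $\eps$-DP with output in $[N]$, $N=\binom{k}{\lfloor k/2\rfloor}$.
\item Fix any dataset $x_{1:n}$, say inside $B_k$. Some label $j$ has probability at most $1/N$ under it.
\item Take distinct $y_{1:n} \subseteq \phi_k(\{j+Nt\})$. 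This is a valid prefix for every language in the corresponding subcollection $\cL_{S_j}$ of block $k$.
\item Suppose $\cA$ succeeds with probability $2/3$ for every target and enumeration. Correctness must hold simultaneously for every target $\phi_k(L^{(k)}_i)$ with $i\in S_j$. Since the set of outputs correct for all these targets is the common intersection, which is exactly $\phi_k(\{j+Nt\})$, this forces label $j$ with probability at least $2/3$.
\end{enumerate}

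\textbf{Main obstacle.} The one subtlety is the quantifier in the final step. Success is required per target $K$, not uniformly over all targets at once. So I would phrase the argument as: for a fixed output distribution on $y_{1:n}$, if the probability of landing in the intersection is below $2/3$, then some target $K \supseteq y_{1:n}$ in the subcollection has success probability below $2/3$. This needs $|S_j|$ to be a constant at most $k$. Then success probability $\ge 2/3$ for each single $i\in S_j$ only yields $\Pr[\text{output} \in \cap] \ge 1 - |S_j|/3$, which is too weak.

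I would handle this the way the proof of \cref{thm:lbFinite} implicitly does. An output outside the intersection is wrong for some $i\in S_j$, and the adversary may choose that $i$. To make this rigorous I would either interpret the success requirement as holding for the adversarially chosen target consistent with the prefix, or reduce to the same statement proved in \cref{thm:lbFinite} and invoke it verbatim for the embedded copy. Either way, group privacy (\cref{prop:groupPrivacy}) gives $2/3 \le e^{n\eps}/N$, contradicting the choice of $k$.
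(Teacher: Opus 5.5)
Your construction and reduction are the paper's. The paper forms $\bigsqcup_k\{L\times\{k\}:L\in\cL_k\}$, and your blocks $B_k$ are the same device. It checks closure dimension $0$, and for fixed $n$ it applies \cref{thm:lbFinite} to the $k$-th copy with $k$ large.

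However, the obstacle you raise in step 4 is a genuine gap, and neither of your remedies closes it.

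\textbf{Remedy (a).} This replaces the per-target requirement of the corollary by a stronger one, namely landing in the closure of all targets consistent with the prefix. A lower bound against that stronger requirement says nothing about algorithms that only meet the per-target one.

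\textbf{Remedy (b).} This is exactly what the paper does. But the paper's proof of \cref{thm:lbFinite} takes precisely the step you question (``its output must be in the intersection $\cl(\cL_{S_i})$ with probability at least $\nfrac{2}{3}$'') without justification. That implication fails for general output distributions. Suppose $\lfloor k/2\rfloor\ge 3$. On input $y_{1:n}$ from class $j$, output a fresh element of class $J$, where $S_J=(S_j\setminus\{a\})\cup\{b\}$ for uniformly random $a\in S_j$ and $b\notin S_j$. This output lies in every consistent target $L_i$, $i\in S_j$, with probability $1-1/|S_j|\ge\nfrac{2}{3}$, yet it never lies in $\cl(\cL_{S_j})$. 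So citing the theorem does not discharge the issue.

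\textbf{The repair.} Your own remark about the union bound points the way. The union bound works when each input class is consistent with at most two targets, and the corollary only needs a lower bound that diverges with $k$, not $\Omega(k/\eps)$.

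So in block $k$ (with $k\ge 3$), use pairs instead of $\lfloor k/2\rfloor$-subsets. Split $B_k$ into disjoint infinite classes $C_p$, $p\in\binom{[k]}{2}$, and set $L^{(k)}_i=\bigcup_{p\ni i}C_p$.
\begin{itemize}
\item Then $L^{(k)}_a\cap L^{(k)}_b=C_{\{a,b\}}$ is infinite.
\item Three distinct languages of one block, or two languages from different blocks, have empty intersection.
\item Hence the closure dimension is $0$.
\end{itemize}

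For distinct $y_{1:n}\subseteq C_{\{a,b\}}$, the consistent targets are exactly $L^{(k)}_a$ and $L^{(k)}_b$. Per-target success $\ge\nfrac{2}{3}$ for both gives $\Pr[\cA(y_{1:n})\in C_{\{a,b\}}]\ge\nfrac{1}{3}$ by the union bound on failures. Now rerun your steps 1--3 with labels $p\in\binom{[k]}{2}$ and a dummy label for all other outputs. They yield $\nfrac{1}{3}\le e^{n\eps}/\binom{k}{2}$, i.e.\ $n\ge\frac{1}{\eps}\log\frac{k(k-1)}{6}$. This contradicts any fixed $n$ once $k$ is large.

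Recovering the $\Omega(k/\eps)$ rate of \cref{thm:lbFinite} itself would need a genuinely different packing, e.g.\ over balls $\{J:|S_J\cap S_j|\ge 0.6|S_j|\}$ around centers with small pairwise overlaps. The corollary does not need this.
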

    Thus, the difference in sample complexity between uniform private generation and uniform non-private generation can not only be arbitrarily large, as shown by \cref{thm:lbFinite}, it can also be infinite.
    \begin{proof}[Proof of \Cref{cor:lbInfinite}]
        Let $\cL_k$ be the finite collection of $k$ languages constructed in \cref{thm:DP-uniform-generation-barrier-finite-collection}. Consider the countable collection of languages $\cL$ defined as
        \begin{align*}
            \cL \coloneqq \bigsqcup_{k \in \N} \left\{L \times \{k\}: L \in \cL_k\right\}.
        \end{align*}
        Note that any language in $\cL$ is an infinite set in $\N^2$. Moreover, since each $\cL_k$ has closure dimension $0$, it is clear that $\cL$ also has closure dimension $0$.
    
        Assume for contradiction that there exists $n \in \N$ and an $\eps$-DP algorithm that generates from $K$ at time $n$ with probability at least $\frac{2}{3}$ for arbitrary $K$ and its enumeration. In particular, for any subcollection $\{L \times \{k\}: L \in \cL_k\} \subseteq \cL$, this algorithm must generate from $K$ at time $n$ with probability at least $\frac{2}{3}$ for arbitrary $K \in \{L \times \{k\}: L \in \cL_k\}$ and its enumeration. Note that this subcollection is isomorphic to $\cL_k$, and thus by \cref{thm:DP-uniform-generation-barrier-finite-collection}, we have $n\geq \frac{1}{\eps}\left(k \log 2 - O(\log k)\right)$. Since $n\geq \frac{1}{\eps}\left(k \log 2 - O(\log k)\right)$ must hold for arbitrary $k \in \N$, we arrive at a contradiction and conclude that there is no such $n \in \N$.
    \end{proof}

    \subsection{Proof of \cref{thm:finite_intersection_upper} (Private Online Identification Upper Bound)}\label{apx:proofs:privateOnlineIdentificationUpperBound}
    \begin{theorem}[Upper Bound]
    \label{thm:finite_intersection_upper}
    Let $\cL = \{L_1, L_2, \dots\}$ be a countably infinite collection of infinite languages and $\eps > 0$. \cref{alg:finite_intersections_em} satisfies $\eps$-DP in the continual release model and,  if $\cL$ has finite pairwise intersections,  identifies $\cL$ in the limit in the online setting.
    \end{theorem}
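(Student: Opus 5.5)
Privacy and utility are handled separately.

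\paragraph{Privacy.} The stream is touched only at release times $t_s=2^s$. There the algorithm runs the exponential mechanism over the finite set $\{1,\dots,W_s\}$ with score $u_s(i)=-\mathrm{Err}_{t_s}(i)$. Changing one stream element changes each $\mathrm{Err}_{t_s}(i)$ by at most $1$, so the sensitivity is $\Delta=1$. The cap $W_s$ depends only on $s$ and the collection, not on the data. By \cref{thm:exp-mechanism}, with $\lambda_s=\eps_s/2$ each release is $\eps_s$-DP. The outputs between releases are data-independent post-processing (\cref{prop:postProcessing}). Simple composition (\cref{prop:simpleComposition}) then gives total privacy loss $\sum_s\eps_s=\eps$ in the continual release model.

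\paragraph{Utility.} Fix the target $K=L_{i^\star}$ and an arbitrary duplicate-free enumeration of it.

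First, $W_s\to\infty$ as $s\to\infty$. Each $M(d)$ is finite because pairwise intersections are finite, so for $s\ge d$ with $2^s/2\ge M(d)$ we get $W_s\ge d$. Hence there is a deterministic $s_0$ with $W_s\ge i^\star$ for all $s\ge s_0$.

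Second, the scores are well separated. For every $i\le W_s$ with $i\ne i^\star$, the stream elements lying in $L_i$ are distinct elements of $L_i\cap L_{i^\star}$. There are at most $M(W_s)\le t_s/2$ of them, so $\mathrm{Err}_{t_s}(i)\ge t_s/2$, while $\mathrm{Err}_{t_s}(i^\star)=0$. This is exactly why the cap is defined as $M(d)\le t_s/2$. The standard exponential mechanism calculation then gives
\[
\Pr[I_s\ne i^\star]\le \sum_{i\le W_s,\,i\ne i^\star}\frac{e^{-\lambda_s t_s/2}}{1}\le W_s\,e^{-\eps_s 2^{s}/4}\le s\exp\!\Big(-\tfrac{3\eps\,2^{s}}{2\pi^2 s^2}\Big).
\]

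Third, this bound is summable over $s$, since $2^s/s^2$ grows exponentially and $W_s\le s$. By Borel--Cantelli (\cref{lem:first-Borel--Cantelli}), almost surely $I_s=i^\star$ for all large $s$. The output is constant within each epoch, so it stabilizes on $i^\star$, which is identification in the limit.

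\paragraph{Main obstacle.} The crux is the uniform lower bound on $\mathrm{Err}_{t_s}(i)$ over the whole active window. The cap $W_s$ must grow so that $i^\star$ eventually enters the window, yet stay small enough that every competitor's overlap with $K$ is at most $t_s/2$. Both properties also need $W_s\le s$, so that the union bound does not swamp the $\exp(-\Theta(2^s/s^2))$ decay. I would verify carefully that $W_s$ is nondecreasing and unbounded. This holds because $M$ is monotone in $d$ and each value $M(d)$ is finite.

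Duplicates need one extra remark. If the stream may repeat elements, the same count gives $\mathrm{Err}\ge t_s-c\,M$ when each element is repeated a bounded number $c$ of times. For a duplicate-free stream, as in the algorithm's input, nothing changes.
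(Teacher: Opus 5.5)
Your proof is correct and follows essentially the same route as the paper: a data-independent cap $W_s$ and a sensitivity-$1$ exponential mechanism composed over epochs with $\eps_s\propto 1/s^2$, then the bound $\mathrm{Err}_{t_s}(i)\ge t_s/2$ for every active $i\neq i^\star$ via $M(W_s)\le t_s/2$ and distinctness, giving a summable error of order $s\exp(-\Theta(2^s/s^2))$ and Borel--Cantelli. Only your side remark on duplicates is loose: with $c\ge 2$ repetitions the guarantee $t_s-c\,M(W_s)$ can be vacuous under the cap $M(d)\le t_s/2$, so the cap would have to be tightened to roughly $t_s/(2c)$; this does not affect the theorem, which is stated for duplicate-free streams.
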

    
\begin{proof}[Proof of \Cref{thm:finite_intersection_upper}]
We prove the privacy and correctness guarantees of our algorithm separately.

\paragraph{Privacy.} 
Differential privacy requires the mechanism to be stable against changes in worst-case streams. We analyze the sensitivity of the utility function $u_s(i)$ at epoch $s$. Consider two neighboring infinite streams $X$ and $X'$ that differ in exactly one coordinate (a single replacement). The prefixes $X_{1:t_s}$ and $X'_{1:t_s}$ will differ in at most one element. Therefore, the error count $\mathrm{Err}_{t_s}(i) = \sum_{r=1}^{t_s} \mathds{1}[x_r \notin L_i]$ changes by at most $1$. Thus, the global $\ell_1$-sensitivity is strictly bounded by $\Delta = 1$.

Crucially, the active search space $W_s$ depends only on the public function $M(\cdot)$ and the deterministic epoch length $t_s$. It is entirely independent of the private data stream $X$. Thus, restricting the domain of the exponential mechanism to $W_s$ does not consume any privacy budget.

By the standard guarantee of the exponential mechanism (\Cref{thm:exp-mechanism}), the release of $I_s$ at epoch $s$ satisfies pure $\eps_s$-DP. Because the epochs operate on nested prefixes of the same stream, we apply basic sequential composition over the infinite horizon. The total privacy cost is $\sum_{s=1}^\infty \eps_s = \sum_{s=1}^\infty \frac{6\eps}{\pi^2 s^2} = \eps$. Since the intra-epoch outputs $\widehat L^\tau$ are formed by deterministically repeating the most recently sampled $I_s$, post-processing ensures the entire output transcript satisfies pure $\eps$-CR-DP.

\paragraph{Correctness.} 
Utility is evaluated on valid stream enumerations, which by definition in the online setting contain no duplicate elements. Fix the true target language $K = L_{i^\star}$. We will show that the probability of the exponential mechanism selecting any incorrect index $i \neq i^\star$ is summable over $s$.

Because $M(i^\star)$ is a finite constant and $t_s = 2^s \to \infty$, there exists some epoch $s_1$ such that for all $s \geq s_1$, $M(i^\star) \leq t_s / 2$ and $s \geq i^\star$. Therefore, for all $s \geq s_1$, the target index satisfies the condition for the active set, meaning $i^\star \leq W_s$. 

Because the adversary's stream is a valid enumeration of $L_{i^\star}$, every element $x_r \in L_{i^\star}$. Thus, for all $s$, the true utility of the target is perfectly zero: $u_s(i^\star) = 0$.

Consider any epoch $s \geq s_1$ and any other active candidate $i \leq W_s$ where $i \neq i^\star$. By the definition of the active set $W_s$, we are guaranteed that $M(W_s) \leq t_s / 2$. 

The maximum number of elements the candidate $L_i$ can share with the target $L_{i^\star}$ is $|L_{i^\star} \cap L_i| \leq M(\max(i^\star, i))$. Since both $i^\star \leq W_s$ and $i \leq W_s$, we have $\max(i^\star, i) \leq W_s$. Because $M(\cdot)$ is non-decreasing, $|L_{i^\star} \cap L_i| \leq M(W_s) \leq t_s / 2$.

Since the stream consists of $t_s$ \emph{distinct} elements from $L_{i^\star}$, at most $t_s / 2$ of these elements can also belong to $L_i$. Consequently, $L_i$ must be inconsistent with at least $t_s - t_s/2 = t_s/2$ elements in the stream prefix. 
Therefore, its utility is strictly bounded: $u_s(i) \leq -t_s/2 = -2^{s-1}$.

For any $s \geq s_1$, the probability of selecting an incorrect hypothesis is bounded by comparing the weights of all incorrect hypotheses against the weight of the true target $i^\star$. Let $Z_s = \sum_{j=1}^{W_s} \exp(\lambda_s u_s(j))$ be the normalization factor. Since $u_s(i^\star) = 0$, we have $Z_s \geq \exp(0) = 1$.
\begin{align*}
\Pr[I_s \neq i^\star \mid X_{1:t_s}] &= \sum_{\substack{i=1: i \neq i^\star}}^{W_s} \frac{e^{\lambda_s u_s(i)}}{Z_s} 
\leq \sum_{\substack{i=1: i \neq i^\star}}^{W_s} e^{-\lambda_s 2^{s-1}} 
\leq W_s e^{-\lambda_s 2^{s-1}} 
\leq s e^{-\lambda_s 2^{s-1}}\,.
\end{align*}
In the last step, we used the algorithmic constraint that $W_s \leq s$. Substituting $\lambda_s = \eps_s/(2\Delta) = \frac{3\eps}{\pi^2 s^2}$, the probability of making a mistake at epoch $s$ is bounded by:
\[ \Pr[I_s \neq i^\star \mid X_{1:t_s}] \leq s \exp\left( - \frac{3\eps}{\pi^2 s^2} 2^{s-1} \right). \]
Because the exponential decay inside the argument vastly overpowers the polynomial term $s$, this probability decays super-polynomially fast and is unconditionally summable over $s$. Thus, $\sum_{s=1}^\infty \Pr[I_s \neq i^\star \mid X_{1:t_s}] < \infty$. 
By the Borel--Cantelli lemma, the event $\{I_s \neq i^\star\}$ occurs only finitely many times almost surely. Hence, there exists an epoch $s_0$ such that $I_s = i^\star$ for all $s \geq s_0$. The algorithm makes finitely many mistakes and successfully identifies $L_{i^\star}$ in the limit.
\end{proof}

    \subsection{Proof of \cref{thm:stochastic-identification} (Private Stochastic Identification)}\label{apx:proofs:privateStochasticIdentification}
\begin{algorithm2e}[htbp]
\caption{Private Stochastic Identification}\label{alg:dp-telltale-em}
\KwData{Stream $x_1,x_2,\dots$; collection $\cL=\{L_i\}_{i\geq 1}$; tell-tales $\{T_i\}_{i\geq 1}$; prior $\pi=(\pi_i)_{i\geq 1}$ with $\pi_i>0$ and $\sum_i \pi_i=1$; privacy parameter $\eps>0$}
\KwResult{Continual-release hypotheses $\widehat L^t$ for all $t\geq 1$}

\BlankLine
Set privacy split $\eps_s \gets \frac{6\eps}{\pi^2 s^2}$ for $s\geq 1$
\tcp*[r]{$\sum_s \eps_s=\eps$}

\BlankLine
Initialize counts $c\gets 0$ on $\cX$ \tcp*[r]{$c(w)$ maintains $c_t(w)$ online}
Initialize epoch $s\gets 1$\;

\For{$t\gets 1$ \KwTo $\infty$}{
    Receive $x_t$\;
    Update count $c(x_t)\gets c(x_t)+1$\;
    Set next release time $t_s \gets 2^s$\;
    \If{$t=t_s$}{
        Set thresholds $k_s \gets s^3$\;
        \ForEach{$i\geq 1$}{
            $\mathrm{Err}_{t_s}(i) \gets \sum_{r\leq t_s}\mathds{1}[x_r\notin L_i]$ \tcp*[r]{Error count of language $i$}
            $\mathrm{Def}_{t_s,s}(i) \gets \sum_{w\in T_i}\max\{0,\,k_s-c(w)\}$ \tcp*[r]{Deficit count of language $i$}
            $u_s(i) \gets -\mathrm{Err}_{t_s}(i)-\mathrm{Def}_{t_s,s}(i)$ \tcp*[r]{Utility function $u_s(i)\leq 0$}
            $\pi_s(i) \gets \pi(i) \cdot s^{-2i}$ \tcp*[r]{Update base measure}
        }
        Set sensitivity $\Delta \gets 3$\;
        Set temperature $\lambda_s \gets \eps_s/(2\Delta)$\;
        Sample $I_s$ according to the exponential mechanism with base measure $\pi_s$\;
        \Indp
        $\Pr[I_s=i \mid X_{1:t_s}] \propto \pi_s(i) \exp(\lambda_s u_s(i))$\;
        \Indm
        \For{$\tau \gets t_s$ \KwTo $t_{s+1}-1$}{
            Output $\widehat L_{\tau} \gets L_{I_s}$ \tcp*[r]{Repeat output between releases}
        }
        Increment epoch $s \gets s+1$\;
    }
}
\end{algorithm2e}

\noindent Recall that if $\cL$ does not identify Angluin's condition, then it is \emph{not} identifiable even in the absence of privacy constraints. Hence, the lower bound follows immediately; we focus
on obtaining the upper bound.

First, we establish the privacy guarantees of the algorithm by bounding the sensitivity of the utility function and composing the privacy loss across all epochs.

\begin{lemma}[Privacy]
\label{lem:crdp}
For any $\varepsilon>0$, 
\cref{alg:dp-telltale-em} appropriately parametrized is $\varepsilon$-differentially private in the continual release model.
\end{lemma}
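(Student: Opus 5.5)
The plan is to show that each epoch's release is $\eps_s$-DP with respect to neighboring streams, and then combine the epochs by simple composition (\Cref{prop:simpleComposition}) and post-processing (\Cref{prop:postProcessing}). The algorithm touches the private stream only at the release times $t_s=2^s$, where it samples $I_s$. Every output $\widehat L^\tau$ for $t_s\le\tau<t_{s+1}$ is a data-independent function of $I_s$, namely the deterministic repetition of $L_{I_s}$. Hence the whole transcript is a post-processing of the sequence $(I_1,I_2,\dots)$. If each $I_s$, viewed as a mechanism on the prefix $X_{1:t_s}$, is $\eps_s$-DP, composition gives total loss $\sum_s \eps_s=\frac{6\eps}{\pi^2}\sum_s s^{-2}=\eps$.

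To pass from finite to infinite composition, fix a horizon $T$. Only the finitely many epochs with $t_s\le T$ affect outputs up to time $T$, so finite composition gives the bound $e^{\eps}$ for every event depending on the first $T$ outputs. Cylinder events generate the product $\sigma$-algebra, and both sides are probability measures, so the inequality extends to all measurable events by a monotone class / $\pi$-$\lambda$ argument.

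For a single epoch, I would use the exponential mechanism with a base measure. The sampling rule is $\Pr[I_s=i]\propto \pi_s(i)\exp(\lambda_s u_s(i))$. Here $\pi_s$ depends only on $s$ and the public prior, not on the data, and the sum over $i$ converges because $u_s\le 0$ and $\sum_i\pi_s(i)\le 1$. The standard argument then goes through verbatim: for neighboring streams the numerator changes by at most a factor $e^{\lambda_s\Delta}$, and so does the normalizer. The ratio is therefore at most $e^{2\lambda_s\Delta}=e^{\eps_s}$, since $\lambda_s=\eps_s/(2\Delta)$. What remains is to check that the sensitivity of $u_s(i)$ is at most $\Delta=3$, uniformly in $i$.

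The sensitivity bound is the main step. Suppose the streams differ at one position $r\le t_s$, with $x_r$ replaced by $x'_r$. The error term $\mathrm{Err}_{t_s}(i)$ is a sum of indicators, one of which changes, so it moves by at most $1$. The deficit term $\sum_{w\in T_i}\max\{0,k_s-c(w)\}$ involves counts that change only at $w=x_r$, where $c$ drops by one, and at $w=x'_r$, where $c$ rises by one. Each map $c\mapsto\max\{0,k_s-c\}$ is $1$-Lipschitz, so the deficit changes by at most $2$. In total $|u_s(i)-u'_s(i)|\le 3$. This bound has to be uniform over the countably many $i$ and independent of $|T_i|$, which holds because at most two coordinates of $c$ change. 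I expect the only real subtlety to be this infinite-support, base-measure version of the exponential mechanism, whose normalizer ratio must be bounded termwise; \Cref{thm:exp-mechanism} is stated without a base measure, so I would briefly re-derive it.
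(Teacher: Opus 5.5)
Your proposal is correct and follows essentially the same route as the paper. Both arguments bound the sensitivity by $\Delta=3$ (at most $1$ from the error count, at most $2$ from the $1$-Lipschitz deficit term). Both treat each epoch as an $\eps_s$-DP exponential mechanism with the data-independent base measure $\pi_s$. Both then use composition and post-processing at a fixed horizon $T$, with $\sum_s\eps_s=\eps$.

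Two steps in your version are more rigorous than the paper's. You explicitly re-derive the exponential mechanism with a base measure, and you extend the bound from finite horizons to infinite-horizon events, both of which the paper omits. For the infinite-horizon step, use the monotone class theorem on the algebra of cylinder events rather than a $\pi$-$\lambda$ argument. The class of events satisfying a one-sided $e^{\eps}$ bound is closed under monotone limits, but it is not closed under complements, so it is not a $\lambda$-system.
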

\vspace{-7mm}
\begin{proof}
We analyze the privacy guarantee in three steps: bounding the global sensitivity of the utility function, establishing the privacy of each individual epoch, and composing the privacy loss across the infinite stream.

\paragraph{Step 1: Global sensitivity of the utility function.}
Consider two neighboring stream prefixes $X_{1:t_s}$ and $X'_{1:t_s}$ that differ in exactly one element (representing a single replacement). We analyze the maximum effect this replacement can have on the utility $u_s(i) = -\mathrm{Err}_{t_s}(i) - \mathrm{Def}_{t_s,s}(i)$. 

Replacing one element changes the error count $\mathrm{Err}_{t_s}(i) = \sum_{r\leq t_s} \mathds{1}[x_r \notin L_i]$ by at most $1$. For the deficit term $\mathrm{Def}_{t_s,s}(i) = \sum_{w\in T_i}\max\{0,\,k_s-c(w)\}$, replacing an element decreases the frequency count of one symbol by $1$ and increases the count of another by $1$. Because the function $c \mapsto \max\{0, k_s - c\}$ is $1$-Lipschitz, the deficit sum changes by at most $1+1=2$. By the triangle inequality, the global sensitivity of $u_s(i)$ is bounded by $\Delta = 1 + 2 = 3$.

\paragraph{Step 2: Epoch-level privacy.}
At the end of each epoch $s$, the algorithm selects an index $I_s$ via the exponential mechanism, sampling proportional to $\pi_s(i) \exp(\lambda_s u_s(i))$. The time-dependent base measure $\pi_s(i) \coloneqq \pi_i s^{-2i}$ depends only on the public prior $\pi$ and the deterministic epoch index $s$; it is entirely independent of the private data stream. Therefore, modifying the base measure dynamically does not consume any privacy budget. By setting the temperature parameter to $\lambda_s = \eps_s / (2\Delta)$, the standard guarantee of the exponential mechanism (\Cref{thm:exp-mechanism}) ensures $\eps_s$-DP.

\paragraph{Step 3: Continual release via composition.}
Fix an arbitrary time horizon $T \in \N$. The continuous transcript of outputs up to time $T$, denoted $(\widehat L^1, \dots, \widehat L^T)$, is a deterministic post-processing of the finite sequence of epoch indices $(I_1, \dots, I_m)$, where \mbox{$m = \max\{s : t_s \leq T\}$}. 

Because the algorithm processes nested prefixes of the same underlying data stream, we apply the basic composition theorem for differential privacy (\Cref{prop:simpleComposition}). The joint release of the indices $(I_1, \dots, I_m)$ satisfies $(\sum_{s=1}^m \eps_s)$-DP. The algorithm's privacy budget is explicitly split such that $\eps_s = \frac{6\eps}{\pi^2 s^2}$. Thus, the total privacy loss over all epochs is strictly bounded by the convergent infinite series $\sum_{s=1}^\infty \eps_s = \eps$. 

Since the sequence of indices $(I_1, \dots, I_m)$ is $\eps$-DP, and the step-by-step hypotheses $\widehat L_\tau$ for \mbox{$\tau \in [t_s, t_{s+1}-1]$} are formed by deterministically repeating these indices ($\widehat L_\tau = L_{I_s}$), the post-processing property of differential privacy (\Cref{prop:postProcessing}) ensures that the entire output transcript satisfies $\eps$-DP.
\end{proof}
Having established the privacy guarantees, we now
shift to discussing the correctness of our approach.
Fix the target index $i^\star$ and distribution $D$ with $\supp(D)=L_{i^\star}$.

\begin{lemma}[Correctness]\label{lem:stochastic-online-algo-correctness}
    For any collection of 
    languages $\cL$
    that satisfies Angluin's condition,
    \cref{alg:dp-telltale-em} identifies $\cL$ in the limit
    from stochastic examples.
\end{lemma}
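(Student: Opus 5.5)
The plan is to show that $\sum_{s\geq 1}\Pr[I_s\neq i^\star]<\infty$, where the probability is over both the i.i.d.\ stream from $D$ and the mechanism's randomness. The first Borel--Cantelli lemma (\cref{lem:first-Borel--Cantelli}) then gives $I_s=i^\star$ for all but finitely many epochs almost surely. Since outputs are constant within epochs, this means the algorithm stabilizes on $i^\star$. For each $s$ I would introduce a ``good'' event $G_s$ about the sample $X_{1:t_s}$ with $\sum_s\Pr[G_s^c]<\infty$, and then bound $\Pr[I_s\neq i^\star\mid X_{1:t_s}]$ on $G_s$ by a summable quantity. The central estimate is that on $G_s$,
\[
\Pr[I_s\neq i^\star\mid X_{1:t_s}]\le \sum_{i\neq i^\star}\frac{\pi_s(i)}{\pi_s(i^\star)}\exp\bigl(\lambda_s(u_s(i)-u_s(i^\star))\bigr).
\]

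\textbf{Step 1: the target's score.} We have $\mathrm{Err}_{t_s}(i^\star)=0$ deterministically. For the deficit, write $p_{\min}=\min_{w\in T_{i^\star}}D(w)>0$; this is positive because $T_{i^\star}\subseteq L_{i^\star}=\supp(D)$ and $T_{i^\star}$ is finite. Since $k_s=s^3\ll p_{\min}2^s/2$ for large $s$, a Chernoff bound gives
\[
\Pr\bigl[\exists w\in T_{i^\star}: c(w)<k_s\bigr]\le |T_{i^\star}|\,e^{-\Omega(p_{\min}2^s)},
\]
which is summable. So $G_s$ includes $u_s(i^\star)=0$. The prior ratio $\pi_s(i)/\pi_s(i^\star)=(\pi_i/\pi_{i^\star})\,s^{-2(i-i^\star)}$ is harmless for $i<i^\star$ (at most a constant times $s^{2i^\star}$) and is tiny for $i>i^\star$.

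\textbf{Step 2: the competitors, split into three classes.} Fix $i\neq i^\star$.

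(a) Suppose $L_{i^\star}\not\subseteq L_i$. Then $q_i:=D(L_{i^\star}\setminus L_i)>0$, and w.h.p.\ $\mathrm{Err}_{t_s}(i)\ge q_i2^s/2$. The weight is then at most $C s^{2i^\star}\exp(-\lambda_s q_i2^{s}/2)$, and this is summable because $\lambda_s=\Theta(\eps/s^2)$.

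(b) Suppose $L_i\supsetneq L_{i^\star}$. Then $\mathrm{Err}=0$, but Angluin's condition (applied to $T_i$) tells us $T_i\not\subseteq L_{i^\star}$, since otherwise $L_{i^\star}$ would be a proper subset of $L_i$ containing $T_i$. Hence some $w\in T_i$ has $c(w)=0$, so $\mathrm{Def}\ge k_s=s^3$ and the weight carries the factor $e^{-\lambda_s s^3}=e^{-\Theta(\eps s)}$.

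(c) Suppose $L_i=L_{i^\star}$ with $i\neq i^\star$. I would handle duplicates by assuming distinct languages, or by taking $i^\star$ to be the smallest index.

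The difficulty is that (a) and (b) each give only per-$i$ bounds, while there are infinitely many $i$. I would split the indices at a threshold. For $i\le i_0$, with $i_0$ a fixed large constant, a finite union bound over classes (a) and (b) works. The class (b) bound $\sum_{i\le i_0} C s^{2i^\star}e^{-\Theta(\eps s)}$ is summable even though it decays only geometrically in $s$. (If the constants needed it, I would instead pick a larger $k_s$; the decay has to beat the $s^{2i^\star}$ prior factor.)

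\textbf{Step 3: the tail $i>i_0$.} Here I ignore the utilities, using $u_s(i)\le0=u_s(i^\star)$, and rely only on the prior:
\[
\sum_{i>i_0}\frac{\pi_i s^{-2i}}{\pi_{i^\star}s^{-2i^\star}}\le \frac{s^{-2(i_0+1-i^\star)}}{\pi_{i^\star}},
\]
which is summable once $i_0\ge i^\star+1$. This is exactly why the base measure decays like $s^{-2i}$.

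\textbf{Main obstacle.} I expect the main obstacle to be the uniformity and rate bookkeeping in Step 2. I need both of the following:
\begin{itemize}
\item the Chernoff events for finitely many $(i,w)$, which involve constants $q_i$ and $p_{\min}$ that depend on $D$, to fail with summable probability;
\item the deficit penalty $\lambda_s k_s$ to dominate the polynomial prior ratio $s^{2i^\star}$.
\end{itemize}
The rate comparison $\lambda_s s^3=\Theta(\eps s)$ against $2i^\star\log s$ does work, so the plan goes through with all constants depending on $D$, $i^\star$ and $\eps$.
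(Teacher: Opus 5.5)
Your proposal is correct and follows essentially the same route as the paper. Both arguments use a summable Chernoff bound for the event that the target's deficit is zero, compare each competitor's weight against the target's weight $\pi_s(i^\star)$, and use the prior factor $s^{-2i}$ to kill the infinite tail. For the finitely many remaining indices, both use an error-count penalty when $L_{i^\star}\not\subseteq L_i$, and a tell-tale deficit of at least $s^3$ (giving a factor $e^{-\eps s/\pi^2}$) when $L_i\supsetneq L_{i^\star}$, and both conclude with Borel--Cantelli.

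The differences are cosmetic. You use high-probability Chernoff events for the error counts where the paper uses the binomial MGF. You split at $i_0\ge i^\star+1$, whereas splitting at $i^\star$ already gives a summable $s^{-2}/\pi_{i^\star}$ tail. You also explicitly handle duplicate indices by taking $i^\star$ minimal, a point the paper leaves implicit.
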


\vspace{-7mm}
\begin{proof}
Fix the target index $i^\star$ and the target distribution $D$ with $\supp(D)=L_{i^\star}$. We will show that the algorithm makes finitely many mistakes almost surely.

\paragraph{Step 1: The target language eventually has zero deficit.}
Let the tell-tale of $L_{i^\star}$ be $T_{i^\star}=\{w_1,\dots,w_m\}$ and let $p_j \coloneqq D(w_j)>0$. For each $j$, the stream count $c_{t_s}(w_j)$ follows a binomial distribution $\mathrm{Bin}(2^s, p_j)$. Since the deficit threshold is $k_s = s^3$, for all sufficiently large $s$ we have $k_s = s^3 \leq (p_j/2) 2^s$. By a Chernoff bound,
\[
\Pr[c_{t_s}(w_j) < k_s] \leq \Pr[c_{t_s}(w_j) < (p_j/2)\,2^s] \leq \exp(-p_j 2^s/8).
\]
Let $A_s \coloneqq \{\mathrm{Def}_{t_s,s}(i^\star)=0\}$ be the event that the target language has zero deficit at epoch $s$. Taking a union bound over the finite tell-tale $T_{i^\star}$, we have $\Pr[A_s^c] \leq \sum_{j=1}^m \exp(-p_j 2^s/8)$. Because this decays exponentially in $2^s$, the sum of probabilities is finite: $\sum_{s=1}^\infty \Pr[A_s^c] < \infty$.

\paragraph{Step 2: Pointwise bounds on the exponential mechanism.}
Conditioned on the stream $X_{1:t_s}$, the exponential mechanism samples $I_s$ with probability proportional to $\pi_i s^{-2i} \exp(\lambda_s u_s(i))$. Let $Z_s$ be the normalization factor. On the event $A_s$, the target language has perfect utility $u_s(i^\star) = 0$ (since $\supp(D)=L_{i^\star}$ implies $\mathrm{Err}_{t_s}(i^\star) = 0$ always). Therefore, $Z_s \geq \pi_{i^\star} s^{-2i^\star} \exp(0) = \pi_{i^\star} s^{-2i^\star}$. 

For any incorrect language $i \neq i^\star$, we can bound the conditional probability of selecting it on the event $A_s$ as follows:
\begin{align}
\Pr[I_s=i \mid X_{1:t_s}] \mathds{1}_{A_s} 
&\leq \frac{\pi_i s^{-2i} \exp(\lambda_s u_s(i))}{\pi_{i^\star} s^{-2i^\star}} \cdot \mathds{1}_{A_s} 
= \frac{\pi_i}{\pi_{i^\star}} s^{2(i^\star-i)} \exp(\lambda_s u_s(i)) \cdot \mathds{1}_{A_s}\,. \label{eq:pointwise_bound}
\end{align}
To show that the algorithm eventually stops making mistakes, we will show that the sum over all epochs and all incorrect languages of the expected probability of making a mistake is finite. We split the sum over $i \neq i^\star$ into the infinite tail ($i > i^\star$) and the finite prefix ($i < i^\star$).

\paragraph{Step 3: Bounding the infinite tail ($i > i^\star$).}
Since utilities are always non-positive, $\exp(\lambda_s u_s(i)) \leq 1$. For any $i > i^\star$, we have $i^\star - i \leq -1$, which implies $s^{2(i^\star-i)} \leq s^{-2}$. Summing \eqref{eq:pointwise_bound} over all $i > i^\star$ yields:
\[
\sum_{i > i^\star} \Pr[I_s=i \mid X_{1:t_s}] \mathds{1}_{A_s} \leq \sum_{i > i^\star} \frac{\pi_i}{\pi_{i^\star}} s^{-2} \leq \frac{s^{-2}}{\pi_{i^\star}} \sum_{i=1}^\infty \pi_i = \frac{s^{-2}}{\pi_{i^\star}}\,.
\]
Taking the expectation over the stream $X$, the sum over all epochs $s$ of this tail bound is $\sum_{s=1}^\infty \frac{s^{-2}}{\pi_{i^\star}} < \infty$.

\paragraph{Step 4: Bounding the finite prefix ($i < i^\star$).}
Since there are only finitely many such indices, we can analyze each fixed $i < i^\star$ individually. Taking the expectation of \eqref{eq:pointwise_bound} over the stream gives:
\begin{equation}
\E\Big[ \Pr[I_s=i \mid X_{1:t_s}] \mathds{1}_{A_s} \Big] \leq \frac{\pi_i}{\pi_{i^\star}} s^{2(i^\star - i)} ~~ \E\!\Big[\exp(\lambda_s u_s(i)) \mathds{1}_{A_s}\Big]\,. \label{eq:prefix_bound}
\end{equation}
We bound the inner expectation by considering two subcases for $L_i$:
\begin{itemize}[leftmargin=*]
    \item \emph{Case 4a: $L_i \not\supseteq L_{i^\star}$.} Then $p_i \coloneqq \Pr_{x\sim D}[x\notin L_i]>0$, and the error is distributed as $\mathrm{Err}_{t_s}(i)\sim \mathrm{Bin}(2^s,p_i)$. Since $\mathrm{Def}_{t_s,s}(i)\geq 0$, we have $u_s(i) \leq -\mathrm{Err}_{t_s}(i)$. Bounding via the moment generating function of the Binomial distribution: 
    \[
    \E[\exp(\lambda_s u_s(i))] \leq \E[\exp(-\lambda_s \mathrm{Err}_{t_s}(i))] = (1-p_i+p_ie^{-\lambda_s})^{2^s} \leq \exp(-p_i(1-e^{-\lambda_s})2^s).
    \]
    Recall $\lambda_s=\eps_s / (2\Delta) = \Theta(1/s^2)$. For all sufficiently large $s$, $1-e^{-\lambda_s}\geq \lambda_s/2$, meaning the expectation is bounded by $\exp(-\Omega(2^s/s^2))$.
    
    \item \emph{Case 4b: $L_i \supsetneq L_{i^\star}$.} By Angluin's condition (\cref{def:telltales}), it must be that $T_i \not\subseteq L_{i^\star}$ (otherwise $T_i \subseteq L_{i^\star} \subsetneq L_i$ implies $L_{i^\star} = L_i$, a contradiction). Thus, there exists some $w_i\in T_i\setminus L_{i^\star}$. Because $\supp(D) = L_{i^\star}$, $w_i$ is never drawn in the stream, meaning $c_{t_s}(w_i)=0$ deterministically. This forces the deficit to be at least $\mathrm{Def}_{t_s,s}(i)\geq k_s=s^3$, yielding $u_s(i) \leq -s^3$. Thus,
    \[
    \E[\exp(\lambda_s u_s(i))] \leq \exp(-\lambda_s s^3) = \exp(-\Omega(s))\,.
    \]
\end{itemize}
In both subcases, the expected exponential utility penalty $\E[\exp(\lambda_s u_s(i))]$ decays at least exponentially fast in $s$. Because the leading time-penalty inversion $s^{2(i^\star - i)}$ in \eqref{eq:prefix_bound} grows only polynomially, the exponential decay strictly dominates. Therefore, for each fixed $i < i^\star$, the expected probability is $O(e^{-\Omega(s)})$, which is summable over $s$. Since there are only finitely many $i < i^\star$, their finite sum satisfies $\sum_{s=1}^\infty \sum_{i < i^\star} \E\Big[ \Pr[I_s=i \mid X_{1:t_s}] \mathds{1}_{A_s} \Big] < \infty$.

\paragraph{Step 5: Conclusion.}
By the law of total probability, we can combine the bounds from the complement event, the infinite tail, and the finite prefix to obtain the unconditional probability of an error:
\begin{align*}
\sum_{s=1}^\infty \Pr[I_s \neq i^\star] 
&= \sum_{s=1}^\infty \Pr[I_s \neq i^\star, A_s^c] + \sum_{s=1}^\infty \Pr[I_s \neq i^\star, A_s] \\
&\leq \sum_{s=1}^\infty \Pr[A_s^c] + \sum_{s=1}^\infty \E\Bigg[ \sum_{i \neq i^\star} \Pr[I_s = i \mid X_{1:t_s}] \mathds{1}_{A_s} \Bigg] \\
&= \sum_{s=1}^\infty \Pr[A_s^c] + \sum_{s=1}^\infty \E\Bigg[ \sum_{i > i^\star} \Pr[I_s = i \mid X_{1:t_s}] \mathds{1}_{A_s} \Bigg] + \sum_{i < i^\star} \sum_{s=1}^\infty \E\Big[ \Pr[I_s = i \mid X_{1:t_s}] \mathds{1}_{A_s} \Big] \\
&< \infty.
\end{align*}
By the Borel--Cantelli lemma, the event $\{I_s \neq i^\star\}$ occurs only finitely many times almost surely. Hence, with probability $1$, there exists some epoch $s_0$ such that for all $s \geq s_0$, $I_s = i^\star$. This means $\widehat L^t = K$ for all $t \geq t_{s_0}$, concluding the proof of identification in the limit.
\end{proof}
We now have all ingredients
to prove \cref{thm:stochastic-identification}.

\begin{proof}[Proof of \cref{thm:stochastic-identification}]
    First, notice that if $\cL$
    does not identify Angluin's condition, it is not identifiable
    in the limit in the online setting \citep{angluin1980inductive}. Moreover, identification in the limit in the online setting
    is equivalent to identification
    in the limit in the stochastic setting \cite{angluin1988identifying}. Thus, it suffices to show the
    other direction.

    \cref{lem:crdp} shows that for any $\eps > 0,$
    \cref{alg:dp-telltale-em} satisfies $\eps$-DP in the continual release model. Then,
    \cref{lem:stochastic-online-algo-correctness} shows that
    \cref{alg:dp-telltale-em} identifies in the limit from
    stochastic examples.
\end{proof}

\end{document}